\newcommand*{\thead}[1]{%
\multicolumn{1}{c}{\begin{tabular}{@{}c@{}}#1\end{tabular}}}
\newcommand{\addition}[1]{{#1}}
\title{Online Label Shift: Optimal Dynamic Regret meets Practical Algorithms}
\author{Dheeraj Baby\thanks{Equal Contribution}\\
UC Santa Barbara\\
\small{{\href{mailto:dheeraj@ucsb.edu}{dheeraj}@ucsb.edu}}
\And
Saurabh Garg$^*$ \\
Carnegie Mellon University \\
\small{{\href{mailto:sgarg2@andrew.cmu.edu}{sgarg2}@andrew.cmu.edu}}
\And 
Thomson Yen$^*$\\
Carnegie Mellon University \\
\small{{\href{mailto:tzuchiny@andrew.cmu.edu}{tzuchiny}@andrew.cmu.edu}}
\And 
Sivaraman Balakrishnan\\
Carnegie Mellon University\\
\small{{\href{mailto:sbalakri@andrew.cmu.edu}{sbalakri}@andrew.cmu.edu}}
\And 
Zachary C. Lipton\\
Carnegie Mellon University\\
\small{{\href{mailto:zlipton@andrew.cmu.edu}{zlipton}@andrew.cmu.edu}}
\And 
Yu-Xiang Wang\\
UC Santa Barbara\\
\small{{\href{mailto:yuxiangw@cs.ucsb.edu}{yuxiangw}@cs.ucsb.edu}}
}
\date{}
\begin{document}

\maketitle

\begin{abstract}

This paper focuses on supervised and unsupervised online label shift,
where the class marginals $Q(y)$ varies
but the class-conditionals $Q(x|y)$ remain invariant. 
In the unsupervised setting, our goal is to adapt a learner, trained on some offline labeled data, to changing label distributions given unlabeled online data. In the supervised setting, we must both learn a classifier and adapt to the dynamically evolving class marginals given only labeled online data. We develop novel algorithms that reduce the adaptation problem to online regression and guarantee optimal dynamic regret without any prior knowledge of the extent of drift in the label distribution. Our solution is based on bootstrapping the estimates of \emph{online regression oracles} that track the drifting proportions. Experiments across numerous simulated and real-world online label shift scenarios demonstrate the superior performance of our proposed approaches, often achieving 1-3\% improvement in accuracy while being sample and computationally efficient. Code is publicly available at this \href{https://github.com/acmi-lab/OnlineLabelShift}{url}.

\end{abstract}

\vspace{-10pt}
\section{Introduction}
\label{sec:intro}
Supervised machine learning algorithms are typically developed  
assuming independent and identically distributed (iid) data. 
However, real-world environments evolve dynamically~\citep{quinonero2008dataset, 
torralba2011unbiased, wilds2021, garg2022ATC}. 
Absent further assumptions on the nature of the shift,
such problems are intractable. 
One line of research has explored causal structures 
such as covariate shift \citep{shimodaira2000improving}, 
label shift \citep{saerens2002adjusting, lipton2018blackbox}, 
and missingness shift \citep{zhou2022domain}, 
for which the optimal target predictor is identified
from labeled source and unlabeled target data. 
Let's denote the feature-label pair of an example by $(x,y)$.
Label shift addresses the setting
where the label marginal distribution $Q(y)$ may change 
but the conditional distribution $Q(x|y)$ remains fixed. 
Most prior work addresses the batch setting for unsupervised adaptation,
where a single shift occurs between
a source and target population~\citep{saerens2002adjusting, lipton2018blackbox, 
alexandari2019adapting, azizzadenesheli2019regularized, alexandari2019adapting, garg2020labelshift}. 
However, in the real world, shifts are more likely 
to occur continually and unpredictably, 
with data arriving in an \emph{online} fashion.
A nascent line of research tackles online distribution shift,
typically in settings where labeled data is available in real time
\citep{Baby2021OptimalDR}, seeking to minimize the \emph{dynamic regret}.

\begin{figure}
\centering
\includegraphics[width=0.7\textwidth]{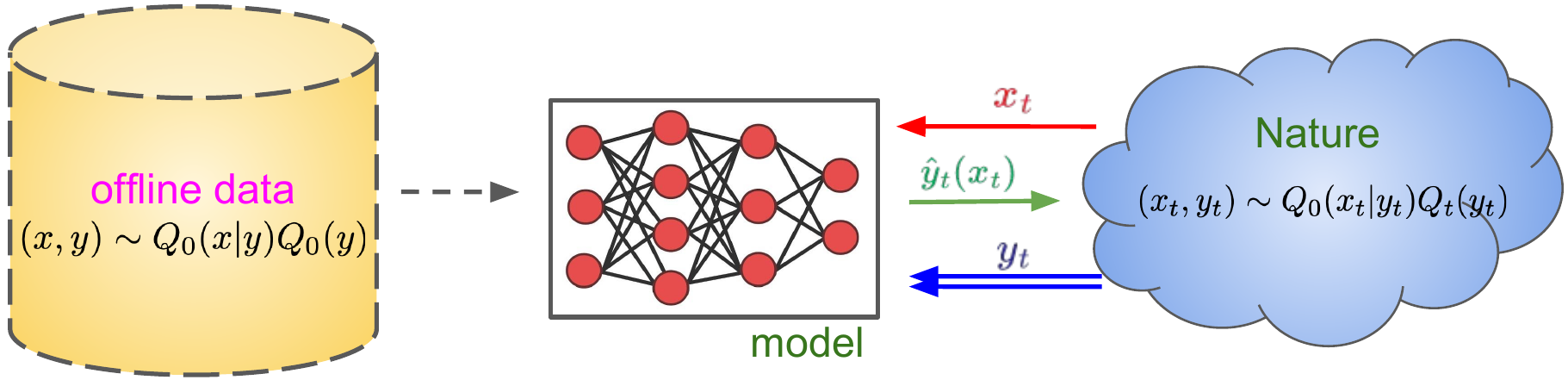}
\caption{\label{fig:proto} \emph{UOLS and SOLS setup.} 
Dashed (double) arrows are exclusive to UOLS (SOLS) settings. Other objects are common to both setups. Central question: how to adapt the model in real-time to drifting label marginals based on all the available data so far?}
\vspace{-5pt}
\end{figure}

Researchers have only begun to explore the role 
that structures like label shift
might play in such online settings. 
Initial attempts to learn under unsupervised online label shifts were made by 
\citet{wu2021label} and \citet{bai2022label}, 
both of which rely on reductions to Online Convex Optimization (OCO) \citep{hazan2016introduction,orabona2019book}. This line of research aims in updating a classification model based on online data so that the overall regret is controlled. However, \citet{wu2021label} only controls for \emph{static regret} against a fixed classifier (or model) in hindsight and 
makes the assumption of the convexity (of losses),
which is often violated in practice. In the face of online label shift, where the class marginals can vary across rounds, a more fitting notion is to control the \emph{dynamic regret} against a sequence of models in hindsight. Motivated by this observation, 
\citet{bai2022label} controls for the dynamic regret. However their approach based on updating model parameters (of the classifier) with online gradient descent and relying on convex losses 
limits the applicability of their methods (e.g. algorithms 
in \citet{bai2022label} can not be employed with decision tree classifiers).

In this paper, we study the problem of learning classifiers 
under Online Label Shift (OLS) in both \emph{supervised} 
and \emph{unsupervised} settings (Fig.\ref{fig:proto}). %
In both these settings, 
the distribution shifts are an online process
that respects the label shift assumption.
Our primary goal is to develop
algorithms that side-step convexity 
assumptions and at the same time \emph{optimally} adapt 
to the non-stationarity in the label drift. 
In the Unsupervised Online Label Shift (UOLS) problem, 
the learner is provided with 
a pool of labeled offline data sampled iid from the  
distribution $Q_0(x,y)$ to train an initial model $f_0$. Afterwards, at every online round $t$, few \emph{unlabeled} data points 
sampled from $Q_t(x)$ are presented. The goal is to adapt $f_0$ to
the non-stationary target distributions $Q_t(x,y)$ so that we can accurately classify the unlabelled data.
By contrast, in Supervised Online Label Shift (SOLS), 
our goal is to learn classifiers from \emph{only}  
the (labeled) samples that arrive in an online fashion
from $Q_t(x,y)$ at each time step,
while simultaneously adapting to the non-stationarity 
induced due to changing label proportions.
While SOLS is similar to online learning under non-stationarity, 
UOLS differs from classical online learning 
as the test label is not seen during online adaptation.  Below are the list of contributions of this paper.

\begin{itemize}[itemsep=2pt,topsep=0pt,  leftmargin=2em]
    \item \textbf{Unsupervised adaptation.} For the UOLS problem, we provide a reduction to online regression (see Defn. \ref{def:regres}), and develop algorithms for adapting the initial classifier $f_0$ in a computationally efficient way leading to \emph{minimax optimal} dynamic regret. Our approach achieves the best-of-both worlds of \citet{wu2021label, bai2022label} 
    by controlling  the dynamic regret while allowing us to use expressive black-box models for classification (\secref{sec:unsup}).

    \item \textbf{Supervised adaptation.} We develop algorithms for SOLS problem that lead to \emph{minimax optimal} dynamic regret without assuming convexity of losses (\secref{sec:sup}). Our theoretically optimal solution is based on weighted Empirical Risk Minimization (wERM) with weights tracked by online regression. Motivated by our theory, we also propose a \emph{simple continual learning} baseline which achieves 
    empirical performance competitive to the wERM from scratch at each time step across several semi-synthetic SOLS problems while being $15\times$ more efficient in computation cost. 

    \item \textbf{Low switching regressors.} We propose a black-box reduction method to convert an optimal online regression algorithm into another algorithm that switches decisions \emph{sparingly} while \emph{maintaining minimax optimality}.  This method is relevant for 
    online change point detection. We demonstrate its application in developing SOLS algorithms to train models only when significant distribution drift is detected, while maintaining statistical optimality (\appref{app:low-switch}   and Algorithm \ref{alg:suptrain-lazy}).
    
    \item \textbf{Extensive empirical study.} We corroborate our theoretical findings with experiments across numerous simulated and real-world OLS scenarios spanning vision and language datasets (\secref{sec:exp}). Our proposed algorithms often improve over the best alternatives in terms of both 
    final accuracy and label marginal estimation. This advantage is particularly prominent with limited initial holdout data (in the UOLS problem) highlighting the \emph{sample efficiency} of our approach.

\end{itemize}

\textbf{Notes on technical novelties.} Even-though online regression is a well studied technique, to the best of our knowledge, it is not used before to address the problem of online label shift. It is precisely the usage of regression which lead to tractable adaptation algorithms while side-stepping convexity assumptions thereby allowing us to use very flexible models for classification. This is in stark contrast to OCO based reductions in \cite{wu2021label} and \cite{bai2022label}. We propose new theoretical frameworks and identify the right set of assumptions for materializing the reduction to online regression. It was not evident initially that this link would lead to \emph{minimax optimal} dynamic regret rates as well as \emph{consistent} empirical improvement over prior works. Proof of the lower bounds requires adapting the ideas from non-stationary stochastic optimization \cite{besbes2015non} in a non-trivial manner. Further, none of the proposed methods require the prior knowledge of the extend of distribution drift.

\vspace{-5pt}
\section{Problem Setup} \label{sec:prob}
\vspace{-2pt}
Let $\calX \subseteq  \R^d$ be the input space and 
$\calY = [K]:= \{1,2,\ldots, K\}$ be the output space. 
Let $Q$ be a distribution over $\calX \times\calY$ and let $q(\cdot)$ denotes the corresponding label marginal. 
$\Delta_K$ is the $K$-dimensional simplex. 
For a vector $v \in \R^K$, $v[i]$ is its $i^{th}$ coordinate.  We assume that we have a hypothesis class $\calH$. 
For a function $f \in \calH :\cX \rightarrow \Delta_K$, we also use $f(i|x)$ to indicate $f(x)[i]$. 
With $\ell(f(x), y)$, we denote the loss of making a prediction with the classifier $f$ on $(x,y)$. 
$L$ denotes the expected loss, i.e.,  $L = \Expt{(x,y)\sim Q}{\ell(f(x), y)}$. 
$\tilde O(\cdot)$ hides dependencies in absolute constants
and poly-logarithmic factors of horizon and failure probabilities.

In this work, we study online learning under distribution shift, 
where the distribution $Q_t(x,y)$ may continuously change with time. 
Throughout the paper, we focus on the \emph{label shift} assumption
where the distribution over label proportions $q_t(y)$ can change arbitrarily 
but the distribution of the covariate conditioned on a label value (i.e. $Q_t(x|y)$) is 
assumed to be invariant across all time steps.  
We refer to this setting as Online Label Shift (OLS).
Here, we consider settings of unsupervised and supervised 
OLS settings captured in Frameworks \ref{fig:proto-unsup} 
and \ref{fig:proto-sup} respectively. 
In both settings, at round $t$ a sample 
$(x_t,y_t)$ is drawn from a distribution with 
density $Q_t(x_t,y_t)$. 
In the UOLS setting, the label is not revealed to the learner. 
However, we assume access to offline labeled data sampled iid 
from $Q_0$ which we use to train 
an initial classifier $f_0$. The goal is to adapt the initial classifier 
$f_0$ to drifting label distributions. 
In contrast, for the SOLS setting, the label is 
revealed to the learner after making a prediction 
and the goal is to learn a classifier $f_t \in \cH$ for each time step. 

Next, we formally define the 
concept of online regression  which will be central to our discussions. 
Simply put, an online regression algorithm tracks a ground truth sequence from noisy observations.

\begin{definition}[online regression] \label{def:regres}
Fix any $T > 0$. The following interaction scheme is defined to be the online regression protocol.
\begin{itemize}[topsep=0pt]
  \setlength{\itemsep}{1pt}
  \setlength{\parskip}{0pt}
  \setlength{\parsep}{0pt}
    \item At round $t \in [T]$, an algorithm predicts $\hat \theta_t \in \R^K$.
    
    \item A noisy version of ground truth $z_t = \theta_t + \epsilon_t$ is revealed where $\theta_t, \epsilon_t \in \R^K$, and $\| \epsilon_t\|_2 , \|\theta_t\|_2 \le B$. Further the noise $\epsilon_t$ are independent across time with $E[\epsilon_t] = 0$ and $\Var(\epsilon_t[i]) \le \sigma^2 \: \forall i \in [K]$.
\end{itemize}

An online regression algorithm aims to control $\sum_{t=1}^T \|\hat \theta_t - \theta_t \|_2^2$.
Moreover, the regression algorithm is defined to be adaptively minimax optimal if with probability at least $1-\delta$, $\sum_{t=1}^n \|\hat \theta_t - \theta_t \|_2^2 = \tilde O(T^{1/3}V_T^{2/3})$ without knowing $V_T$ ahead of time. Here $V_T := \sum_{t=2}^T \| \theta_t - \theta_{t-1} \|_1 $ is termed as the Total Variation (TV) of the sequence $\theta_{1:T}$.
\end{definition}

\vspace{-5pt}
\section{Unsupervised Online Label Shift} \label{sec:unsup}
\vspace{-2pt}
In this section, we develop a 
framework for handling the UOLS problem. We summarize the setup in Framework~\ref{fig:proto-unsup}.
Since in practice, we may need to work with classifiers such as deep neural networks
or decision trees, we do not impose convexity 
assumptions on the (population) loss of the classifier 
as a function of the model parameters. 
Despite the absence of such simplifying assumptions, 
we provide performance guarantees for our label shift 
adaption techniques so that they are certified to be fail-safe.  

Under the label shift assumption, we have $Q_t(y|x)$ as a re-weighted version of $Q_0(y|x)$:  
\begin{align}
Q_t(y|x)
= \frac{Q_t(y)}{Q_t(x)} Q_t(x|y)
= \frac{Q_t(y)}{Q_t(x)} Q_0(x|y)
= \frac{Q_t(y) Q_0(x)}{Q_t(x) Q_0(y)} Q_0(y|x)
\propto \frac{Q_t(y)}{Q_0(y)} Q_0(y|x), \ \  \label{eq:reweight}
\end{align}
where the second equality is due to the label shift assumption. 
Hence, a reasonable strategy is to re-weight the initial classifier $f_0$ 
with label proportions (estimate) at the current step,  
since we only have to correct the label distribution shift. This re-weighting technique is widely used for offline label shift correction~\citep{lipton2018blackbox, azizzadenesheli2019regularized, alexandari2019adapting} and for learning under label imbalance~\citep{huang2016learning, wang2017learning, cui2019class}. 

\begin{figure}[!t]
  \centering
  \begin{minipage}{0.5\linewidth}
  \begin{framework}[H]
      \textbf{Input}:  Initial classifier $f_0 : \calX \rightarrow \Delta_K$ trained on offline labeled dataset $\{ (x_i,y_i)\}_{i=1}^N$ sampled iid from $Q_0$;
      \begin{algorithmic}[1]
        \STATE $f_1 = f_0$
        \FOR{each round $t \in [T]$}
                    
            \STATE Nature samples $x_t \in \calX$ and $y_t \in \calY$, with $(x_t,y_t) \sim Q_t$; Only $x_t$ is revealed to the learner.

            \STATE Learner predicts a label $i \sim f_t(x_t) \in \Delta_K$.

            \STATE $f_{t+1} = \cA(f_0,x_{1:t})$, where $\cA$ is strategy to adapt the classifier based on past data.
        \ENDFOR
    \end{algorithmic}
    \caption{Unsupervised Online Label Shift (UOLS) protocol}\label{fig:proto-unsup} 
\end{framework}
\end{minipage}
  \hfill
\begin{minipage}{0.48\linewidth}
\begin{algorithm}[H]
\caption{\texttt{RegressAndReweight} to handle UOLS} \label{alg:unsup}
\textbf{Input}: i) Online regression oracle ALG; ii) Initial classifier $f_0$; iii) The confusion matrix $C$; iv) The label marginal $q_0 \in \cD$ of the training distribution; 
\begin{algorithmic}[1] 
\STATE At round $t$, get the classifier covariate $x_t$.

\STATE Let $\wh q_t = \Pi_{\cD} \left({\text{ALG}(s_{1:t-1})} \right)$, where $\Pi_{\cD}(x) = \argmin_{y \in \cD} \| y - x \|_2$.

\STATE Sample a label $i$ with probability $\propto \frac{\wh q_t(i)}{q_0(i)} f_0(i|x_t)$.

\STATE Let $s_t = C^{-1} f_0(x_t)$.

\STATE Update the online regression oracle with the estimate $s_t$.

\end{algorithmic}
\end{algorithm}
\end{minipage}
\end{figure}

Our starting point in developing a framework
is inspired by \citet{wu2021label, bai2022label} . 
For self-containedness, we briefly recap their arguments next. We refer interested readers to their papers for more details.
\citet{wu2021label} considers a hypothesis class of re-weighted initial classifier $f_0$. The loss of a hypothesis is parameterised by the re-weighting vector. They use tools from OCO to optimise the loss and converge to a best fixed classifier. However as noted in \citet{wu2021label}, the losses are not convex with respect to the re-weight vector in practice. Hence usage of OCO techniques is not fully satisfactory in their problem formulation.

In a complementary direction, \citet{bai2022label} abandons the idea of re-weighting. 
Instead, they update the parameters of a model at each round using online gradient descent and a loss function whose expected value 
is assumed to be convex with respect to model parameters. They provide dynamic regret 
guarantees against a sequence of changing model parameters in hindsight, 
and connects it to the variation of the true label marginals. 
More precisely, they provide algorithms with $\sum_{t=1}^T L_t(w_t) - L_t(w_t^*)$ 
to be well controlled where $w_t^*$ is the best model parameter 
to be used at round $t$ and $L_t$ is a (population level) loss function. 
However, there are some scopes for improvement in this direction  as well. For example, the convexity assumption can be easily violated when working with interpretable models based on decision trees, or if we want to retrain few final layers of a deep classifier based on new data. Further as noted in the experiments (\secref{sec:exp}), their methods based on 
retraining the classifier require more data than re-weighting based methods. 
Our experiments also indicate that re-weighting
can be computationally cheaper than re-training without sacrificing the classifier accuracy.

Thus, on the one hand, the work of \citet{wu2021label} allows us to use the power of expressive initial classifiers 
while only controlling the static regret against a fixed hypothesis. 
On the other hand, the work of \citet{bai2022label} allows controlling the dynamic regret while limiting the flexibility of deployed models. 
We next provide our framework for handling label shifts that achieves the best of both worlds by controlling the dynamic regret while allowing the use of expressive \emph{blackbox} models. %

In summary, we estimate the sequence of online label marginals 
and leverage the idea of re-weighting an initial classifier as in \citet{wu2021label}. 
In particular, given an estimate $\wh q_t(y)$ of the true label marginal 
at round $t$, %
we compute the output of the re-weighted classifier $f_t$ as $\nicefrac{\frac{\wh q_t(y)}{q_0(y)} f_0(y|x)}{Z}$ where $Z = \sum_y \frac{\wh q_t(y)}{q_0(y)} f_0(y|x)$.  
However, to get around the issue of non-convexity,  we separate out the process of estimating the re-weighting vectors via a reduction to online regression which is a well-defined and convex problem with computationally efficient off-the-shelf algorithms readily available.
Second, and more importantly, \citet{wu2021label} competes with the best \emph{fixed} re-weighted hypothesis. However, in the problem setting of label shift, the true label marginals are in fact changing. Hence, we control the \emph{dynamic regret} against a sequence of re-weighted hypotheses in hindsight. All proofs for the next sub-section are deferred to \appref{app:unsup}.

\vspace{-5pt}
\subsection{Proposed algorithm and performance guarantees}
We start by presenting our assumptions. This is followed by the main algorithm for UOLS and its performance guarantees. Similar to the treatment in \citet{bai2022label}, we assume the following.

\begin{assumptionp}{1} \label{as:pop} Assume access to the true label marginals $q_{0} \in \Delta_K$ of the offline training data and the true confusion matrix $C \in \R^{K \times K}$ with $C_{ij} = E_{x \sim Q_0(\cdot|y=j), } f_0(i|x)$. Further the minimum singular value $\sigma_{min}(C) = \Omega(1)$ is bounded away from zero.    
\end{assumptionp}

As noted in prior work~\citep{lipton2018blackbox, garg2020labelshift}, the invertibility of the confusion matrix holds whenever the classifier $f_0$ has good accuracy and the true label marginal $q_0$ assigns a non-zero probability to each label. Though we assume perfect knowledge of the label marginals of the training data and the associated confusion matrix, this restriction can be easily relaxed to their empirical counterparts computable from the training data. 
The finite sample error between the empirical and population quantities can be bounded by $O(1/\sqrt{N})$ where $N$ is the number of initial training data samples. 
To this end, we operate in the regime where the time horizon 
obeys $T = O( \sqrt{N})$. %
However, similar to \citet{bai2022label}, we make this assumption 
mainly to simplify presentation without trivializing 
any aspect of the OLS problem.

Next, we present our assumptions on the loss function.
Let $p \in \Delta_K$. 
Consider a classifier that predicts a label 
$\wh y(x)$, by sampling $\wh y(x)$ according to 
the distribution that assigns a weight $\frac{p(i)}{q_{0}(i)} f_0( i|x)$ to the label $i$. 
Define $L_t(p)$ to be any non-negative loss that ascertains the quality of the marginal $p$. 
For example, $L_t(p) = E[\ell(\hat y(x),y)]$ where the expectation is taken wrt the randomness in the draw $(x,y) \sim Q_t$ 
and in sampling $\hat y(x)$. Here $\ell$ is any classification loss (e.g. 0-1, cross-entropy). 

\begin{assumptionp}{2}[Lipschitzness of loss functions] \label{as:lip} Let $\cD$ be a compact and convex domain. Assume that $L_t(p)$ is $G$ Lipschitz with $p \in \cD \subseteq \Delta_K$, i.e, $L_t(p_1) - L_t(p_2) \le G \| p_1 - p_2 \|_2$ for any $p_1,p_2 \in \cD$. The constant $G$ need not be known ahead of time.
\end{assumptionp}

We show in Lemmas \ref{lem:cross} and \ref{lem:bin} that the above assumption is satisfied under mild regularity conditions. Furthermore, the prior works such as \citet{wu2021label} and \citet{bai2022label} also require that losses are Lipschitz with a \emph{known} Lipschitz constant apriori to set the step sizes for their OGD based methods. 

The main goal here is to design appropriate re-weighting estimates such that the \emph{dynamic regret}:
\begin{align}
    R_{\text{dynamic}}(T) = \sum_{t=1}^T L_t(\hat q_{t})-L_t(q_{t})
    \le \sum_{t=1}^T G \|\hat q_{t}-q_t \|_2 \label{eq:ubg}
\end{align}
is controlled where $\hat q_t \in \Delta_K$ is the estimate of the true label marginal $q_t$.
Thus we have reduced the problem of handling OLS to the problem of online estimation of the true label marginals.

Under label shift, we can get an unbiased estimate of the true marginals at any round via the techniques in \citet{lipton2018blackbox, azizzadenesheli2019regularized, alexandari2019adapting}. More precisely, $s_t = C^{-1} f_0(x_t)$ has the property that $E[s_t] = q_t$ (see Lemma \ref{lem:unbias}). Further, the variance of the estimate $s_t$ is bounded by $1/\sigma_{min}^2(C)$. Unfortunately, these unbiased estimates can not be directly used to track the moving marginals $q_t$. This is because the total squared error $\sum_{t=1}^T E[\|s_t - q_t\|_2^2]$ grows linearly in $T$ as the sum of the variance of the point-wise estimates accumulates unfavorably over time.

To get around these issues, one can use online regression algorithms such as FLH \citep{hazan2007adaptive} with online averaging base learners or the Aligator algorithm \citep{baby2021TVDenoise}. These algorithms use  ensemble methods to (roughly) output running averages of $s_t$ where the variation in the \emph{true} label marginals is small enough.
The averaging within intervals where the true marginals change slowly helps to reduce the overall variance while injecting only a small bias. We use such \emph{online regression oracles} to 
track the moving marginals and re-calibrate the initial classifier. 
Overall, Algorithm \ref{alg:unsup} summarizes our method which has the following performance guarantee.

\begin{restatable}{theorem}{mainunsup} \label{thm:main-unsup}
Suppose we run Algorithm \ref{alg:unsup} with the online regression oracle ALG as FLH-FTL (\appref{app:exp}) or Aligator \citep{baby2021TVDenoise}. Then under Assumptions \ref{as:pop} and \ref{as:lip}, we have
\begin{align}
    E[R_{\text{dynamic}}(T)]
    &= \tilde O \left( \left. \frac{K^{1/6} T^{2/3} V_T^{1/3}}{\sigma^{2/3}_{min}(C)} + \frac{\sqrt{KT}}{\sigma_{min}(C)} \right. \right), 
\end{align}
where $V_T := \sum_{t=2}^T \|q_t - q_{t-1} \|_1$ and the expectation is taken with respect to randomness in the revealed co-variates. 
Further, this result is attained without prior knowledge of $V_T$.
\end{restatable}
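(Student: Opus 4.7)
The plan is to chain three reductions: (i) from loss-based dynamic regret to $\ell_2$ tracking error via Lipschitzness, (ii) from $\ell_2$ to $\ell_2^2$ tracking via Cauchy--Schwarz, and (iii) from $\ell_2^2$ tracking to the black-box adaptive minimax guarantee of FLH-FTL / Aligator (Definition~\ref{def:regres}). Step (i) is already inequality~(\ref{eq:ubg}). Since $q_t$ lies in the compact convex set $\mathcal{D}$ and $\Pi_{\mathcal{D}}$ is non-expansive in $\ell_2$, the projection in line~2 of Algorithm~\ref{alg:unsup} can only shrink the tracking error relative to the raw oracle output, so I analyse the unprojected estimate throughout. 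Step (ii) is
\begin{align*}
\sum_{t=1}^T \|\hat q_t - q_t\|_2 \,\le\, \sqrt{T \cdot \sum_{t=1}^T \|\hat q_t - q_t\|_2^2},
\end{align*}
reducing the problem to controlling the squared cumulative tracking error.

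Next, I would verify that the signal $s_t = C^{-1} f_0(x_t)$ fed to the oracle fits the template of Definition~\ref{def:regres} with ground truth $\theta_t = q_t$. Under the label-shift assumption, Lemma~\ref{lem:unbias} gives $E[s_t \mid x_{1:t-1}] = q_t$, and independence of $\epsilon_t := s_t - q_t$ across $t$ follows from the independent draws of $x_t$. Because $f_0(x_t) \in \Delta_K$ and $\sigma_{\min}(C) = \Omega(1)$ by Assumption~\ref{as:pop}, we have $\|s_t\|_2 \le 1/\sigma_{\min}(C)$, which yields the boundedness parameter $B = O(1/\sigma_{\min}(C))$ and per-coordinate noise variance $\sigma^2 = O(1/\sigma_{\min}^2(C))$.

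Plugging these parameters into the adaptive minimax guarantee of FLH-FTL or Aligator (applied on the $K$-dimensional signal, either directly or coordinate-wise and then summed over $i \in [K]$) yields, without prior knowledge of $V_T$,
\begin{align*}
E\!\left[\sum_{t=1}^T \|\hat q_t - q_t\|_2^2\right] = \tilde O\!\left(\frac{K^{1/3} T^{1/3} V_T^{2/3}}{\sigma_{\min}^{4/3}(C)} + \frac{K}{\sigma_{\min}^2(C)}\right).
\end{align*}
Combining with the Cauchy--Schwarz step, pushing the expectation inside the square root via Jensen, using $\sqrt{a+b} \le \sqrt{a} + \sqrt{b}$, and multiplying by the Lipschitz constant $G$ from Assumption~\ref{as:lip} delivers exactly the claimed $\tilde O(K^{1/6} T^{2/3} V_T^{1/3}/\sigma_{\min}^{2/3}(C) + \sqrt{KT}/\sigma_{\min}(C))$ rate. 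The main obstacle is this last displayed inequality: Definition~\ref{def:regres} advertises the $\tilde O(T^{1/3} V_T^{2/3})$ rate without explicit $K$, $\sigma$, $B$ dependence, so to surface the $K$ and $\sigma_{\min}(C)$ factors required by the theorem I would have to open up the FLH-FTL / Aligator analyses and trace how these parameters propagate through the pilot variance term, the hedging meta-step, and the base-learner bias, rather than treating the oracle as a fully closed black box. The remaining pieces are essentially bookkeeping on top of the three-step reduction above.
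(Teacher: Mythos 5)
Your proposal is correct and follows essentially the same route as the paper's proof: the Lipschitz reduction of Eq.~\eqref{eq:ubg}, non-expansivity of the projection onto $\cD$, Jensen plus Cauchy--Schwarz to pass from $\ell_2$ to cumulative squared error, and the adaptive minimax oracle guarantee (the paper's Lemma~\ref{lem:unbias} and Proposition~\ref{prop:oracle}) to supply the $\tilde O(K^{1/3}T^{1/3}V_T^{2/3}/\sigma_{min}^{4/3}(C)+K)$ bound with the stated $K$ and $\sigma_{min}(C)$ dependence. The final "obstacle" you flag is not a gap relative to the paper, which likewise imports that oracle bound as a cited black-box result rather than re-deriving it.
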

\begin{remark} \label{rmk:versatile}
We emphasize that any valid online regression oracle ALG can be plugged into Algorithm \ref{alg:unsup}. 
This implies that one can even use transformer-based time series models to track the moving marginals $q_t$. Further, we have the flexibility of choosing the initial classifier to be any \emph{black-box} model 
that outputs a distribution over the labels. 
\end{remark}
\begin{remark}
Unlike prior works such as \citep{wu2021label,bai2022label}, we do not need a pre-specified bound on the gradient of the losses. Consequently Eq.\eqref{eq:ubg} holds for the smallest value of the Lipschitzness coefficient $G$, leading to tight regret bounds. Further, the projection step in Line 2 of Algorithm \ref{alg:unsup} is done only to safeguard our theory against pathological scenarios with unbounded Lipschitz constant for losses. In our experiments, we do not perform such projections.
\end{remark}

We next show that the performance guarantee in Theorem \ref{thm:main-unsup} is optimal (modulo factors of $\log T$) in a minimax sense.

\begin{restatable}{theorem}{unsuplb} \label{thm:unsuplb}
Let $V_T \le 64 T$. There exists a loss function, a domain $\cD$ (in Assumption \ref{as:lip}), and a choice of adversarial strategy for generating the data such that for any algorithm, we have 
    $\sum_{t=1}^T E([L_t(\hat q_t)] - L_t(q_t))
    = \Omega \left ( \max\{T^{2/3}V_T^{1/3}, \sqrt{T} \} \right)\,,$
where $\hat q_t \in \cD$ is the weight estimated by the algorithm and $q_t \in \cD$ is the label marginal at round $t$ chosen by the adversary. Here the expectation is taken with respect to the randomness in the algorithm and the adversary.
\end{restatable}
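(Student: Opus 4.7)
The plan is to reduce the UOLS lower bound to a standard minimax estimation lower bound for a drifting Bernoulli-type parameter, and then apply a Besbes et al.--style nonstationary stochastic optimization template. First I would construct a minimal two-class instance: pick $Q_0(x\mid y=1)$ and $Q_0(x\mid y=2)$ to be two clearly separated distributions (e.g.\ point masses, or Bernoullis far apart) so that Assumption~\ref{as:pop} holds with confusion matrix $C=I$ and so that observing $x_t$ yields an unbiased, bounded-variance estimate $s_t$ of $q_t$. Use the $G$-Lipschitz loss $L_t(p) = G\|p - q_t\|_2$, which satisfies Assumption~\ref{as:lip} on any convex $\cD \subseteq \Delta_2$ and attains $L_t(q_t) = 0$; the dynamic regret then equals $G\sum_t E\|\hat q_t - q_t\|_2$, so it suffices to lower bound the cumulative $\ell_2$ estimation error of any online estimator of the drifting $q_t$ from past observations.

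Next I would partition $[T]$ into $M = T/\Delta$ blocks of length $\Delta$ and, within each block, let nature hold $q_t$ fixed at one of two hypotheses $q_+, q_- \in \Delta_2$ with $\|q_+ - q_-\|_1 = \epsilon$, chosen independently and uniformly per block. Within any single block, the product measures induced by the two hypotheses over the $\Delta$ observations have total KL divergence $O(\Delta\epsilon^2)$; provided $\Delta\epsilon^2 \le c_0$ for a small absolute constant, Le Cam's two-point method yields $E\|\hat q_t - q_t\|_2 \ge c_1\epsilon$ at \emph{every} round $t$ inside the block, uniformly over algorithms. Summing over blocks gives regret $\Omega(G\epsilon T)$. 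The total variation is deterministically at most $\epsilon(M-1) \le \epsilon T/\Delta$ (and tight if the adversary alternates adjacent blocks). Optimizing $\epsilon,\Delta$ under $\Delta\epsilon^2 = \Theta(1)$ and $\epsilon T/\Delta = V_T$ gives $\Delta = \Theta((T/V_T)^{2/3})$, $\epsilon = \Theta((V_T/T)^{1/3})$, and regret $\Omega(T^{2/3}V_T^{1/3})$; the hypothesis $V_T \le 64T$ is exactly what is needed to ensure $\Delta \ge 1$ and $M \ge 1$. For the $\Omega(\sqrt{T})$ floor, I would take a single block ($M=1$) with $\epsilon = c/\sqrt{T}$, so $\Delta\epsilon^2 = c^2 \le c_0$ and the same two-point argument gives $\Omega(1/\sqrt{T})$ error per round, totaling $\Omega(\sqrt{T})$. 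The final bound is the maximum of the two constructions.

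The main obstacle is upgrading the standard one-shot Le Cam inequality into a per-round lower bound that holds throughout an entire block: one must show that at every round $t$ inside the block, the posterior over which of $q_+,q_-$ was chosen is still close to uniform because the accumulated KL up to round $t$ is at most $\Delta\epsilon^2 \le c_0$, so Le Cam applies uniformly in $t$. A secondary subtlety is making the mapping from the UOLS observation $x_t$ to a noisy estimate of $q_t$ rigorous: the careful choice of $Q_0(x\mid y)$ makes $x_{1:t}$ a sufficient statistic that is information-theoretically equivalent to direct noisy observations of $q_t$, so the standard nonstationary stochastic optimization calculations transfer verbatim, but this equivalence must be spelled out to justify invoking the Besbes et al.\ template in the UOLS observation model. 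Minor technical cleanup includes concentration of $V_T$ around its expectation (Hoeffding over block boundaries) to convert the randomized construction into a deterministic TV budget, and verifying that the adversary's sequence lies in the chosen $\cD$.
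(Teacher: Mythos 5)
Your proposal is correct and follows essentially the same route as the paper's proof: partition $[T]$ into blocks of length $\Delta=\Theta((T/V_T)^{2/3})$, randomize within each block between two label marginals separated by $\Theta((V_T/T)^{1/3})$, bound the blockwise KL by a constant, and invoke a two-point testing inequality (the paper uses Theorem 2.2 of Tsybakov, handling the per-round issue by revealing the whole block's data up front, which matches your accumulated-KL observation) to get an $\Omega(\delta)$ per-round regret, with the TV budget met deterministically so no concentration step is actually needed. The only cosmetic differences are your choice of loss ($G\|p-q_t\|_2$ versus the paper's linear indicator-based loss, both equally admissible) and that you derive the $\Omega(\sqrt{T})$ floor from the same construction with a single block, whereas the paper cites an existing OCO lower bound.
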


\vspace{-5pt}
\section{Supervised Online Label Shift} \label{sec:sup}
\vspace{-2pt}

\begin{figure}[!t]
  \centering
  \begin{minipage}{0.4\linewidth}
  \begin{framework}[H]
      \begin{algorithmic}[1]
        \INPUT A hypothesis class $\cH$.
                
        \FOR{each round $t \in [T]$}
                    
            \STATE Nature samples $N$ iid data points $x_{t,1:N} \in \calX$ and $y_{t,1:N} \in \calY$, with each $(x_{t,i},y_{t,i}) \sim Q_t$; $x_{t,1:N}$ is revealed to the learner. 
            
            \STATE For each $i \in [N]$, learner predicts a label $f_t(x_{t,i})$. %

            \STATE The label $y_{t,i} \in \calY$ for each $i \in [N]$ is revealed. %

            \STATE $f_{t+1} = \calA(f_t,\{ x_{1:t, 1:N}, y_{1:t, 1:N}\})$ where algorithm $\calA$ updates the classifier with past data. 
        \ENDFOR
    \end{algorithmic}
    \caption{Supervised Online Label Shift (SOLS) protocol} \label{fig:proto-sup} 
\end{framework}
\end{minipage}
  \hfill
\begin{minipage}{0.57\linewidth}
\begin{algorithm}[H]
\caption{\texttt{TrainByWeights} to handle SOLS} \label{alg:suptrain}
\begin{algorithmic}[1]
\INPUT Online regression oracle ALG, hypothesis class $\cH$
\STATE At round $t \in [T]$, get estimated label marginal $\hat q_{t}$ from $\text{ALG}(s_{1:t-1})$.
\STATE Update the hypothesis with weighted ERM:
    \begin{align}
        f_{t} = \argmin_{f \in \cH} \sum_{i=1}^{t-1}  \sum_{j=1}^N \frac{\hat q_{t}(y_{i,j})}{\hat q_i(y_{i,j})} \ell(f(x_{i,j}),y_{i,j})\, \  \label{eq:erm}
    \end{align}
\STATE Get co-variates $x_{t,1:N}$ and make predictions with $f_{t}$

\STATE  Get labels $y_{t,1:N}$

\STATE Compute $s_t[i] = \frac{1}{N} \sum_{j=1}^N \I\{y_{t,j} = i\}$ for all $i \in [K]$.

\STATE Update ALG with the empirical label marginals $s_t$.

\end{algorithmic}
\end{algorithm}
\end{minipage}
\vspace{-5pt}
\end{figure}

In this section, we focus on the SOLS problem where the labels are revealed to the learner after it makes decisions. Framework~\ref{fig:proto-sup} summarizes our setup. Let $f_t^* := \argmin_{f \in \cH} L_t(f)$ be the population minimiser. 
We aim to control the \emph{dynamic regret} against the best sequence of hypotheses in hindsight:
\begin{align}
R^\cH_{\text{dynamic}}(T)
&=: \sum_{t=1}^T L_t(f_t) - L_t(f_t^*)\,.\label{eq:dyn-sup}
\end{align}
If the SOLS problem is convex, it reduces to OCO \citep{hazan2016introduction,orabona2019book} and
existing works provide $\tilde O(T^{2/3}V_T^{1/3})$ dynamic regret guarantees~\citep{zhang2018dynamic}. 
However, in practice, since loss functions are seldom convex with respect to model parameters in modern machine learning, the performance bounds of OCO algorithms cease to hold true. In our work, we extend the generalization guarantees of ERM from statistical learning theory \citep{bousquet2003lt} to the SOLS problem. All proofs of next sub-section are deferred to  \appref{app:sup}.

\vspace{-3pt}
\subsection{Proposed algorithms and performance guarantees}
\vspace{-2pt}
We start by providing a simple initial algorithm whose computational complexity and flexibility will be improved later.
Note that due to the label shift assumption, for any $j, t\in [T]$, we have
 $E_{(x,y) \sim Q_t}[\ell(f(x),y)] =  E_{(x,y) \sim Q_j} \left[ \frac{q_t(y)}{q_j(y)} \ell(f(x),y) \right]\,$.
Here we assume that the true label marginals $q_t(y) > 0$ for all $t \in [T]$ and all $y \in [K]$.
Based on this, we propose a simple weighted ERM approach (Algorithm \ref{alg:suptrain})
where we  
use an online regression oracle to estimate the label marginals from the (noisy) empirical label marginals computed with observed labeled data. With weighted ERM and plug-in estimates of importance weights, we can obtain our classifier $f_t$. 
One can expect that by adequately choosing the online regression oracle ALG, the risk of the hypothesis $f_t$ computed will be close to that of $f_t^*$. Here the degree of closeness will also depend on the number of data points seen thus far. Consequently, Algorithm \ref{alg:suptrain} controls the dynamic regret (Eq.\eqref{eq:dyn-sup}) in a graceful manner. We have the following performance guarantee:

\begin{restatable}{theorem}{thmsup} \label{thm:sup}
Suppose the true label marginal satisfies $\min_{t,k} q_t(k) \ge \mu > 0$. Choose the online regression oracle in Algorithm \ref{alg:suptrain} as FLH-FTL (\appref{app:exp}) or Aligator from \citet{baby2021TVDenoise} with its predictions clipped such that $\hat q_t[k] \ge \mu$. Then with probability at least $1-\delta$, Algorithm \ref{alg:suptrain} produces hypotheses with 
   $ R_{\text{dynamic}}^{\cH}
    = \tilde O \left( T^{2/3}V_T^{1/3} + \sqrt{T \log(|\cH|/\delta)}\right)\,,$
where $V_T = \sum_{t=2}^T \|q_t - q_{t-1} \|_1$. 
Further, this result is attained without any prior knowledge of the variation budget $V_T$.
\end{restatable}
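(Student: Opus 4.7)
The plan is to execute a weighted empirical risk minimization (wERM) generalization argument adapted to the drifting-distribution setting, and to reduce the label-marginal estimation part to the online regression oracle's guarantee. Fix a round $t$. Let $\tilde L_t(f) := \frac{1}{(t-1)N}\sum_{i=1}^{t-1}\sum_{j=1}^N \frac{q_t(y_{i,j})}{q_i(y_{i,j})}\ell(f(x_{i,j}),y_{i,j})$ denote the oracle (true-weight) empirical risk, and let $\hat L_t(f)$ denote the same quantity with $q_t, q_i$ replaced by the clipped estimates $\hat q_t, \hat q_i$. By the label shift identity $E_{(x,y)\sim Q_t}[\ell(f(x),y)] = E_{(x,y)\sim Q_i}\bigl[\frac{q_t(y)}{q_i(y)}\ell(f(x),y)\bigr]$, $\tilde L_t(f)$ is an unbiased estimate of $L_t(f)$. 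Algorithm \ref{alg:suptrain} returns $f_t = \arg\min_{f\in\cH} \hat L_t(f)$.

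The core decomposition I would use is
\begin{align*}
L_t(f_t) - L_t(f_t^*) &= \bigl[L_t(f_t) - \tilde L_t(f_t)\bigr] + \bigl[\tilde L_t(f_t) - \hat L_t(f_t)\bigr] + \bigl[\hat L_t(f_t) - \hat L_t(f_t^*)\bigr] \\
&\quad + \bigl[\hat L_t(f_t^*) - \tilde L_t(f_t^*)\bigr] + \bigl[\tilde L_t(f_t^*) - L_t(f_t^*)\bigr].
\end{align*}
The middle term is nonpositive by optimality of $f_t$, so it suffices to bound the four other differences. For the concentration terms $|\tilde L_t(f) - L_t(f)|$, note that since weights are bounded by $1/\mu$ (using the lower bound on $q_t$) and losses are bounded, each of the $N(t-1)$ summands is bounded; a standard Hoeffding plus union bound over $\cH$ (and over $t\in[T]$) yields, with probability at least $1-\delta$, a per-round bound of order $\tilde O\bigl(\sqrt{\log(|\cH|/\delta)/(N(t-1))}\bigr)/\mu$, which sums to $\tilde O\bigl(\sqrt{T\log(|\cH|/\delta)}\bigr)$ over $t$.

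For the plug-in error $|\tilde L_t(f) - \hat L_t(f)|$, I would use the elementary identity $\bigl|\frac{a}{b}-\frac{c}{d}\bigr| \le \frac{|a-c|}{b} + \frac{c\,|d-b|}{bd}$ applied with $b,d \ge \mu$ and $a,c \le 1$, to obtain a per-sample weight error bounded by $\frac{1}{\mu}\|q_t-\hat q_t\|_\infty + \frac{1}{\mu^2}\|q_i - \hat q_i\|_\infty$. Summing this over $i<t$ and then over $t$, and handling the second term via the elementary identity $\sum_{t=2}^T \frac{1}{t-1}\sum_{i<t} a_i = O(\log T)\sum_i a_i$, reduces the total plug-in error to $\tilde O\bigl(\mu^{-2}\sum_{t=1}^T \|\hat q_t - q_t\|_2\bigr)$. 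The empirical label marginals $s_t$ used as inputs to ALG fit Definition \ref{def:regres} with $\theta_t = q_t$ and $\epsilon_t = s_t - q_t$ of variance $O(1/N)$, so FLH-FTL or Aligator give $\sum_t \|\hat q_t - q_t\|_2^2 = \tilde O(T^{1/3}V_T^{2/3})$. Since the clipped simplex $\{q:q\ge \mu\}$ is convex and contains all true $q_t$, the projection step can only shrink the squared error, so the guarantee survives clipping. Cauchy–Schwarz then gives $\sum_t \|\hat q_t - q_t\|_2 \le \sqrt{T}\cdot \tilde O(T^{1/6}V_T^{1/3}) = \tilde O(T^{2/3}V_T^{1/3})$, yielding the claimed plug-in bound. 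Combining with the concentration bound yields the stated rate.

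The main obstacle is the adaptivity: $f_t$ is a random function of the observed history, the weights $\hat q_i$ were produced sequentially by ALG, and $\hat q_t$ may depend on $\hat q_i$ in complicated ways. One must therefore ensure that the Hoeffding-type concentration for $\tilde L_t(f)$ holds on a single high-probability event simultaneously for all $f\in\cH$ and all $t$, and that this event is compatible with the high-probability event underlying the online regression oracle's bound. A secondary subtlety is verifying that clipping $\hat q_t$ at $\mu$ does not break the $\tilde O(T^{1/3}V_T^{2/3})$ guarantee for FLH-FTL/Aligator; this follows from the contractivity of Euclidean projection onto the convex clipped simplex, but should be stated explicitly.
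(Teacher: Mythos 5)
Your proposal is correct and follows essentially the same route as the paper: reduce the marginal-tracking error to the online regression oracle's $\tilde O(T^{1/3}V_T^{2/3})$ total-squared-error guarantee (converted to $\tilde O(T^{2/3}V_T^{1/3})$ via Cauchy--Schwarz), bound the weight-perturbation terms by $\mu^{-2}(\|q_t-\hat q_t\|+\|q_i-\hat q_i\|)$, and control the remaining true-weight empirical deviations by Hoeffding with a union bound over $\cH$ and $t$. Your five-term decomposition, which inserts the true-weight empirical risk as an intermediate quantity, is just a reorganization of the paper's $U_{ij}/V_{ij}/W_{ij}$ decorrelation trick for its term $T2$, so the subtlety you flag about correlated summands is resolved in exactly the way the paper resolves it.
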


\vspace{5pt}

 The above rate contains the sum of two terms. The second term is the familiar rate seen in the supervised statistical learning theory literature under iid data \citep{bousquet2003lt}. The first term reflects the price we pay for adapting to distributional drift in the label marginals. While we prove this result for finite hypothesis sets, the extension to infinite sets is direct by standard covering net arguments~\citep{vershynin2018HDP}.

\begin{remark}
Theorem \ref{thm:sup} requires that the estimates of the label marginals to be clipped from below by $\mu$. This is done only to facilitate theoretical guarantees by enforcing that the importance weights used in Eq.\eqref{eq:erm} do not become unbounded. However, note that only the labels we actually observe enters the objective in Eq.\eqref{eq:erm}. In particular, if a label has very low probability of getting sampled at a round, then it is unlikely that it enters the objective. Due to this reason, in our experiments, we haven't used the clipping operation (see Section \ref{sec:exp} and Appendix \ref{app:exp} for more details).
\end{remark}

The proof of the theorem uses concentration arguments to establish that the risk of the hypothesis $f_t$ 
is close to the risk of the optimal $f_t^*$. However, unlike the standard offline supervised setting with iid data, for any fixed hypothesis, the terms in the summation of Eq.\eqref{eq:erm} are correlated through the estimates of the online regression oracle. We handle it by introducing uncorrelated surrogate random variables and bounding the associated discrepancy.
Next, we show (near) minimax optimality of the guarantee in Theorem \ref{thm:sup}.

\begin{restatable}{theorem}{suplb} \label{thm:suplb}
Let $V_T \le T/8$. There exists a choice of hypothesis class, loss function, and adversarial strategy of generating the data such that 
    $R_{\text{dynamic}}^{\cH}
    =  \Omega \left( T^{2/3}V_T^{1/3} + \sqrt{T \log(|\cH|)}\right)\,,$
where the expectation is taken with respect to randomness in the algorithm and adversary.
\end{restatable}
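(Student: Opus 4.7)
The plan is to establish the two additive terms of the lower bound separately via independent adversarial constructions and then combine them using the elementary fact $\max(A,B) \ge (A+B)/2$. The first term $\Omega(T^{2/3}V_T^{1/3})$ captures the statistical price of non-stationarity and will be obtained by adapting the block-batch construction of \cite{besbes2015non} to the label shift setting. The second term $\Omega(\sqrt{T\log|\cH|})$ is the standard supervised learning cost and will be obtained by freezing the label marginal so that $V_T=0$ and invoking a classical packing lower bound for iid classification.

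For the non-stationarity term, I would choose $K=2$, a trivial input space, and a hypothesis class containing at least the two constant predictors $h_+$ (always predict class $1$) and $h_-$ (always predict class $2$). Under the $0/1$ loss these satisfy $L_t(h_+) = q_t(2)$ and $L_t(h_-) = q_t(1)$, so the Bayes hypothesis is $h_+$ when $q_t(1) > 1/2$ and $h_-$ otherwise, and choosing the wrong one costs $2|q_t(1)-1/2|$ per round. I partition $[T]$ into $M=\lfloor T/\Delta\rfloor$ contiguous blocks of length $\Delta$; on each block $m$, nature independently draws $\sigma_m \in \{-1,+1\}$ uniformly and fixes $q_t(1) = 1/2 + \sigma_m \epsilon$ throughout block $m$. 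This induces total variation at most $2\epsilon(M-1) \le V_T$. Within block $m$, the algorithm receives labels that are iid Bernoulli$(1/2 + \sigma_m \epsilon)$; samples from earlier blocks are independent of $\sigma_m$ and hence uninformative. By Le Cam's two-point method, identifying $\sigma_m$ from $\Delta$ in-block samples requires $\Delta\epsilon^2 = \Omega(1)$, so choosing $\Delta\epsilon^2$ to be a small constant ensures each block contributes $\Omega(\Delta\epsilon)$ expected regret. Imposing simultaneously $\epsilon = \Theta(V_T\Delta/T)$ (variation) and $\epsilon = \Theta(1/\sqrt{\Delta})$ (indistinguishability) yields $\Delta = \Theta((T/V_T)^{2/3})$ and $\epsilon = \Theta((V_T/T)^{1/3})$, for a total lower bound of $\Omega(M\Delta\epsilon) = \Omega(T\epsilon) = \Omega(T^{2/3}V_T^{1/3})$. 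The hypothesis that $V_T \le T/8$ is precisely what keeps $\epsilon$ bounded away from $1/2$ so that $q_t$ remains a valid distribution.

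For the $\Omega(\sqrt{T\log|\cH|})$ term, I set $q_t \equiv (1/2,1/2)$ so $V_T=0$ and the problem collapses to iid batch supervised learning with $T$ effective samples. Taking a family of $|\cH|$ hypotheses that disagree on a carefully chosen set of inputs (e.g.\ indicator functions indexed by a Hamming-separated subset of the Boolean hypercube), a standard Fano or Assouad-style lower bound as in \cite{bousquet2003lt} yields $\Omega(\sqrt{T\log|\cH|})$ expected regret of the best-in-class rule against the realizable Bayes predictor.

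The main obstacle is the non-stationarity construction. Three subtleties must be handled carefully: (i) enforcing the variation budget as a \emph{sure} constraint rather than in expectation, which is ensured here because consecutive blocks differ by exactly $2\epsilon$ and we simply choose $\epsilon$ so that $2\epsilon(M-1) \le V_T$; (ii) verifying that labels from earlier blocks convey no information about the current $\sigma_m$, which follows from the independence of the $\sigma_m$'s but must be argued formally via a data-processing or mutual-information bound against any (possibly randomized) algorithm, most cleanly through Yao's minimax principle; and (iii) converting the block-wise testing difficulty into a \emph{regret} statement rather than a \emph{parameter estimation} statement, which requires noting that in the constant-predictor hypothesis class the two possible optima are strictly separated by $2\epsilon$ in loss. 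Adapting the mutual-information calculations of \cite{besbes2015non}, originally stated for noisy-gradient queries in stochastic optimization, to the discrete label-sampling model of OLS is where the non-trivial technical work lies.
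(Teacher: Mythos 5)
Your proposal is correct and follows essentially the same route as the paper's proof: a block construction with per-block Bernoulli means $1/2\pm\epsilon$ drawn independently and uniformly, the variation budget enforced surely via $\epsilon=\Theta(V_T\Delta/T)$, a two-point testing argument (the paper uses the KL bound of \citet{besbes2015non} together with Tsybakov's inequality, which is the same tool as your Le Cam step) with the identical tuning $\Delta=\Theta((T/V_T)^{2/3})$, and the $\Omega(\sqrt{T\log|\cH|})$ term imported from the standard iid statistical-learning lower bound. The only cosmetic differences are that the paper phrases the learner's play as a randomized label probability $\hat q_t$ and tests $\I\{\hat q_t\ge 1/2\}$ rather than restricting to the two constant predictors, and it cites an existing textbook result for the second term instead of re-deriving it.
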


\begin{remark}\label{rmk:memory}
Though the rates in Theorems \ref{thm:unsuplb} and \ref{thm:suplb} are similar, we note that the corresponding regret definitions are different. Hence the minimax rates are not directly comparable between the supervised and unsupervised settings.
\end{remark}

Even-though Algorithm \ref{alg:suptrain} has attractive performance guarantees, 
it requires retraining with weighted ERM at every round. This can be computationally expensive. To alleviate this issue, we design a new online change point detection algorithm (Algorithm \ref{alg:bbox} in \appref{app:low-switch}) that can adaptively discover time intervals where the label marginals change slow enough. We show that the new online change point detection algorithm can be used to significantly reduce the number of retraining steps without sacrificing statistical efficiency (up to constants). Due to space constraints, we defer the exact details to \appref{app:low-switch}. We remark that our change point detection algorithm is applicable to general online regression problems and hence can be of independent interest to online learning community.

\begin{remark}
Algorithm \ref{alg:bbox} helps to reduce the run-time complexity. However, both Algorithms \ref{alg:suptrain} and \ref{alg:bbox} have the drawback of storing all data points accumulated over the online rounds. This is reminiscent to  FTL / FTRL type algorithms from online learning. We leave the task of deriving theoretical guarantees with reduced storage complexity under non-convex losses as an important future direction.
\end{remark}

\begin{figure*}[t!]
  \centering 
    \subfloat[]{
        \centering 
        \includegraphics[width=0.31\linewidth]{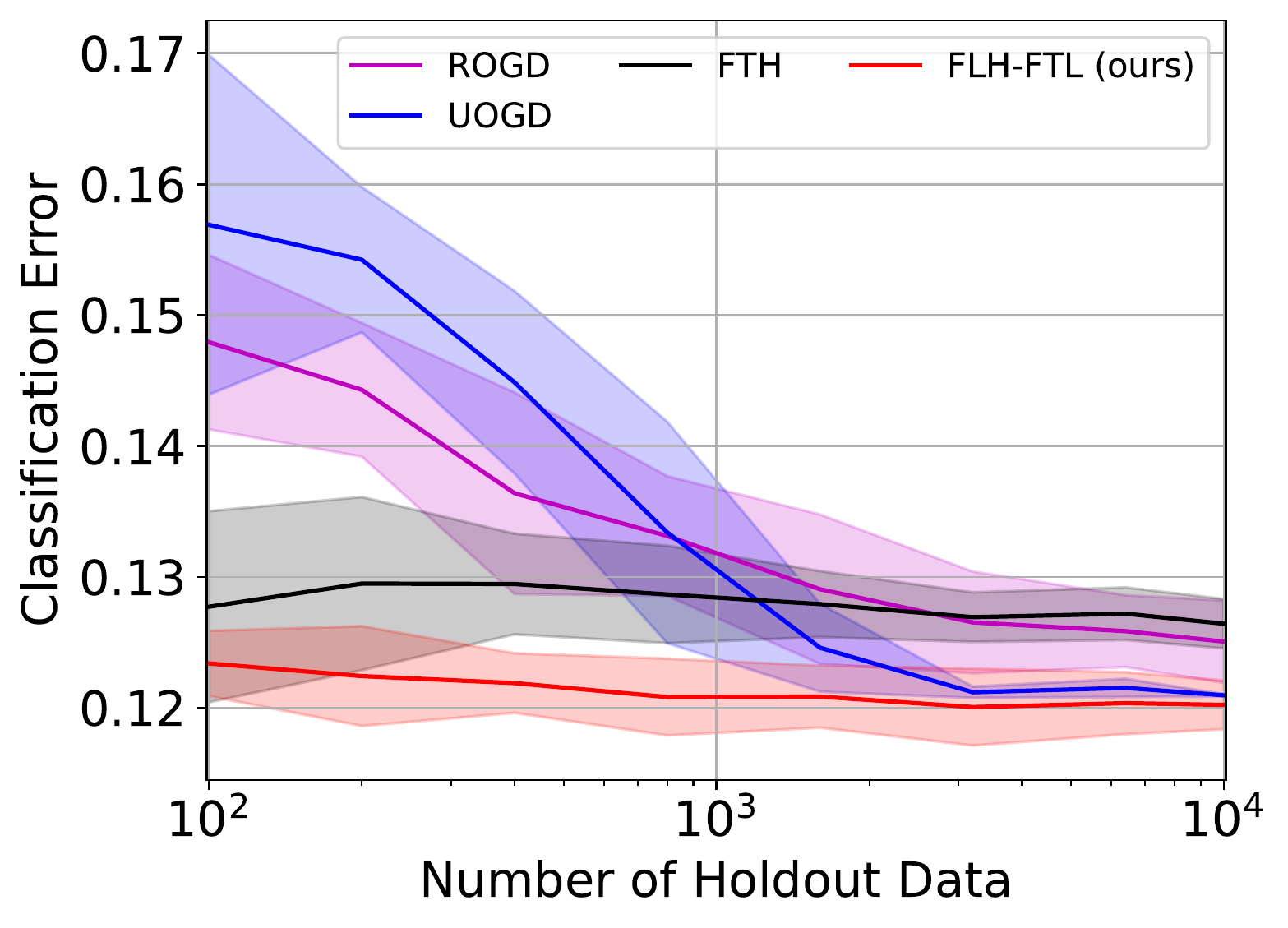}
    }
    \subfloat[]{
        \includegraphics[width=0.31\linewidth]{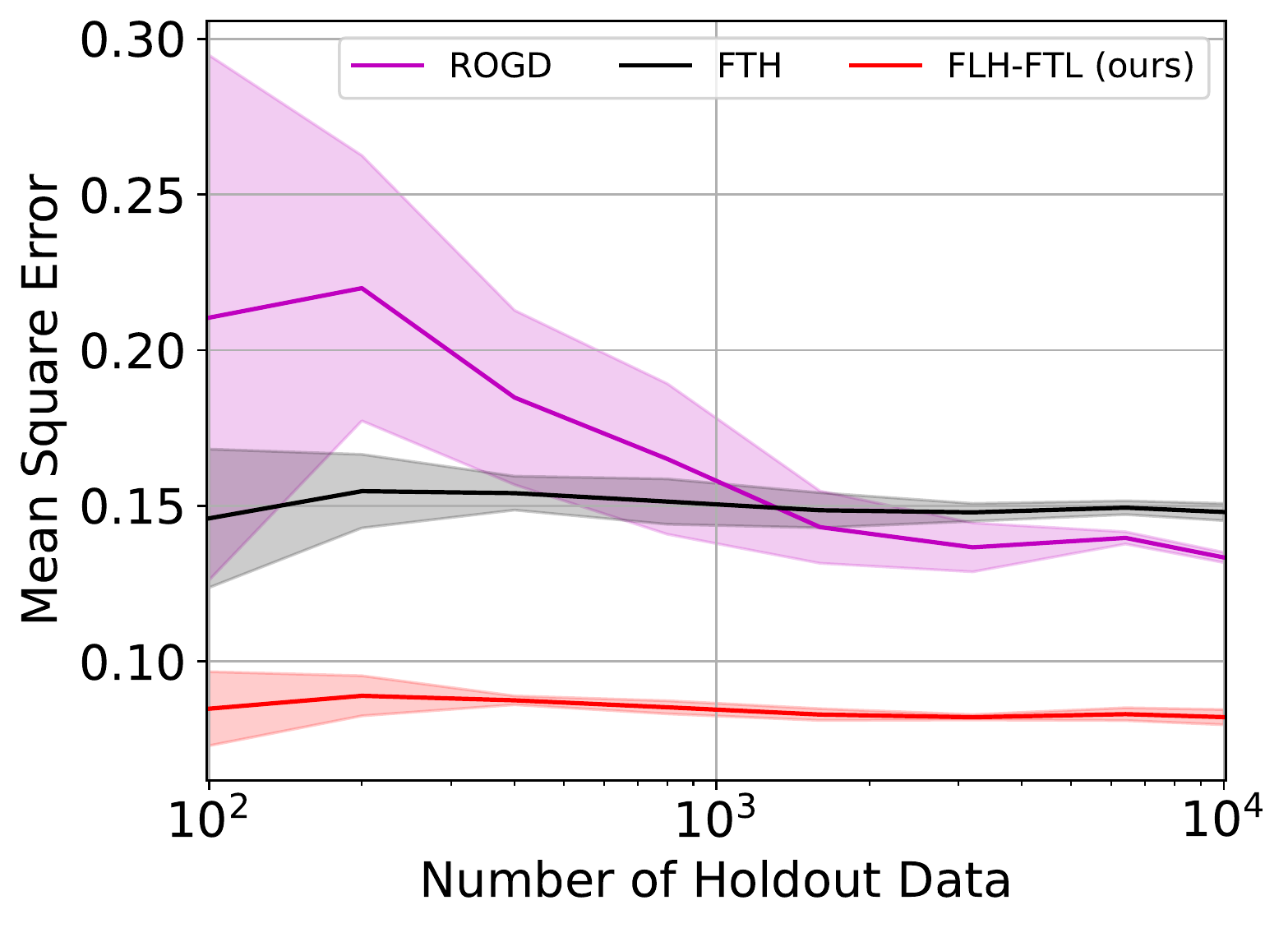}
    }
    \subfloat[]{
        \includegraphics[width=0.31\linewidth]{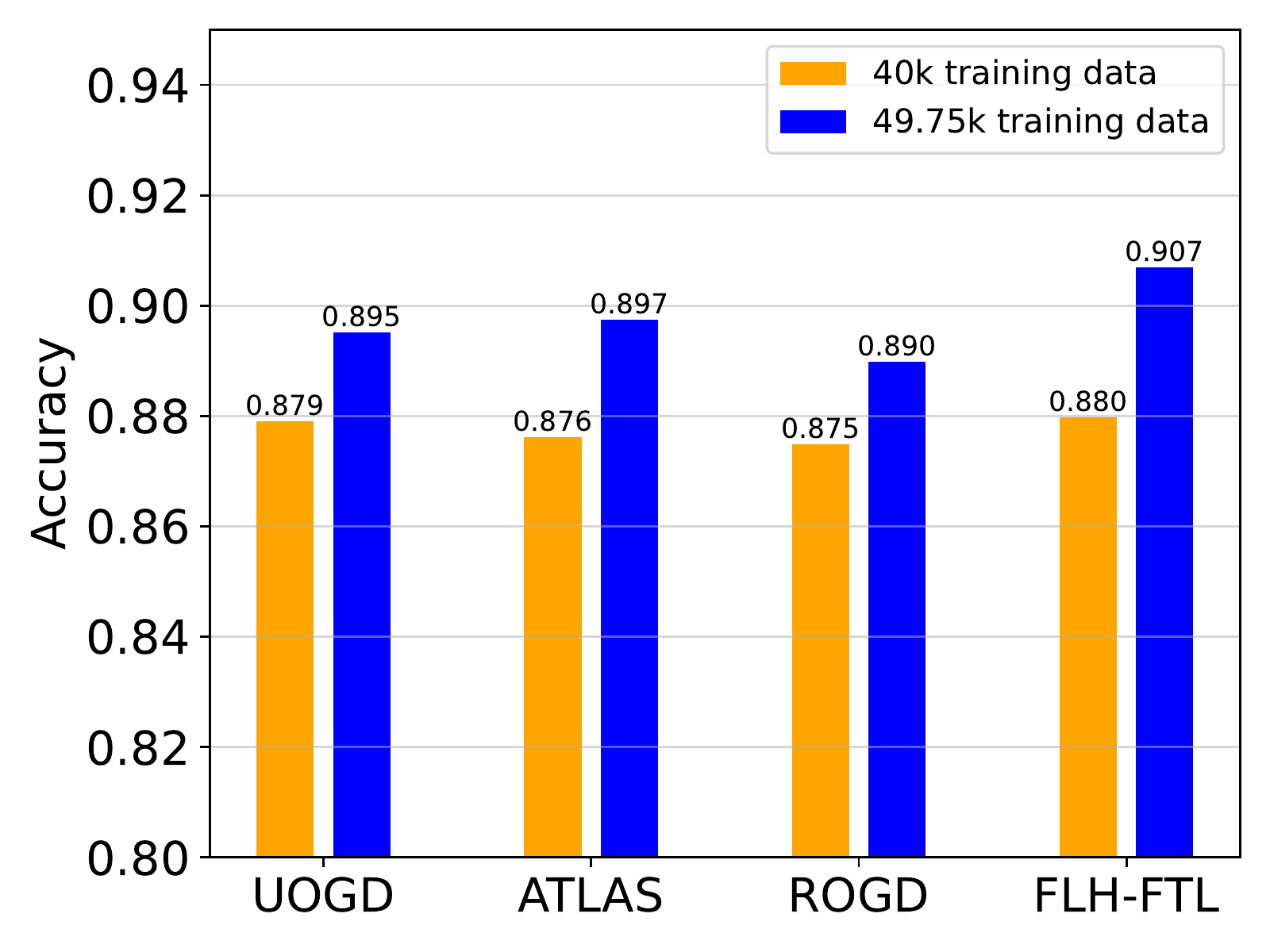}
    }
  \caption{\emph{Results on the UOLS problem.} \textbf{(a) and (b):} Ablation on CIFAR10 with monotone shift over sizes of holdout
  data used to update model parameters and compute confusion matrix, with amount of training data held fixed. 
  FLH-FTL (ours) outperforms all other alternatives throughout in classification error and mean square error in label marginal estimation. Unlike the alternatives, the performance of FLH-FTL (ours) is unaffected by the decrease in amount of holdout data. 
  \textbf{(c):} CIFAR10 results with monotone shift using varying amount of training data, with the remaining labeled data used as holdout (total number of samples fixed to 50k).  
  The performance of FLH-FTL is minimally impacted by the reduction in the quantity of holdout data, thus yielding the greatest advantage from utilizing a larger volume of training data.
  }
  \label{fig:results}
\end{figure*}

\begin{table*}[t!]
  \centering
  \small
  \tabcolsep=0.12cm
  \renewcommand{\arraystretch}{1.5}
   \resizebox{\linewidth}{!}{%
  \begin{tabular}{@{}*{13}{c}@{}}
  \toprule
    \textbf{Methods} & \multicolumn{2}{c}{\textbf{Synthetic}} & \multicolumn{2}{c}{\textbf{MNIST}} & \multicolumn{2}{c}{\textbf{CIFAR}} 
    & \multicolumn{2}{c}{\textbf{EuroSAT}} & \multicolumn{2}{c}{\textbf{Fashion}} & \multicolumn{2}{c}{\textbf{ArXiv}}  \\
  \cmidrule(lr){2-3} \cmidrule(lr){4-5} 
  \cmidrule(lr){6-7} \cmidrule(lr){8-9}
  \cmidrule(lr){10-11} \cmidrule(lr){12-13} 
    & Ber & Sin & Ber & Sin & Ber & Sin 
    & Ber & Sin & Ber & Sin & Ber & Sin \\ 
  \midrule
  Base &
        \eentry{8.6}{0.2} & \eentry{8.2}{0.3} & \eentry{4.9}{0.4} & \eentry{3.9}{0.0} & \eentry{16}{0} & \eentry{16}{0} & \eentry{13}{0} & \eentry{13}{0} & \eentry{15}{0} & \eentry{15}{0} & \eentry{23}{1} & \eentry{19}{0} \\
  OFC &
        \eentry{6.4}{0.6} & \eentry{5.5}{0.2} & \eentry{4.4}{0.5} & \eentry{3.2}{0.3} & \eentry{12}{1} & \eentry{11}{0} & \eentry{11}{1} & \eentry{10}{1} & \eentry{7.9}{0.1} & \eentry{7.1}{0.1} & \eentry{20}{2} & \eentry{15}{0} \\
  Oracle &
        \eentry{3.7}{0.8} & \eentry{3.9}{0.2} & \eentry{2.5}{0.5} & \eentry{1.5}{0.1} & \eentry{5.4}{0.5} & \eentry{5.8}{0.1} & \eentry{3.9}{0.3} & \eentry{4.1}{0.1} & \eentry{3.7}{0.2} & \eentry{3.6}{0.1} & \eentry{7.7}{1.0} & \eentry{5.1}{0.1} \\
  FTH &
        \eentry{6.5}{0.6} & \eentry{5.7}{0.3} & \eentry{4.5}{0.6} & \beentry{3.3}{0.2} & \eentry{11}{0} & \beentry{11}{0} & \eentry{10}{0} & \beentry{9.6}{0.0} & \eentry{8.5}{0.3} & \beentry{6.9}{0.4} & \eentry{20}{1} & \beentry{14}{0} \\
  FTFWH &
        \eentry{6.6}{0.5} & \eentry{5.7}{0.3} & \eentry{4.5}{0.6} & \beentry{3.3}{0.2} & \eentry{11}{1} & \beentry{11}{0} & \eentry{9.8}{0.4} & \beentry{9.6}{0.1} & \eentry{8.2}{0.6} & \beentry{6.9}{0.4} & \eentry{20}{1} & \beentry{14}{0} \\
  ROGD &
        \eentry{7.9}{0.3} & \eentry{7.2}{0.6} & \eentry{6.2}{2.8} & \eentry{4.4}{1.5} & \eentry{16}{3} & \eentry{13}{0} & \eentry{14}{1} & \eentry{13}{1} & \eentry{10}{1} & \eentry{8.2}{0.7} & \eentry{23}{2} & \eentry{17}{1} \\
  UOGD &
        \eentry{8.1}{0.6} & \eentry{7.5}{0.6} & \eentry{5.4}{0.6} & \eentry{4.0}{0.0} & \eentry{14}{0} & \eentry{14}{1} & \eentry{10}{1} & \eentry{9.8}{0.7} & \eentry{11}{2} & \eentry{11}{2} & \eentry{21}{1} & \eentry{17}{1} \\
  ATLAS &
        \eentry{8.0}{1.0} & \eentry{7.5}{0.6} & \eentry{5.2}{0.6} & \eentry{3.7}{0.2} & \eentry{13}{0} & \eentry{13}{1} & \eentry{10}{1} & \eentry{9.9}{0.7} & \eentry{12}{2} & \eentry{12}{2} & \eentry{21}{1} & \eentry{16}{0} \\
\parbox{3.0cm}{\centering FLH-FTL (ours)} & 
        \beentry{5.4}{0.7} & \beentry{5.4}{0.4} & \beentry{4.4}{0.7} & \beentry{3.3}{0.2} & \beentry{10}{0} & \beentry{11}{0} & \beentry{9.2}{0.4} & \beentry{9.6}{0.1} & \beentry{7.7}{0.4} & \eentry{7.0}{0.0} & \beentry{19}{1} & \beentry{14}{0} \\
  
  \bottomrule 
  \end{tabular} } 
  \vspace{5pt}

  \tabcolsep=0.08cm
  \renewcommand{\arraystretch}{1.5}
  \resizebox{\linewidth}{!}{%
  \begin{tabular}{@{}*{13}{c}@{}}
  \toprule
    & \multicolumn{2}{c}{\textbf{Synthetic}} & \multicolumn{2}{c}{\textbf{MNIST}} & \multicolumn{2}{c}{\textbf{CIFAR}} 
    & \multicolumn{2}{c}{\textbf{EuroSAT}} & \multicolumn{2}{c}{\textbf{Fashion}} & \multicolumn{2}{c}{\textbf{ArXiv}}  \\
  \cmidrule(lr){2-3} \cmidrule(lr){4-5} 
  \cmidrule(lr){6-7} \cmidrule(lr){8-9}
  \cmidrule(lr){10-11} \cmidrule(lr){12-13} 
    & Ber & Sin & Ber & Sin & Ber & Sin 
    & Ber & Sin & Ber & Sin & Ber & Sin \\ 
  \midrule
    FTH & 
        \eentry{0.19}{0.01} & \eentry{0.10}{0.00} & \eentry{0.27}{0.00} & \eentry{0.14}{0.00} & \eentry{0.27}{0.01} & \eentry{0.14}{0.00} & \eentry{0.27}{0.00} & \eentry{0.14}{0.00} & \eentry{0.29}{0.01} & \beentry{0.14}{0.01} & \eentry{0.29}{0.01} & \beentry{0.15}{0.00} \\
    FTFWH & 
        \eentry{0.19}{0.02} & \eentry{0.09}{0.00} & \eentry{0.26}{0.02} & \eentry{0.13}{0.00} & \eentry{0.25}{0.02} & \beentry{0.13}{0.00} & \eentry{0.25}{0.01} & \beentry{0.13}{0.00} & \eentry{0.25}{0.04} & \beentry{0.14}{0.01} & \eentry{0.27}{0.02} & \beentry{0.15}{0.00} \\
  ROGD &
        \eentry{0.29}{0.03} & \eentry{0.24}{0.01} & \eentry{0.41}{0.08} & \eentry{0.37}{0.06} & \eentry{0.39}{0.04} & \eentry{0.30}{0.05} & \eentry{0.43}{0.04} & \eentry{0.35}{0.03} & \eentry{0.37}{0.02} & \eentry{0.30}{0.01} & \eentry{0.34}{0.03} & \eentry{0.28}{0.01} \\
  \parbox{2.5cm}{\centering FLH-FTL (ours)} & 
        \beentry{0.10}{0.01} & \beentry{0.08}{0.00} & \beentry{0.15}{0.01} & \beentry{0.12}{0.00} & \beentry{0.17}{0.01} & \beentry{0.13}{0.00} & \beentry{0.16}{0.01} & \beentry{0.13}{0.00} & \beentry{0.18}{0.02} & \beentry{0.14}{0.01} & \beentry{0.23}{0.01} & \beentry{0.15}{0.00} \\

  \bottomrule 
 \end{tabular} }

\caption{ 
    \addition{
    \emph{Results for UOLS problems under sinusoidal (Sin) and Bernoulli (Ber) shifts}. \textbf{Top:} Classification Error. \textbf{Bottom:} Mean-squared error in estimating label marginal. For both, lower is better. Across all datasets, we observe that FLH-FTL (ours) often improves over best alternatives. 
    }}
    \vspace{-5pt}
  \label{table:UOLS}
\end{table*}

\vspace{-7pt}
\section[Experiments]{Experiments\footnote{Code is publicly available at \url{https://github.com/acmi-lab/OnlineLabelShift}.
}} \label{sec:exp}

\vspace{-4pt}
\subsection{UOLS Setup and Results}
\vspace{-2pt}

\textbf{Setup~~} 
Following the dataset setup of \citet{bai2022label}, we conducted experiments on synthetic and common benchmark data such as MNIST~\citep{lecun1998mnist}, CIFAR-10~\citep{krizhevsky2009learning}, Fashion~\citep{xiao2017fashion}, EuroSAT~\citep{helber2019eurosat}, Arxiv~\citep{clement2019use},
and SHL~\citep{gjoreski2018university, wang2019enabling}. 
For each dataset, the original data is split into labeled data available during offline training and validation, and the unlabeled data that we observe during online learning. 
We experiment with varying sizes of holdout offline data 
which is used to obtain the confusion matrix and update the model
parameters to adapt to OLS 
to probe the sample efficiency of all the methods. 
In contrast to previous works~\citep{bai2022label, wu2021label}, we have chosen to use a smaller amount of holdout offline data for our main experiments. We made this decision because the standard practice for deployment involves training and validating models on training and holdout splits, respectively (e.g., with k-fold cross-validation). Then, the final model is deployed by training on all available data (i.e., the union of train and holdout) with the identified hyperparameters.
However, to employ UOLS techniques in practice, practitioners must hold out data that was not seen during training to update the model during online adaptation. Therefore, methods that are efficient with respect to the amount of offline holdout data required might be preferable.

For all datasets except SHL, we simulate online label shifts with four types of shifts studied in \citet{bai2022label}: 
monotone shift, square shift,  
sinusoidal shift, and Bernoulli shift.
For SHL locomotion, we use the real-world shift occurring over time. 
For architectures, we use an MLP for Fashion, SHL and MNIST, Resnets~\citep{he2016deep} for EuroSAT, CINIC, and CIFAR, and DistilBERT~\citep{sanh2019distilbert, wolf2019huggingface} based models for arXiv.  
For alternate approaches, along with a base classifier (which does no adaptation) and oracle classifier (which reweight using the true label marginals),  
we make comparisons with  
adaptation algorithms proposed in prior works~\citep{wu2021label, bai2022label}.  
In particular, we compare with ROGD, FTH, \addition{FTFWH} from \citet{wu2021label} and 
UOGD, ATLAS from \citet{bai2022label}. For brevity, we refer to our method as 
FLH-FTL (though strictly speaking, our methods are based on FLH from \citet{hazan2007adaptive} with online averages as base learners).
We run all the online label shift experiments with the time horizon $T=1000$ and at each step 10 samples are revealed. 
We repeat all experiments with 3 seeds. 
For other methods that perform re-weighting correction 
on softmax predictions, we use the labeled holdout data to 
calibrate the model with temperature scaling, which tunes 
one temperature parameter~\citep{guo2017calibration}. 
 We provide exact details about the datasets, 
label shift simulations, models, and prior methods in \appref{app:exp}.  

\textbf{Results~~} 
Overall, across all datasets, 
we observe that our method FLH-FTL
performs better than 
alternative approaches 
in terms of both classification error 
and mean squared error for estimating the 
label marginal. 
Note that methods that directly update the model
parameters (i.e., UOGD, ATLAS) 
do not provide any estimate of the label marginal (\tableref{table:UOLS}).  
UOGD and ATLAS also
require offline holdout labeled data (i.e. from time step 0) 
to make online updates to the model parameters. 
For this purpose, we use the same labeled data that we 
use to compute the confusion matrix. 

As we increase the holdout offline labeled 
dataset size for updating the model parameters 
(and to compute the confusion matrix), 
we observe that classification error and MSE with 
FLH-FTL stay (relatively) constant whereas the classification errors of other alternatives improve (\figref{fig:results}). 
This highlights that FLH-FTL can be much more 
sample efficient with respect to the size 
of the hold-out offline labeled 
data. Motivated by this observation, 
we perform an additional experiment 
in which we increase the offline training data 
and observe that we can overall 
improve the classification 
accuracy significantly with FLH-FTL (\figref{fig:results}). %
We present results on SHL dataset with similar findings on semi-synthetic datasets in \appref{app:shl_appendix}. 
Finally, we also experiment with a random forest
model on the MNIST dataset.
Note methods that update model parameters 
(e.g., UOGD and ATLAS) 
with OGD are not applicable here. 
Here, we also observe that we improve over existing applicable alternatives  (\tabref{table:mnist-RF}).

\begin{figure}[!t]
  \centering
  \begin{minipage}{0.48\linewidth}
  \begin{table}[H]
  \centering
  \small
  \tabcolsep=0.04cm
   \resizebox{\linewidth}{!}{%
  \begin{tabular}{@{}*{8}{c}@{}}
  \toprule
    & Base 
    & Oracle 
    & ROGD & FTH 
    & FTFWH & \thead{FLH-FTL \\(ours)} \\
  \midrule
  Cl Err & 
    \eentry{18}{1} & \eentry{6.3}{1.3} & \eentry{19}{3} & \eentry{14}{2} & \eentry{14}{2} & \beentry{13}{2} \\
  MSE &
    \nentry & \eentry{0.0}{0.0} & \eentry{0.3}{0.0} & \eentry{0.3}{0.0} & \eentry{0.3}{0.0} & \beentry{0.2}{0.0} \\
  \bottomrule
  \end{tabular}}
  \vspace{3pt}
  \caption{
    \addition{
    \emph{Results with a Random Forest classifier on MNIST dataset.} 
    Note that methods that update model parameters are not applicable here. 
    FLH-FTL outperforms existing alternatives 
    for both accuracy and label marginal estimation.  
    }
  }
  \label{table:mnist-RF}
\end{table}
\end{minipage}
  \hfill
\begin{minipage}{0.48\linewidth}
\begin{table}[H]
  \centering
  \small
  \tabcolsep=0.04cm
\resizebox{\linewidth}{!}{
  \begin{tabular}{@{}*{8}{c}@{}}
  \toprule
     & \thead {CT \\ (base)}  
    & \thead{CT-RS (ours)\\w FTH} 
    & \thead{CT-RS (ours) \\ w FLH-FTL} 
    & \thead{w-ERM \\(oracle)} \\
  \midrule
   Cl Err
    & \eentry{20.0}{0.5} & \eentry{18.38}{0.4}
    & \beentry{17.12}{0.8} & \eentry{16.32}{0.7} \\
  MSE 
    & \nentry 
    & \eentry{0.18}{0.01} & \beentry{0.12}{0.01} & \nentry \\
  \bottomrule
  \end{tabular}
}
\vspace{5pt}
  \caption{\emph{Results on SOLS setup} on 
  CIFAR10 SOLS with Bernoulli shift. CT with RS 
  improves over the base model (CT)
  and achieves competitive performance with respect to weighted ERM oracle. MNIST results are similar (see \appref{app:exp}).
  }
  \label{table:SOLS}
\end{table}
\end{minipage}
  \vspace{-10pt}
\end{figure}

\vspace{-5pt}
\subsection{SOLS setup and results}
\vspace{-2pt}

\textbf{Setup~~} For the supervised problem, 
we experiment with MNIST and CIFAR datasets. 
We simulate a time horizon of $T=200$. 
For each dataset, at 
each step, we observe 50 samples with Bernoulli shift. 
Motivated by our theoretical results with weighted ERM, 
we propose a simple baseline 
which continually trains the model at every step
instead of starting ERM from scratch every time. 
We maintain a pool of all the labeled 
data received till that time step,
and at every step, we randomly sample a batch
with uniform label marginal to update the model. 
Finally, we re-weight the 
updated softmax outputs with estimated label marginal. 
We call this method Continual Training via Re-Sampling (CT-RS). 
Its relation as a close variant of weighted ERM is elaborated in  \appref{sec:SOLS_exp_details}.
To estimate the label marginal, we try FTH and ours FLH-FTL.

\textbf{Results~~} 
On both datasets, we observe that 
empirical performance with CT-RS  improves over the naive continual training baseline. 
Additionally, CT-RS results are competitive with weighted ERM 
while being  5--15$\times$ faster in terms of computation cost (we include the exact computational cost in \appref{sec:SOLS_exp_details}).  
Moreover, as in UOLS setup, we observe that FLH-FTL improves over FTH for both
target label marginal estimation and classification.

\vspace{-7pt}
\section{Conclusion} \label{sec:conclusion}
\vspace{-2pt}
In this work, we focused on unsupervised and supervised 
online label shift settings. 
For both settings, we developed algorithms with minimax optimal dynamic regret.  
Experimental results on both real and semi-synthetic datasets substantiate
that our methods improve over prior works both in terms of 
accuracy and target label marginal estimation.

In future work, we aim to expand our experiments to more 
real-world label shift datasets. Our work also motivates 
future work in exploiting other causal structures (e.g. covariate shift)
for online distribution shift problems.%

\subsection*{Acknowledgments}

SG acknowledges Amazon Graduate Fellowship and  JP Morgan AI Ph.D. Fellowship for their support. 
ZL acknowledges Amazon AI, Salesforce Research, Facebook, UPMC, Abridge, the PwC Center, the Block Center, the Center for Machine Learning and Health, and the CMU Software Engineering Institute (SEI) via Department of Defense contract FA8702-15-D-0002,
for their generous support of ACMI Lab's research on machine learning under distribution shift.  DB and YW were partially supported by NSF Award \#2007117 and a gift from Google.

\bibliography{refs,tf}

\begin{thebibliography}{86}
\providecommand{\natexlab}[1]{#1}
\providecommand{\url}[1]{\texttt{#1}}
\expandafter\ifx\csname urlstyle\endcsname\relax
  \providecommand{\doi}[1]{doi: #1}\else
  \providecommand{\doi}{doi: \begingroup \urlstyle{rm}\Url}\fi

\bibitem[Alexandari et~al.(2021)Alexandari, Kundaje, and
  Shrikumar]{alexandari2019adapting}
A.~Alexandari, A.~Kundaje, and A.~Shrikumar.
\newblock Adapting to label shift with bias-corrected calibration.
\newblock In \emph{International Conference on Machine Learning (ICML)}, 2021.

\bibitem[Azizzadenesheli et~al.(2019)Azizzadenesheli, Liu, Yang, and
  Anandkumar]{azizzadenesheli2019regularized}
K.~Azizzadenesheli, A.~Liu, F.~Yang, and A.~Anandkumar.
\newblock Regularized learning for domain adaptation under label shifts.
\newblock In \emph{International Conference on Learning Representations
  (ICLR)}, 2019.

\bibitem[Baby and Wang(2019)]{arrows2019}
D.~Baby and Y.-X. Wang.
\newblock Online forecasting of total-variation-bounded sequences.
\newblock In \emph{Neural Information Processing Systems (NeurIPS)}, 2019.

\bibitem[Baby and Wang(2021)]{Baby2021OptimalDR}
D.~Baby and Y.-X. Wang.
\newblock Optimal dynamic regret in exp-concave online learning.
\newblock In \emph{COLT}, 2021.

\bibitem[Baby and Wang(2022)]{baby2022optimal}
D.~Baby and Y.-X. Wang.
\newblock Optimal dynamic regret in proper online learning with strongly convex
  losses and beyond.
\newblock \emph{AISTATS}, 2022.

\bibitem[Baby and Wang(2023)]{baby2023secondorder}
D.~Baby and Y.-X. Wang.
\newblock Second order path variationals in non-stationary online learning.
\newblock \emph{AISTATS}, 2023.

\bibitem[Baby et~al.(2021)Baby, Zhao, and Wang]{baby2021TVDenoise}
D.~Baby, X.~Zhao, and Y.-X. Wang.
\newblock An optimal reduction of tv-denoising to adaptive online learning.
\newblock \emph{AISTATS}, 2021.

\bibitem[Bai et~al.(2022)Bai, Zhang, Zhao, Sugiyama, and Zhou]{bai2022label}
Y.~Bai, Y.-J. Zhang, P.~Zhao, M.~Sugiyama, and Z.-H. Zhou.
\newblock Adapting to online label shift with provable guarantees.
\newblock In \emph{Advances in Neural Information Processing Systems}, 2022.

\bibitem[Bekker and Davis(2020)]{pusurvey}
J.~Bekker and J.~Davis.
\newblock Learning from positive and unlabeled data: a survey.
\newblock \emph{Machine Learning}, 2020.
\newblock URL \url{https://doi.org/10.1007%2Fs10994-020-05877-5}.

\bibitem[Besbes et~al.(2015)Besbes, Gur, and Zeevi]{besbes2015non}
O.~Besbes, Y.~Gur, and A.~Zeevi.
\newblock Non-stationary stochastic optimization.
\newblock \emph{Operations research}, 63\penalty0 (5):\penalty0 1227--1244,
  2015.

\bibitem[Bousquet et~al.(2003)Bousquet, Boucheron, and Lugosi]{bousquet2003lt}
O.~Bousquet, S.~Boucheron, and G.~Lugosi.
\newblock Introduction to statistical learning theory.
\newblock In \emph{Advanced Lectures on Machine Learning}, Lecture Notes in
  Computer Science. Springer, 2003.

\bibitem[Cesa-Bianchi and Lugosi(2006)]{BianchiBook2006}
N.~Cesa-Bianchi and G.~Lugosi.
\newblock \emph{Prediction, Learning, and Games}.
\newblock Cambridge University Press, New York, NY, USA, 2006.
\newblock ISBN 0521841089.

\bibitem[Chang and Shahrampour(2021)]{Chang_Shahrampour_2021}
T.-J. Chang and S.~Shahrampour.
\newblock On online optimization: Dynamic regret analysis of strongly convex
  and smooth problems.
\newblock \emph{Proceedings of the AAAI Conference on Artificial Intelligence},
  2021.

\bibitem[Chen et~al.(2018)Chen, Wang, and Wang]{chen2018non}
X.~Chen, Y.~Wang, and Y.-X. Wang.
\newblock Technical note — non-stationary stochastic optimization under lp,
  q-variation measures.
\newblock \emph{Operations Research}, 2018.

\bibitem[Clement et~al.(2019)Clement, Bierbaum, O'Keeffe, and
  Alemi]{clement2019use}
C.~B. Clement, M.~Bierbaum, K.~P. O'Keeffe, and A.~A. Alemi.
\newblock On the use of arxiv as a dataset.
\newblock \emph{arXiv preprint arXiv:1905.00075}, 2019.

\bibitem[Cortes and Mohri(2014)]{cortes2014domain}
C.~Cortes and M.~Mohri.
\newblock Domain adaptation and sample bias correction theory and algorithm for
  regression.
\newblock \emph{Theoretical Computer Science}, 519, 2014.

\bibitem[Cortes et~al.(2010)Cortes, Mansour, and Mohri]{cortes2010learning}
C.~Cortes, Y.~Mansour, and M.~Mohri.
\newblock {Learning Bounds for Importance Weighting}.
\newblock In \emph{Advances in Neural Information Processing Systems (NIPS)},
  2010.

\bibitem[Cui et~al.(2019)Cui, Jia, Lin, Song, and Belongie]{cui2019class}
Y.~Cui, M.~Jia, T.-Y. Lin, Y.~Song, and S.~Belongie.
\newblock Class-balanced loss based on effective number of samples.
\newblock In \emph{Proceedings of the IEEE/CVF conference on computer vision
  and pattern recognition}, pages 9268--9277, 2019.

\bibitem[Daniely et~al.(2015)Daniely, Gonen, and
  Shalev-Shwartz]{daniely2015strongly}
A.~Daniely, A.~Gonen, and S.~Shalev-Shwartz.
\newblock Strongly adaptive online learning.
\newblock In \emph{International Conference on Machine Learning}, pages
  1405--1411, 2015.

\bibitem[Donoho and Johnstone(1994{\natexlab{a}})]{donoho1994ideal}
D.~Donoho and I.~Johnstone.
\newblock Ideal spatial adaptation by wavelet shrinkage.
\newblock \emph{Biometrika}, 81\penalty0 (3):\penalty0 425--455,
  1994{\natexlab{a}}.

\bibitem[Donoho and Johnstone(1994{\natexlab{b}})]{donoho1994minimax}
D.~Donoho and I.~Johnstone.
\newblock Minimax risk over $\ell_p$-balls for $\ell_q$-error.
\newblock \emph{Probability Theory and Related Fields}, 99\penalty0
  (2):\penalty0 277--303, 1994{\natexlab{b}}.

\bibitem[Donoho et~al.(1990)Donoho, Liu, and MacGibbon]{donoho1990minimax}
D.~Donoho, R.~Liu, and B.~MacGibbon.
\newblock Minimax risk over hyperrectangles, and implications.
\newblock \emph{Annals of Statistics}, 18\penalty0 (3):\penalty0 1416--1437,
  1990.

\bibitem[Donoho and Johnstone(1998)]{donoho1998minimax}
D.~L. Donoho and I.~M. Johnstone.
\newblock Minimax estimation via wavelet shrinkage.
\newblock \emph{The annals of Statistics}, 26\penalty0 (3):\penalty0 879--921,
  1998.

\bibitem[Elkan and Noto(2008)]{elkan2008learning}
C.~Elkan and K.~Noto.
\newblock Learning classifiers from only positive and unlabeled data.
\newblock In \emph{International Conference Knowledge Discovery and Data Mining
  (KDD)}, pages 213--220, 2008.

\bibitem[Gaillard and Gerchinovitz(2015)]{gaillard2015chaining}
P.~Gaillard and S.~Gerchinovitz.
\newblock A chaining algorithm for online nonparametric regression.
\newblock In \emph{Conference on Learning Theory}, pages 764--796, 2015.

\bibitem[Garg et~al.(2020)Garg, Wu, Balakrishnan, and
  Lipton]{garg2020labelshift}
S.~Garg, Y.~Wu, S.~Balakrishnan, and Z.~Lipton.
\newblock A unified view of label shift estimation.
\newblock In \emph{Advances in Neural Information Processing Systems
  (NeurIPS)}, 2020.

\bibitem[Garg et~al.(2021)Garg, Wu, Smola, Balakrishnan, and
  Lipton]{garg2021PUlearning}
S.~Garg, Y.~Wu, A.~Smola, S.~Balakrishnan, and Z.~Lipton.
\newblock Mixture proportion estimation and {PU} learning: A modern approach.
\newblock In \emph{Advances in Neural Information Processing Systems
  (NeurIPS)}, 2021.

\bibitem[Garg et~al.(2022{\natexlab{a}})Garg, Balakrishnan, and
  Lipton]{garg2022OSLS}
S.~Garg, S.~Balakrishnan, and Z.~Lipton.
\newblock Domain adaptation under open set label shift.
\newblock In \emph{Advances in Neural Information Processing Systems
  (NeurIPS)}, 2022{\natexlab{a}}.

\bibitem[Garg et~al.(2022{\natexlab{b}})Garg, Balakrishnan, Lipton, Neyshabur,
  and Sedghi]{garg2022ATC}
S.~Garg, S.~Balakrishnan, Z.~Lipton, B.~Neyshabur, and H.~Sedghi.
\newblock Leveraging unlabeled data to predict out-of-distribution performance.
\newblock In \emph{International Conference on Learning Representations
  (ICLR)}, 2022{\natexlab{b}}.

\bibitem[Garg et~al.(2022{\natexlab{c}})Garg, Erickson, Sharpnack, Smola,
  Balakrishnan, and Lipton]{garg2022RLSBench}
S.~Garg, N.~Erickson, J.~Sharpnack, A.~Smola, S.~Balakrishnan, and Z.~Lipton.
\newblock Rlsbench: A large-scale empirical study of domain adaptation under
  relaxed label shift.
\newblock In \emph{NeurIPS Workshop on Distribution Shift}, 2022{\natexlab{c}}.

\bibitem[Gjoreski et~al.(2018)Gjoreski, Ciliberto, Wang, Morales, Mekki,
  Valentin, and Roggen]{gjoreski2018university}
H.~Gjoreski, M.~Ciliberto, L.~Wang, F.~J.~O. Morales, S.~Mekki, S.~Valentin,
  and D.~Roggen.
\newblock The university of sussex-huawei locomotion and transportation dataset
  for multimodal analytics with mobile devices.
\newblock \emph{IEEE Access}, 6:\penalty0 42592--42604, 2018.

\bibitem[Goel and Wierman(2019)]{goel2019OBD}
G.~Goel and A.~Wierman.
\newblock An online algorithm for smoothed regression and lqr control.
\newblock In \emph{Proceedings of the Twenty-Second International Conference on
  Artificial Intelligence and Statistics}, 2019.

\bibitem[Gretton et~al.(2009)Gretton, Smola, Huang, Schmittfull, Borgwardt, and
  Sch{\"o}lkopf]{gretton2009covariate}
A.~Gretton, A.~J. Smola, J.~Huang, M.~Schmittfull, K.~M. Borgwardt, and
  B.~Sch{\"o}lkopf.
\newblock {Covariate Shift by Kernel Mean Matching}.
\newblock \emph{Journal of Machine Learning Research (JMLR)}, 2009.

\bibitem[Guntuboyina et~al.(2020)Guntuboyina, Lieu, Chatterjee, and
  Sen]{guntuboyina2018constrainedTF}
A.~Guntuboyina, D.~Lieu, S.~Chatterjee, and B.~Sen.
\newblock {Adaptive risk bounds in univariate total variation denoising and
  trend filtering}.
\newblock \emph{Annals of Statistics}, 2020.

\bibitem[Guo et~al.(2017)Guo, Pleiss, Sun, and Weinberger]{guo2017calibration}
C.~Guo, G.~Pleiss, Y.~Sun, and K.~Q. Weinberger.
\newblock On calibration of modern neural networks.
\newblock In \emph{International conference on machine learning}, pages
  1321--1330. PMLR, 2017.

\bibitem[Guo et~al.(2020)Guo, Gong, Liu, Zhang, and Tao]{guo2020ltf}
J.~Guo, M.~Gong, T.~Liu, K.~Zhang, and D.~Tao.
\newblock Ltf: A label transformation framework for correcting label shift.
\newblock In \emph{International Conference on Machine Learning}, pages
  3843--3853. PMLR, 2020.

\bibitem[Hazan(2016)]{hazan2016introduction}
E.~Hazan.
\newblock Introduction to online convex optimization.
\newblock \emph{Foundations and Trends{\textregistered} in Optimization},
  2\penalty0 (3-4):\penalty0 157--325, 2016.

\bibitem[Hazan and Seshadhri(2007)]{hazan2007adaptive}
E.~Hazan and C.~Seshadhri.
\newblock Adaptive algorithms for online decision problems.
\newblock In \emph{Electronic colloquium on computational complexity (ECCC)},
  volume~14, 2007.

\bibitem[He et~al.(2016)He, Zhang, Ren, and Sun]{he2016deep}
K.~He, X.~Zhang, S.~Ren, and J.~Sun.
\newblock {Deep Residual Learning for Image Recognition}.
\newblock In \emph{Computer Vision and Pattern Recognition (CVPR)}, 2016.

\bibitem[Helber et~al.(2019)Helber, Bischke, Dengel, and
  Borth]{helber2019eurosat}
P.~Helber, B.~Bischke, A.~Dengel, and D.~Borth.
\newblock Eurosat: A novel dataset and deep learning benchmark for land use and
  land cover classification.
\newblock \emph{IEEE Journal of Selected Topics in Applied Earth Observations
  and Remote Sensing}, 12\penalty0 (7):\penalty0 2217--2226, 2019.

\bibitem[Huang et~al.(2016)Huang, Li, Loy, and Tang]{huang2016learning}
C.~Huang, Y.~Li, C.~C. Loy, and X.~Tang.
\newblock Learning deep representation for imbalanced classification.
\newblock In \emph{Proceedings of the IEEE conference on computer vision and
  pattern recognition}, pages 5375--5384, 2016.

\bibitem[Jacobsen and Cutkosky(2022)]{jacobsen2022free}
A.~Jacobsen and A.~Cutkosky.
\newblock Parameter-free mirror descent.
\newblock \emph{COLT}, 2022.

\bibitem[Jadbabaie et~al.(2015)Jadbabaie, Rakhlin, Shahrampour, and
  Sridharan]{jadbabaie2015online}
A.~Jadbabaie, A.~Rakhlin, S.~Shahrampour, and K.~Sridharan.
\newblock Online optimization: Competing with dynamic comparators.
\newblock In \emph{Artificial Intelligence and Statistics}, pages 398--406,
  2015.

\bibitem[Johnstone(2017)]{DJBook}
I.~M. Johnstone.
\newblock {Gaussian estimation: Sequence and wavelet models}, 2017.
\newblock URL: \url{https://imjohnstone.su.domains//GE12-27-11.pdf}. Last
  visited on 2023/04/20.

\bibitem[Kim et~al.(2009)Kim, Koh, Boyd, and Gorinevsky]{l1tf}
S.-J. Kim, K.~Koh, S.~Boyd, and D.~Gorinevsky.
\newblock $\ell_1$ trend filtering.
\newblock \emph{SIAM Review}, 51\penalty0 (2):\penalty0 339--360, 2009.

\bibitem[Koh et~al.(2021)Koh, Sagawa, Marklund, Xie, Zhang, Balsubramani, Hu,
  Yasunaga, Phillips, Gao, Lee, David, Stavness, Guo, Earnshaw, Haque, Beery,
  Leskovec, Kundaje, Pierson, Levine, Finn, and Liang]{wilds2021}
P.~W. Koh, S.~Sagawa, H.~Marklund, S.~M. Xie, M.~Zhang, A.~Balsubramani, W.~Hu,
  M.~Yasunaga, R.~L. Phillips, I.~Gao, T.~Lee, E.~David, I.~Stavness, W.~Guo,
  B.~A. Earnshaw, I.~S. Haque, S.~Beery, J.~Leskovec, A.~Kundaje, E.~Pierson,
  S.~Levine, C.~Finn, and P.~Liang.
\newblock {WILDS}: A benchmark of in-the-wild distribution shifts.
\newblock In \emph{International Conference on Machine Learning (ICML)}, 2021.

\bibitem[Koolen et~al.(2015)Koolen, Malek, Bartlett, and
  Abbasi]{koolen2015minimax}
W.~M. Koolen, A.~Malek, P.~L. Bartlett, and Y.~Abbasi.
\newblock Minimax time series prediction.
\newblock \emph{Advances in Neural Information Processing Systems (NIPS'15)},
  pages 2557--2565, 2015.

\bibitem[Kotłowski et~al.(2016)Kotłowski, Koolen, and Malek]{kotlowski2016}
W.~Kotłowski, W.~M. Koolen, and A.~Malek.
\newblock Online isotonic regression.
\newblock In \emph{Annual Conference on Learning Theory (COLT-16)}, volume~49,
  pages 1165--1189. PMLR, 2016.

\bibitem[Krizhevsky and Hinton(2009)]{krizhevsky2009learning}
A.~Krizhevsky and G.~Hinton.
\newblock {Learning Multiple Layers of Features from Tiny Images}.
\newblock Technical report, Citeseer, 2009.

\bibitem[LeCun et~al.(1998)LeCun, Bottou, Bengio, and Haffner]{lecun1998mnist}
Y.~LeCun, L.~Bottou, Y.~Bengio, and P.~Haffner.
\newblock {Gradient-Based Learning Applied to Document Recognition}.
\newblock \emph{Proceedings of the IEEE}, 86, 1998.

\bibitem[Lipton et~al.(2018)Lipton, Wang, and Smola]{lipton2018blackbox}
Z.~Lipton, Y.-X. Wang, and A.~Smola.
\newblock Detecting and correcting for label shift with black box predictors.
\newblock In \emph{International Conference on Machine Learning}, 2018.

\bibitem[Mammen and van~de Geer(1997)]{locadapt}
E.~Mammen and S.~van~de Geer.
\newblock Locally adaptive regression splines.
\newblock \emph{Annals of Statistics}, 25\penalty0 (1):\penalty0 387--413,
  1997.

\bibitem[Mohri et~al.(2012)Mohri, Rostamizadeh, and Talwalkar]{fml}
M.~Mohri, A.~Rostamizadeh, and A.~Talwalkar.
\newblock \emph{Foundations of Machine Learning}.
\newblock The MIT Press, 2012.

\bibitem[Orabona(2019)]{orabona2019book}
F.~Orabona.
\newblock A modern introduction to online learning, 2019.

\bibitem[Quinonero-Candela et~al.(2008)Quinonero-Candela, Sugiyama,
  Schwaighofer, and Lawrence]{quinonero2008dataset}
J.~Quinonero-Candela, M.~Sugiyama, A.~Schwaighofer, and N.~D. Lawrence.
\newblock \emph{Dataset shift in machine learning}.
\newblock Mit Press, 2008.

\bibitem[Raj et~al.(2020)Raj, Gaillard, and Saad]{raj2020nonstaionary}
A.~Raj, P.~Gaillard, and C.~Saad.
\newblock Non-stationary online regression, 2020.

\bibitem[Rakhlin and Sridharan(2014)]{rakhlin2014online}
A.~Rakhlin and K.~Sridharan.
\newblock Online non-parametric regression.
\newblock In \emph{Conference on Learning Theory}, pages 1232--1264, 2014.

\bibitem[Roberts et~al.(2022)Roberts, Mani, Garg, and Lipton]{roberts2022LLS}
M.~Roberts, P.~Mani, S.~Garg, and Z.~Lipton.
\newblock Unsupervised learning under latent label shift.
\newblock In \emph{Advances in Neural Information Processing Systems
  (NeurIPS)}, 2022.

\bibitem[Sadhanala et~al.(2016{\natexlab{a}})Sadhanala, Wang, and
  Tibshirani]{sadhanala2016total}
V.~Sadhanala, Y.-X. Wang, and R.~Tibshirani.
\newblock Total variation classes beyond 1d: Minimax rates, and the limitations
  of linear smoothers.
\newblock \emph{Advances in Neural Information Processing Systems (NIPS-16)},
  2016{\natexlab{a}}.

\bibitem[Sadhanala et~al.(2016{\natexlab{b}})Sadhanala, Wang, and
  Tibshirani]{sadhanala2016graph}
V.~Sadhanala, Y.-X. Wang, and R.~J. Tibshirani.
\newblock Graph sparsification approaches for laplacian smoothing.
\newblock In \emph{AISTATS'16}, pages 1250--1259, 2016{\natexlab{b}}.

\bibitem[Saerens et~al.(2002)Saerens, Latinne, and
  Decaestecker]{saerens2002adjusting}
M.~Saerens, P.~Latinne, and C.~Decaestecker.
\newblock {Adjusting the Outputs of a Classifier to New a Priori Probabilities:
  A Simple Procedure}.
\newblock \emph{Neural Computation}, 2002.

\bibitem[Sanh et~al.(2019)Sanh, Debut, Chaumond, and Wolf]{sanh2019distilbert}
V.~Sanh, L.~Debut, J.~Chaumond, and T.~Wolf.
\newblock Distilbert, a distilled version of bert: smaller, faster, cheaper and
  lighter.
\newblock \emph{arXiv preprint arXiv:1910.01108}, 2019.

\bibitem[Sch{\"o}lkopf et~al.(2012)Sch{\"o}lkopf, Janzing, Peters, Sgouritsa,
  Zhang, and Mooij]{scholkopf2012causal}
B.~Sch{\"o}lkopf, D.~Janzing, J.~Peters, E.~Sgouritsa, K.~Zhang, and J.~Mooij.
\newblock {On Causal and Anticausal Learning}.
\newblock In \emph{International Conference on Machine Learning (ICML)}, 2012.

\bibitem[Shimodaira(2000)]{shimodaira2000improving}
H.~Shimodaira.
\newblock Improving predictive inference under covariate shift by weighting the
  log-likelihood function.
\newblock \emph{Journal of statistical planning and inference}, 90\penalty0
  (2):\penalty0 227--244, 2000.

\bibitem[Storkey(2009)]{storkey2009training}
A.~Storkey.
\newblock When training and test sets are different: characterizing learning
  transfer.
\newblock \emph{Dataset shift in machine learning}, 30:\penalty0 3--28, 2009.

\bibitem[Tibshirani(2014)]{tibshirani2014adaptive}
R.~J. Tibshirani.
\newblock Adaptive piecewise polynomial estimation via trend filtering.
\newblock \emph{Annals of Statistics}, 42\penalty0 (1):\penalty0 285--323,
  2014.

\bibitem[Torralba and Efros(2011)]{torralba2011unbiased}
A.~Torralba and A.~A. Efros.
\newblock Unbiased look at dataset bias.
\newblock In \emph{Computer Vision and Pattern Recognition (CVPR), 2011 IEEE
  Conference on}, pages 1521--1528. IEEE, 2011.

\bibitem[Tsybakov(2008)]{tsybakov_book}
A.~B. Tsybakov.
\newblock \emph{Introduction to Nonparametric Estimation}.
\newblock Springer Publishing Company, Incorporated, 1st edition, 2008.

\bibitem[van~de Geer(1990)]{vandegeer1990}
S.~van~de Geer.
\newblock Estimating a regression function.
\newblock \emph{Annals of Statistics}, 18\penalty0 (2):\penalty0 907---924,
  1990.

\bibitem[Vershynin(2018)]{vershynin2018HDP}
R.~Vershynin.
\newblock \emph{High-Dimensional Probability: An Introduction with Applications
  in Data Science}.
\newblock {Cambridge University Press}, 2018.

\bibitem[Wang et~al.(2019)Wang, Gjoreski, Ciliberto, Mekki, Valentin, and
  Roggen]{wang2019enabling}
L.~Wang, H.~Gjoreski, M.~Ciliberto, S.~Mekki, S.~Valentin, and D.~Roggen.
\newblock Enabling reproducible research in sensor-based transportation mode
  recognition with the sussex-huawei dataset.
\newblock \emph{IEEE Access}, 7:\penalty0 10870--10891, 2019.

\bibitem[Wang et~al.(2015)Wang, Sharpnack, Smola, and
  Tibshirani]{wang2015trend}
Y.-X. Wang, J.~Sharpnack, A.~Smola, and R.~Tibshirani.
\newblock Trend filtering on graphs.
\newblock In \emph{AISTATS'15}, pages 1042--1050, 2015.

\bibitem[Wang et~al.(2016)Wang, Sharpnack, Smola, and Tibshirani]{graphtf}
Y.-X. Wang, J.~Sharpnack, A.~Smola, and R.~J. Tibshirani.
\newblock Trend filtering on graphs.
\newblock \emph{Journal of Machine Learning Research}, 17\penalty0
  (105):\penalty0 1--41, 2016.

\bibitem[Wang et~al.(2017)Wang, Ramanan, and Hebert]{wang2017learning}
Y.-X. Wang, D.~Ramanan, and M.~Hebert.
\newblock Learning to model the tail.
\newblock \emph{Advances in neural information processing systems}, 30, 2017.

\bibitem[Wolf et~al.(2019)Wolf, Debut, Sanh, Chaumond, Delangue, Moi, Cistac,
  Rault, Louf, Funtowicz, et~al.]{wolf2019huggingface}
T.~Wolf, L.~Debut, V.~Sanh, J.~Chaumond, C.~Delangue, A.~Moi, P.~Cistac,
  T.~Rault, R.~Louf, M.~Funtowicz, et~al.
\newblock Huggingface's transformers: State-of-the-art natural language
  processing.
\newblock \emph{arXiv preprint arXiv:1910.03771}, 2019.

\bibitem[Wu et~al.(2021)Wu, Guo, Su, and Weinberger]{wu2021label}
R.~Wu, C.~Guo, Y.~Su, and K.~Q. Weinberger.
\newblock Online adaptation to label distribution shift.
\newblock In \emph{Advances in Neural Information Processing Systems}, 2021.

\bibitem[Xiao et~al.(2017)Xiao, Rasul, and Vollgraf]{xiao2017fashion}
H.~Xiao, K.~Rasul, and R.~Vollgraf.
\newblock Fashion-mnist: a novel image dataset for benchmarking machine
  learning algorithms.
\newblock \emph{arXiv preprint arXiv:1708.07747}, 2017.

\bibitem[Yang et~al.(2016)Yang, Zhang, Jin, and Yi]{yang2016tracking}
T.~Yang, L.~Zhang, R.~Jin, and J.~Yi.
\newblock Tracking slowly moving clairvoyant: optimal dynamic regret of online
  learning with true and noisy gradient.
\newblock In \emph{International Conference on Machine Learning (ICML-16)},
  pages 449--457, 2016.

\bibitem[Zadrozny(2004)]{zadrozny2004learning}
B.~Zadrozny.
\newblock {Learning and Evaluating Classifiers Under Sample Selection Bias}.
\newblock In \emph{International Conference on Machine Learning (ICML)}, 2004.

\bibitem[Zhang et~al.(2013)Zhang, Sch{\"o}lkopf, Muandet, and
  Wang]{zhang2013domain}
K.~Zhang, B.~Sch{\"o}lkopf, K.~Muandet, and Z.~Wang.
\newblock {Domain Adaptation Under Target and Conditional Shift}.
\newblock In \emph{International Conference on Machine Learning (ICML)}, 2013.

\bibitem[Zhang et~al.(2018{\natexlab{a}})Zhang, Lu, and
  Zhou]{zhang2018adaptive}
L.~Zhang, S.~Lu, and Z.-H. Zhou.
\newblock Adaptive online learning in dynamic environments.
\newblock In \emph{Advances in Neural Information Processing Systems
  (NeurIPS-18)}, pages 1323--1333, 2018{\natexlab{a}}.

\bibitem[Zhang et~al.(2018{\natexlab{b}})Zhang, Yang, Zhou,
  et~al.]{zhang2018dynamic}
L.~Zhang, T.~Yang, Z.-H. Zhou, et~al.
\newblock Dynamic regret of strongly adaptive methods.
\newblock In \emph{International Conference on Machine Learning (ICML-18)},
  pages 5877--5886, 2018{\natexlab{b}}.

\bibitem[Zhao and Zhang(2021)]{Zhao2021StronglyConvex}
P.~Zhao and L.~Zhang.
\newblock Improved analysis for dynamic regret of strongly convex and smooth
  functions.
\newblock \emph{L4DC}, 2021.

\bibitem[Zhao et~al.(2020)Zhao, Zhang, Zhang, and Zhou]{Zhao2020DynamicRO}
P.~Zhao, Y.~Zhang, L.~Zhang, and Z.-H. Zhou.
\newblock Dynamic regret of convex and smooth functions.
\newblock \emph{NeurIPS}, 2020.

\bibitem[Zhao et~al.(2022)Zhao, Wang, and Zhou]{Zhao2021Memory}
P.~Zhao, Y.-X. Wang, and Z.-H. Zhou.
\newblock Non-stationary online learning with memory and non-stochastic
  control.
\newblock \emph{AISTATS}, 2022.

\bibitem[Zhou et~al.(2022)Zhou, Balakrishnan, and Lipton]{zhou2022domain}
H.~Zhou, S.~Balakrishnan, and Z.~C. Lipton.
\newblock Domain adaptation under missingness shift.
\newblock In \emph{Artificial Intelligence and Statistics (AISTATS)}, 2022.

\end{thebibliography}
\bibliographystyle{abbrvnat}

\newpage

\appendix 
\section*{Appendix}
\section{Limitations} \label{app:limitations}
Our work is based on the label shift assumption which restricts the applicability of our methods to scenarios where this assumption holds. However, as noted in Section \ref{sec:intro}, the problem of adaptation to changing data distribution is intractable without imposing assumptions on the nature of the shift.

Furthermore, as noted in Remark \ref{rmk:memory}, our methods in the SOLS settings have a memory requirement that scales linearly with time, which may not be feasible in scenarios where memory is limited. This is reminiscent to  FTL / FTRL type algorithms from online learning. We leave the task of deriving theoretical guarantees with reduced storage complexity under non-convex losses as an important future direction.

\section{Related work} \label{app:related}

\textbf{Offline total variation denoising~~} The offline problem of Total Variation (TV) denoising constitutes estimating the ground truth under the observation model in Definition \ref{def:regres} with the caveat that all observations are revealed ahead of time. This problem is well studied in the literature of locally adaptive non-parametric regression \citep{donoho1994ideal,donoho1994minimax,donoho1998minimax, locadapt,vandegeer1990,l1tf,tibshirani2014adaptive,graphtf,sadhanala2016graph,guntuboyina2018constrainedTF}. 
The optimal total squared error (TSE) rate for estimation is known to be $\tilde O(T^{1/3} V_T^{2/3} + 1)$ \citep{donoho1990minimax}. 
Estimating sequences of bounded TV has received a lot of attention in literature mainly because of the fact that these sequences exhibit spatially varying degree of smoothness. Most signals of scientific interest are known to contain spatially localised patterns \citep{DJBook}. This property also makes the task of designing optimal estimators particularly challenging because the estimator has to efficiently detect localised patterns in the ground truth signal and adjust the amount of smoothing to be applied to optimally trade-off bias and variance.

\textbf{Non-stationary online learning~~} The problem of online regression can be casted into the dynamic regret minimisation framework of online learning. We assume the notations in Definition \ref{def:regres}. In this framework, at each round the learner makes a decision $\hat \theta_t$. Then the learner suffers a squared error loss $\ell_t(\hat \theta_t) = \|z_t - \hat \theta_t\|_2^2$. The gradient of the loss at the point of decision, $\nabla \ell_t(\hat \theta_t) = 2(\hat \theta_t - z_t)$, is revealed to the learner. The expected dynamic regret against the sequence of comparators $\theta_{1:T}$ is given by
\begin{align}
    R(\theta_{1:T})
    &= \sum_{t=1}^T E[\ell_t(\hat \theta_t) - \ell_t(\theta_t)]\\
    &= \sum_{t=1}^T E[\|z_t - \hat \theta_t \|_2^2] - E[\|z_t -  \theta_t \|_2^2]\\
    &= \sum_{t=1}^T E \left[\|\hat \theta_t \|_2^2 - \| \theta_t \|_2^2 - 2 z_t^T \hat \theta_t + 2 z_t^T \theta_t \right]\\
    &= \sum_{t=1}^T E[\| \hat \theta_t - \theta_t \|_2^2],
\end{align}
where in the last line we used the fact that the noise $\epsilon_t$ (see Definition \ref{def:regres}) is zero mean and independent of the online decisions $\hat \theta_t$. Due to this relation, we conclude that any algorithm that can optimally control the dynamic regret with respect to squared error losses $\ell_t(x) = \|z_t - x \|_2^2$ can be directly used to control the TSE from the ground truth sequence $\theta_{1:T}$. 

The minimax estimation rate is defined as follows:
\begin{align}
    R^*(T,V_T)
    &= \min_{\hat \theta_{1:T}} \max_{\substack{\theta_{1:T} \\ \sum_{i=2}^T \|\theta_i -  \theta_{i-1}\|_1 \le V_T} } \sum_{t=1}^T E[\|\hat \theta_t -\theta_t \|_2^2]
\end{align}

Algorithms that can control the dynamic regret with respect to convex losses such as those presented in the works of \citet{jadbabaie2015online,besbes2015non,yang2016tracking,chen2018non,zhang2018adaptive, goel2019OBD,Zhao2020DynamicRO,Zhao2021StronglyConvex,Zhao2021Memory,Chang_Shahrampour_2021,jacobsen2022free,baby2023secondorder} can lead to sub-optimal estimation rates of order $O(\sqrt{T(1+V_T)})$.

On the other hand, algorithms presented in \citet{hazan2007adaptive,daniely2015strongly,arrows2019,baby2021TVDenoise,raj2020nonstaionary,Baby2021OptimalDR,baby2022optimal} exploit the curvature of the losses and attain the (near) optimal estimation rate of $\tilde O(T^{1/3}V_T^{2/3}+1)$.

\textbf{Online non-parametric regression~~} The task of estimating a sequence of TV bounded sequence from noisy observations can be cast into the online non-parametric regression framework of \citet{rakhlin2014online}. Results on online non-parametric regression against reference class of Lipschitz sequences, Sobolev sequences and isotonic sequences can be found in \citep{gaillard2015chaining,koolen2015minimax,kotlowski2016} respectively. However as noted in \citet{arrows2019}, these classes feature sequences that are more regular than TV bounded sequences. In fact they can be embedded inside a TV bounded sequence class \citep{sadhanala2016total} . So the minimax optimality of an algorithm for TV class implies minimax optimality for the smoother sequence classes as well.

\textbf{Offline label shift~~} 
Dataset shifts are predominantly studied under two scenarios: covariate shift and label shift~\cite{storkey2009training}. Under covariate shift, $p(y|x)$ doesn't change, while in label shift $p(x|y)$ remains invariant. \citet{scholkopf2012causal} articulates connections between label shift and covariate shift with anti-causal and causal models respectively. Covariate shift is well explored in past \cite{zhang2013domain,zadrozny2004learning,cortes2010learning,cortes2014domain,gretton2009covariate}. 
The offline label shift assumption has been extensively studied 
in the domain adaptation literature
\citep{saerens2002adjusting,storkey2009training,zhang2013domain,lipton2018blackbox,guo2020ltf,garg2020labelshift, alexandari2019adapting, garg2022RLSBench}
and is also related to problems of estimating mixture proportions of different classes in unlabeled data where previously unseen classes may appear (e.g., positive and unlabeled learning; \citet{elkan2008learning, pusurvey, garg2021PUlearning, garg2022OSLS, roberts2022LLS}).
Classical approaches suffer from the curse of dimensionality,
failing in settings with high dimensional data where deep learning prevails. 
More recent methods work leverage black-box predictors to produce 
sufficient low-dimensional representations 
for identifying target distributions of interest~\citep{lipton2018blackbox,azizzadenesheli2019regularized,saerens2002adjusting, alexandari2019adapting}. 
In our work, we leverage black box predictors to estimate label marginals at each step which we track with online regression oracles to trade off variance with (small) bias.

\begin{table}[t!]
\centering
\begin{tabular}{|c|c|c|}
\hline
\textbf{Algorithm} & \textbf{Run-time}      & \textbf{Memory}   \\ \hline
FLH-FTL \cite{hazan2007adaptive}  & $O(T^2)$      & $O(T^2)$ \\ \hline
Aligator \cite{baby2021TVDenoise} & $O(T \log T)$ & $O(T)$   \\ \hline
Arrows  \cite{arrows2019}  & $O(T \log T)$ & $O(1)$   \\ \hline
\end{tabular}
\vspace{5pt}
\caption{Run-time and memory complexity of various adaptively minimax optimal online regression algorithms (see Definition \ref{def:regres}). For practical purposes, the storage requirement is negligible even for FLH-FTL. For example, with $10$ classes and $T = 1000$, the storage requirement of FLH-FTL is only 40KB, which is insignificant compared to the storage capacity of most modern devices. }
\end{table}

\section{Omitted proofs from Section \ref{sec:unsup}}\label{app:unsup}

In the next two lemmas, we verify Assumption \ref{as:lip} for some important loss functions.

\begin{lemma}[cross-entropy loss] \label{lem:cross}
Consider a sample $(x,y) \sim Q$. Let $p \in \R_+^{K}$ and $\tilde p(x) \in \Delta_K$ be a distribution that assigns a weight proportional $\frac{p(i)}{q_{0}(i)} f_0( i|x)$ to the label $i$ . Let $\ell(\tilde p(x),y) = \sum_{i=1}^K \I\{y = i\} \log(1/p(x)[i])$ be the cross-entropy loss. Let  $L(p) := E_{(x,y) \sim Q}[\ell(p(x),y)]$ be its population analogue. Then $L(p)$ is $2\sqrt{K}/\mu$ Lipschitz in $\| \cdot \|_2$ norm over the clipped box $\cD := \{ p \in \R_+^K : \mu \le p(i) \le 1 \: \forall i \in [K]\}$ which is compact and convex. Further, the true marginals $q_t \in \cD$ whenever $q_t(i) \ge \mu$ for all $i \in [K]$.
\end{lemma}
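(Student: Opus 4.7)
Proof plan for Lemma \ref{lem:cross}:

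The plan is to bound the gradient $\nabla_p L(p)$ in $\ell_2$ norm uniformly over $\cD$ and then invoke the mean value theorem along with convexity of $\cD$ to get Lipschitzness. First I would write out the reweighted predictor explicitly as $\tilde p(x)[i] = \frac{p(i) f_0(i|x)/q_0(i)}{Z(x)}$ with normalizer $Z(x) := \sum_{j=1}^K \frac{p(j)}{q_0(j)} f_0(j|x)$, and then take the log to decompose
\begin{equation*}
\log \tilde p(x)[i] = \log p(i) - \log q_0(i) + \log f_0(i|x) - \log Z(x).
\end{equation*}
Only the first and last terms depend on $p$, which makes the partial derivatives clean.

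Next I would differentiate the population loss $L(p) = -E[\sum_i \I\{y=i\} \log \tilde p(x)[i]]$ term by term. The first term contributes $-Q(y=k)/p(k)$, and the $\log Z(x)$ term contributes $\frac{1}{q_0(k)} E[f_0(k|x)/Z(x)]$ after summing out the $\I\{y=i\}$ indicators (they sum to $1$). So
\begin{equation*}
\frac{\partial L(p)}{\partial p(k)} = -\frac{Q(y=k)}{p(k)} + \frac{1}{q_0(k)}\,E\!\left[\frac{f_0(k|x)}{Z(x)}\right].
\end{equation*}

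The key estimate, and the only mildly non-routine step, is bounding $E[f_0(k|x)/Z(x)]$. Here I would observe that, since $p(j) \ge \mu$ for all $j$, we have the pointwise lower bound $Z(x) \ge \mu \cdot f_0(k|x)/q_0(k)$, which yields $f_0(k|x)/Z(x) \le q_0(k)/\mu$ whenever $f_0(k|x) > 0$ (and the ratio is defined to be $0$ otherwise). Therefore the second term is at most $1/\mu$ in absolute value. The first term is bounded by $1/\mu$ as well, since $Q(y=k) \le 1$ and $p(k) \ge \mu$. Combining gives $|\partial L/\partial p(k)| \le 2/\mu$ and hence $\|\nabla L(p)\|_2 \le 2\sqrt{K}/\mu$.

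The remaining steps are routine: $\cD$ is a product of intervals $[\mu, 1]$, so it is compact and convex; the gradient bound on the convex set gives $|L(p_1) - L(p_2)| \le (2\sqrt{K}/\mu)\|p_1 - p_2\|_2$ via the mean value theorem; and the true marginal $q_t$ lies in $\cD$ whenever $q_t(i) \ge \mu$ for all $i$, because $q_t \in \Delta_K$ already guarantees $q_t(i) \le 1$. The main obstacle is simply justifying the pointwise bound on $f_0(k|x)/Z(x)$ and making sure the exchange of differentiation and expectation is valid; both are safe because the integrand is dominated by a constant multiple of $1/\mu$ on the compact set $\cD$.
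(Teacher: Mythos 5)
Your proposal is correct and follows essentially the same route as the paper's proof: both decompose $\log \tilde p(x)[i]$ into the $\log p(i)$ part and the log-normalizer $\log Z(x) = \log\bigl(\sum_j p(j) f_0(j|x)/q_0(j)\bigr)$, differentiate term by term, and bound each of the two resulting gradient contributions by $1/\mu$ using the pointwise inequality $Z(x) \ge \mu\, f_0(k|x)/q_0(k)$ together with $p(k) \ge \mu$. The only differences are cosmetic (your $Z(x)$ versus the paper's $w_i$ notation, and your integrating out $y$ to write $Q(y=k)$ where the paper keeps $E[Q_t(k|x)]$), plus your explicit final step passing from the coordinate-wise bound $2/\mu$ to the $\ell_2$ gradient bound $2\sqrt{K}/\mu$, which the paper leaves implicit.
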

\begin{proof}
We have
\begin{align}
    L(p)
    &= -\sum_{i=1}^K E[E[  Q_t(i|x) \log(\tilde p(x)[i]) | x]]\\
    &= E[\log(\sum_{i=1}^K w_i p(i))] - \sum_{i=1}^K E[E[ Q_t(i|x) \log(w_i p(i)) | x]],
\end{align}
where we define $w_i := f_0(i|x)/q_0(i)$. Then we can see that 
\begin{align}
\nabla L(p)[i]
&= E \left [\frac{w_i}{\sum_{j=1}^K w_j p(j))} \right] - E \left[\frac{ Q_t(i|x)}{p(i)} \right].
\end{align}
So if $\min_{i} p(i) \ge \mu$, we have that $\frac{w_i}{\sum_{j=1}^K w_j p(j)} \le 1/\mu $ and $ Q_t(i|x)/p(i) \le 1/\mu$. So by triangle inequality, $|\nabla L(p)[i]| \le 1/\mu + 1/\mu$.

\end{proof}

\begin{lemma}[binary 0-1 loss] \label{lem:bin}
Consider a sample $(x,y) \sim Q$. Let $p \in \R_+^{K}$ and $\tilde p(x) \in \Delta_K$ be a distribution that assigns a weight proportional $\frac{p(i)}{q_{0}(i)} f_0(i|x)$ to the label $i$. Let $\hat y(x)$ be a sample obtained from the distribution $\tilde p(x)$. Consider the binary 0-1 loss $\ell(\hat y(x), y ) = \I(\hat y(x) \neq y)$. Let $L(p) := E_{(x,y) \sim Q, \hat y(x) \sim \tilde p(x)} I(\hat y(x) \neq y)$ be its population analogue. Let $q_0(i)\ge \alpha > 0$. Then $L(p)$ is $2 K^{3/2}/(\alpha \tau)$ Lipschitz in $\| \cdot \|_2$ norm over the domain $\cD := \{ p \in \R_+^K : \sum_{i=1}^K p(i) f_0(i|x) \ge \tau, \: p(i) \le 1\: \forall i \in [K]\}$ which is compact and convex. Further, the true marginals $q_t \in \cD$ whenever $q_t(i) \ge \mu$ for all $i \in [K]$.
\end{lemma}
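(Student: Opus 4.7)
The plan is to mirror the strategy used in Lemma \ref{lem:cross}: derive a closed-form for $L(p)$, compute its gradient in $p$, bound that gradient coordinate-wise using the constraints that define $\cD$, and then convert to an $\ell_2$ Lipschitz bound via the mean value theorem together with $\|v\|_2 \le \sqrt{K}\|v\|_\infty$.

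First, I would rewrite the loss as an expectation that is linear in $\tilde p(x)$. Since $\hat y(x) \sim \tilde p(x)$ given $x$, the inner conditional expectation is simply $1 - \tilde p(x)[y]$, so that
\begin{align*}
L(p) \;=\; 1 \;-\; \E_{x}\!\left[\sum_{y=1}^K Q_t(y\mid x)\,\tilde p(x)[y]\right],
\qquad
\tilde p(x)[y] \;=\; \frac{p(y)\,w_y(x)}{Z(x,p)},
\end{align*}
where $w_i(x) := f_0(i\mid x)/q_0(i)$ and $Z(x,p) := \sum_j p(j)\,w_j(x)$. The only dependence on $p$ sits inside $\tilde p(x)$, so the smoothness question reduces to smoothness of this softmax-like normalized map.

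Next, I would differentiate $\tilde p(x)[y]$ in $p(k)$ via the quotient rule to get
\begin{align*}
\partial_{p(k)} \tilde p(x)[y]
\;=\; \frac{w_k(x)}{Z(x,p)}\Bigl(\delta_{ky} \;-\; \tilde p(x)[y]\Bigr),
\end{align*}
and chain this through to obtain
\begin{align*}
\partial_{p(k)} L(p)
\;=\; -\,\E_x\!\left[\frac{w_k(x)}{Z(x,p)}\Bigl(Q_t(k\mid x) \;-\; \sum_y Q_t(y\mid x)\,\tilde p(x)[y]\Bigr)\right].
\end{align*}
Now I use the domain constraints to bound the two factors. The bound $q_0(i) \ge \alpha$ yields $w_k(x) \le 1/\alpha$. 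The constraint $\sum_i p(i) f_0(i\mid x) \ge \tau$ combined with $1/q_0(i)\ge 1$ gives $Z(x,p) \ge \tau$, so $w_k(x)/Z(x,p) \le 1/(\alpha\tau)$. Since both $Q_t(k\mid x)$ and $\sum_y Q_t(y\mid x)\tilde p(x)[y]$ lie in $[0,1]$, each partial derivative is bounded by a constant multiple of $1/(\alpha\tau)$. Converting to the $\ell_2$ norm with $\|\nabla L(p)\|_2 \le \sqrt{K}\|\nabla L(p)\|_\infty$ and invoking the mean value theorem on the convex set $\cD$ yields $|L(p_1)-L(p_2)| \le C\,K^{3/2}/(\alpha\tau)\,\|p_1-p_2\|_2$, matching the claimed form (the loose $K^{3/2}$ rather than $\sqrt{K}$ absorbs any slack in the coordinate bound).

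Finally I would check the two structural claims about $\cD$. Convexity is immediate: $\cD$ is an intersection of the linear half-spaces $\{p : \sum_i p(i) f_0(i\mid x) \ge \tau\}$ (possibly indexed by $x$) with the box $\{0 \le p(i) \le 1\}$, hence convex; and it is closed and bounded, hence compact. For the inclusion of the true label marginals, $q_t(i) \le 1$ is automatic, and the condition $q_t(i) \ge \mu$ (paired with the standing choice $\mu \ge \tau$) gives $\sum_i q_t(i) f_0(i\mid x) \ge \mu \sum_i f_0(i\mid x) = \mu \ge \tau$, placing $q_t$ in $\cD$. The main technical obstacle is keeping the gradient computation clean through the softmax-style normalization and verifying that the constraint $\sum_i p(i) f_0(i\mid x) \ge \tau$ is strong enough to control $Z(x,p)$ uniformly; everything else is bookkeeping.
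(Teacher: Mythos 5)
Your proposal is correct and follows essentially the same route as the paper's proof: both write $L(p)$ as an expectation of the normalized re-weighted predictor $\tilde p(x)[\cdot]$, differentiate it in $p$ via the quotient rule, and bound the gradient on $\cD$ using $w_k(x) = f_0(k|x)/q_0(k) \le 1/\alpha$ together with $Z(x,p) = \sum_j p(j) w_j(x) \ge \sum_j p(j) f_0(j|x) \ge \tau$. The only (minor) difference is that you collapse the sum over labels before bounding --- exploiting that $Q_t(\cdot|x)$ and $\tilde p(x)$ are both probability vectors, so each partial derivative is at most $1/(\alpha\tau)$ and $\|\nabla L(p)\|_2 \le \sqrt{K}/(\alpha\tau)$ --- whereas the paper bounds $\|\nabla \tilde p(x)[i]\|_2 \le 2\sqrt{K}/(\alpha\tau)$ for each $i$ and sums over $i$ to obtain $2K^{3/2}/(\alpha\tau)$; your aggregation yields a strictly tighter constant that of course still implies the stated bound.
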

\begin{proof}
We have that
\begin{align}
    L(p)
    &= \sum_{i=1}^K E[Q(y \neq i | x) \tilde p(x)[i]].
\end{align}

Denote $\tilde p(x)[i] = p(i)w_i/\sum_{j=1}^K p(j)w_j$ with $w_j := f_0(i|x)/q_0(i)$. Then we see that 
\begin{align}
    \left|\frac{\partial \tilde p(x)[i]}{\partial p(i)} \right|
    &=  \left|\frac{w_i}{\sum_{j=1}^K w_j p(j)} - \frac{(w_i p(i)) w_i}{ \left( \sum_{j=1}^K w_j p(j) \right)^2}\right|\\
    &\le \frac{1}{\alpha \tau} + \frac{w_i}{\sum_{j=1}^K w_j p(j)}\\
    &\le 2/(\alpha \tau).
\end{align}

Similarly,
\begin{align}
    \left | \frac{\partial \tilde p(x)[i]}{\partial p(j)}\right|
    &= \frac{w_i p(i) w_j}{\left( \sum_{j=1}^K w_j p(j) \right)^2}\\
    &\le 1/(\alpha \tau).
\end{align}

Thus we conclude that $\|\nabla \tilde p(x)[i] \|_2 \le 2\sqrt{K}/(\alpha \tau)$, where the gradient is taken with respect to $p \in \R_+^K$.

Therefore,
\begin{align}
    \|\nabla L(p)\|_2
    &\le \sum_{i=1}^K \|\nabla \tilde p(x)[i] \|_2\\
    &\le 2 K^{3/2}/(\alpha \tau).
\end{align}

\end{proof}

\begin{remark} \label{rmk:garg}
The condition $\sum_{i=1}^K f_0(i|x) p(i) \ge \tau$ is closely related to Condition 1 of \citet{garg2020labelshift}. Note that this is strictly weaker than imposing the restriction that the distribution $p(i) \ge \mu$ for each $i$.
\end{remark}

\begin{remark}
We emphasize that the conditions in Lemmas \ref{lem:cross} and \ref{lem:bin} are only sufficient conditions that imply bounded gradients. However, they are not necessary for satisfying bounded gradients property.
\end{remark}

\begin{lemma} \label{lem:unbias}

Let $\mu, \nu \in \Delta_K$ be such that $\mu[i] = q_t(i)$. Let $s_t = C^{-1} f_0(x_t)$, where $C$ is the confusion matrix defined in Assumption \ref{as:pop}. We have that $E[s_t] = \mu$ and $\text{Var}(s_t) \le 1/\sigma^2_{min}(C)$
\end{lemma}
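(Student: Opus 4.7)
The plan is to verify the two claims separately, both of which reduce to elementary identities once we unpack the label shift assumption and the definition of the confusion matrix $C$.

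First I would establish unbiasedness. The key observation is that for any $i \in [K]$,
\[
E_{x_t \sim Q_t(\cdot)}[f_0(x_t)[i]] = \sum_{j=1}^{K} q_t(j)\, E_{x \sim Q_t(\cdot | y = j)}[f_0(i|x)] = \sum_{j=1}^{K} q_t(j)\, E_{x \sim Q_0(\cdot | y = j)}[f_0(i|x)] = (C q_t)[i],
\]
where the first equality is the law of total expectation after conditioning on the latent label, the second invokes the label shift assumption $Q_t(x|y)=Q_0(x|y)$, and the third is just the definition $C_{ij} = E_{x \sim Q_0(\cdot | y=j)} f_0(i|x)$. Multiplying by $C^{-1}$ (which exists by Assumption \ref{as:pop}) then yields $E[s_t] = C^{-1} E[f_0(x_t)] = q_t$.

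Next I would bound the variance. Since $f_0(x_t) \in \Delta_K$, we have the deterministic bound $\|f_0(x_t)\|_2 \le 1$. Interpreting $\mathrm{Var}(s_t)$ coordinatewise, for each $i$ write $s_t[i] = v_i^\top f_0(x_t)$ with $v_i$ the $i$-th row of $C^{-1}$, so
\[
\mathrm{Var}(s_t[i]) \le E[(v_i^\top f_0(x_t))^2] \le \|v_i\|_2^2\, E[\|f_0(x_t)\|_2^2] \le \|C^{-1}\|_{\mathrm{op}}^2 = \frac{1}{\sigma_{\min}^2(C)}.
\]
If instead the intended reading is $E[\|s_t - q_t\|_2^2] \le 1/\sigma_{\min}^2(C)$, I would argue
\[
E[\|s_t - q_t\|_2^2] = E[\|C^{-1}(f_0(x_t) - Cq_t)\|_2^2] \le \frac{E[\|f_0(x_t) - Cq_t\|_2^2]}{\sigma_{\min}^2(C)} \le \frac{1}{\sigma_{\min}^2(C)},
\]
using the unbiasedness established above together with $E[\|f_0(x_t)\|_2^2] \le 1$ since $f_0(x_t) \in \Delta_K$.

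There is no real obstacle here: the only mild subtlety is making sure the label shift assumption is invoked at precisely the point where one swaps $Q_t(x|y)$ for $Q_0(x|y)$, and then the rest is linear algebra. The appearance of $\nu$ in the lemma statement seems vestigial (only $\mu$ is used and identified with $q_t$), so the proof ignores it.
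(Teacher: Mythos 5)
Your proof is correct and follows essentially the same route as the paper: the law of total probability combined with the label shift assumption gives $E[f_0(x_t)] = Cq_t$, and inverting $C$ yields unbiasedness, while the variance bound comes from $\|C^{-1}\|_{\mathrm{op}} = 1/\sigma_{\min}(C)$ and the fact that $f_0(x_t)$ lies in the simplex. The only cosmetic difference is that the paper phrases the first step via an auxiliary sampled label $\hat y(x_t)\sim f_0(x_t)$ and its induced marginal $\nu = Cq_t$ (which explains the otherwise vestigial $\nu$ in the statement), whereas you take expectations of the softmax vector directly; your variance argument is, if anything, spelled out more carefully than the paper's.
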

\begin{proof}
Let $\tilde q_t(\hat y_t) = E_{x_t \sim Q_t^X, \hat y(x_t) \sim f_0(x_t)} \I \{ \hat y(x_t) = \hat y_t \}$ be the probability that the classifier $f_0$ predicts the label $\hat y_t$. Here $Q_t^X(x) := \sum_{i=1}^K Q_t(x,i)$. Let's denote $Q_t(\hat y(x_t) = \hat y_t | y_t = i) := E_{x_t \sim Q_t( \cdot |y=i), \hat y(x_t) \sim f_0(x_t)} \I \{\hat y(x_t) = \hat y_t\}$. By law of total probability, we have that

\begin{align}
\tilde q_t(\hat y_t)
&= \sum_{i=1}^K Q_t(\hat y(x_t) = \hat y_t | y_t = i) q_t(i)\\
&= \sum_{i=1}^K Q_0(\hat y(x_t) = \hat y_t | y_t = i) q_t(i),
\end{align}
where the last line follows by the label shift assumption.

Let $\mu, \nu \in \R^K$ be such that $\mu[i] = q_t(i)$ and $\nu[i] = \tilde q_t(i)$. Then the above equation can be represented as $\nu = C \mu$. Thus $\mu = C^{-1} \nu$.

Given a sample $x_t \in Q_t$, the vector $f_0(x_t)$ forms an unbiased estimate of $\nu$. Hence we have that the vector $\hat \mu := C^{-1} f_0(x_t)$ is an unbiased estimate of $\mu$. Moreover,

\begin{align}
    \| \hat \mu\|_2
    &\le \| C^{-1}\|_2 \| f_0(x_t)\|\\
    &\le 1/\sigma_{min}(C).
\end{align}

Hence the variance of the estimate $\hat \mu$ is bounded by $1/\sigma^2_{min}(C)$.

\end{proof}

We have the following performance guarantee for online regression due to \citet{baby2021TVDenoise}.

\begin{proposition}[\citet{baby2021TVDenoise}] \label{prop:oracle}
Let $s_t = C^{-1}f_0(x_t)$. Let $\hat q_t := \text{ALG}(s_{1:t-1})$ be the online estimate of the true label marginal $q_t$ produced by the Aligator algorithm by taking $s_{1:t-1}$ as input at a round $t$. Then we have that
\begin{align}
    \sum_{t=1}^T E\left[\|\hat q_t - q_t \|_2^2 \right] = \tilde O(K^{1/3}T^{1/3}V_T^{2/3}(1/\sigma^{4/3}_{min}(C)) + K),
\end{align}
where $V_T := \sum_{t=2}^T \|q_t - q_{t-1} \|_1$. Here $\tilde O$ hides dependencies in absolute constants and poly-logarithmic factors of the horizon. Further this result is attained without prior knowledge of the variation $V_T$.
\end{proposition}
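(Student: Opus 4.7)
The strategy is a direct reduction to the one-dimensional online regression protocol of Definition \ref{def:regres}, followed by a per-coordinate invocation of the Aligator guarantee from \citet{baby2021TVDenoise} and a Hölder-type aggregation step to assemble the $K$-dimensional bound. First I would verify that $s_t=C^{-1}f_0(x_t)$ fits the observation model of Definition \ref{def:regres}: Lemma \ref{lem:unbias} already gives $E[s_t]=q_t$, the operator-norm bound $\|s_t\|_2=\|C^{-1}f_0(x_t)\|_2\le 1/\sigma_{\min}(C)$ (since $\|f_0(x_t)\|_2\le 1$), and $\mathrm{Var}(s_t[i])\le \|s_t-q_t\|_2^2\le 4/\sigma_{\min}^2(C)$ in expectation. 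Independence of the noise sequence $\epsilon_t=s_t-q_t$ across time follows from the independent draw $x_t\sim Q_t^X$ at each round. Thus with $B=O(1/\sigma_{\min}(C))$ and $\sigma^2=O(1/\sigma_{\min}^2(C))$, the hypotheses of the regression protocol hold.

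Next I would instantiate Aligator coordinate-wise: for each $k\in[K]$, feed the scalar sequence $\{s_\tau[k]\}_{\tau\le t-1}$ into an independent copy of the algorithm to produce $\hat q_t[k]$. The adaptive minimax optimality of Aligator in one dimension (Theorem of \citet{baby2021TVDenoise}) yields, without prior knowledge of the local variation,
\begin{equation*}
\sum_{t=1}^T E\bigl[(\hat q_t[k]-q_t[k])^2\bigr]
\;=\;\tilde O\!\left(\sigma^{4/3}\,T^{1/3}V_{T,k}^{2/3}+\sigma^2\right),
\qquad V_{T,k}:=\sum_{t=2}^T|q_t[k]-q_{t-1}[k]|.
\end{equation*}

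Summing over coordinates and plugging in $\sigma^2=O(1/\sigma_{\min}^2(C))$ gives
\begin{equation*}
\sum_{t=1}^T E\bigl[\|\hat q_t-q_t\|_2^2\bigr]
\;=\;\tilde O\!\left(\frac{T^{1/3}}{\sigma_{\min}^{4/3}(C)}\sum_{k=1}^K V_{T,k}^{2/3}+\frac{K}{\sigma_{\min}^2(C)}\right).
\end{equation*}
The aggregation step is then a one-line Hölder (equivalently, concavity of $x\mapsto x^{2/3}$): since $\sum_{k=1}^K V_{T,k}=\sum_{t=2}^T\|q_t-q_{t-1}\|_1=V_T$,
\begin{equation*}
\sum_{k=1}^K V_{T,k}^{2/3}\;\le\;K^{1/3}\Bigl(\sum_{k=1}^K V_{T,k}\Bigr)^{2/3}=K^{1/3}V_T^{2/3}.
\end{equation*}
Combining the two displays produces the claimed $\tilde O\!\bigl(K^{1/3}T^{1/3}V_T^{2/3}/\sigma_{\min}^{4/3}(C)+K\bigr)$ bound, and the absence of any tuning on $V_T$ is inherited directly from the adaptive nature of Aligator.

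The only genuinely delicate point is the per-coordinate variance bookkeeping: I would need to be careful that the noise bound used inside Aligator is $\mathrm{Var}(s_t[i])\le 1/\sigma_{\min}^2(C)$ rather than the joint vector variance, and that the $\ell_1$ total variation $V_T$ on the simplex correctly decomposes across coordinates as $\sum_k V_{T,k}=V_T$. The remaining $K$ factor in the additive term and the $K^{1/3}$ factor multiplying $T^{1/3}V_T^{2/3}$ both arise purely from this Hölder aggregation; I would flag that this scaling is tight (up to logs) because worst-case variation concentrated in a single coordinate saturates the inequality. Once these accounting details are in place, the rest of the proof is a clean citation of the scalar Aligator guarantee.
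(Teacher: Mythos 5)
The paper does not actually prove this proposition: it is imported verbatim as a citation to \citet{baby2021TVDenoise}, with the only in-paper support being Lemma \ref{lem:unbias} (unbiasedness and the $1/\sigma_{\min}^2(C)$ variance bound for $s_t$) and the generic vector statement in Proposition \ref{prop:aligator}. Your proposal supplies the missing derivation, and it is correct and is the standard route: verify that $s_t$ satisfies the observation model of Definition \ref{def:regres} with $B=O(1/\sigma_{\min}(C))$ and $\sigma^2=O(1/\sigma_{\min}^2(C))$, run the scalar Aligator guarantee $\tilde O(\sigma^{4/3}T^{1/3}V_{T,k}^{2/3}+\sigma^2)$ per coordinate, and aggregate with $\sum_k V_{T,k}^{2/3}\le K^{1/3}V_T^{2/3}$. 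The resulting additive term $K/\sigma_{\min}^2(C)$ collapses to $O(K)$ under Assumption \ref{as:pop}, matching the stated bound, and adaptivity to $V_T$ is indeed inherited coordinate-wise.

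Two small inaccuracies in your commentary, neither of which affects the proof. First, the additive $K$ does not come from the H\"older step; it is just the sum of $K$ per-coordinate constants $\sigma^2$. Second, your tightness remark is backwards: the inequality $\sum_k V_{T,k}^{2/3}\le K^{1/3}\bigl(\sum_k V_{T,k}\bigr)^{2/3}$ is saturated when the variation is spread \emph{evenly} across coordinates ($V_{T,k}=V_T/K$ for all $k$), whereas variation concentrated in a single coordinate makes the left side equal to $V_T^{2/3}$, the smallest possible value. Also, the phrase ``$\mathrm{Var}(s_t[i])\le\|s_t-q_t\|_2^2$ in expectation'' should be stated as a bound by the almost-sure supremum of $\|s_t-q_t\|_2^2$; the variance is already an expectation.
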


By following the arguments in \citet{Baby2021OptimalDR}, a similar statement can be derived also for the FLH-FTL algorithm of \citet{hazan2007adaptive} (Algorithm \ref{alg:flh}).

\mainunsup*
\begin{proof}
Owing to our carefully crafted reduction from the problem of online label shift to online regression, the proof can be conducted in just a few lines. Let $\tilde q_t$ be the value of $\text{ALG}(s_{1:t-1})$ computed at line 2 of Algorithm \ref{alg:unsup}. 
Recall that the {dynamic regret} was defined as:
\begin{align}
    R_{\text{dynamic}}(T) = \sum_{t=1}^T L_t(\hat q_{t})-L_t(q_{t})
    \le \sum_{t=1}^T G \|\hat q_{t}-q_t \|_2 \label{eq:ub}
\end{align}

Continuing from Eq.\eqref{eq:ub}, we have

\begin{align}
    E[R_{\text{dynamic}}(T)]
    &\le \sum_{t=1}^T G \cdot E [\|\hat q_{t} - q_t \|_2]\\
    &\le \sum_{t=1}^T G \cdot E [\|\tilde q_{t} - q_t \|_2]\\
    &\le \sum_{t=1}^T G  \sqrt{E\|\tilde q_{t} - q_t \|_2^2}\\
    &\le G\sqrt{T \sum_{t=1}^T E[\|\tilde q_t - q_t \|_2^2]}\\
    &= \tilde O \Bigg ( \Bigg. K^{1/6} T^{2/3} V_T^{1/3} (1/\sigma^{2/3}_{min}(C))  + \sqrt{KT}/\sigma_{min}(C) \Bigg. \Bigg),
\end{align}
where the second line is due to non-expansivity of projection, the third line is due to Jensen's inequality, fourth line by Cauchy-Schwartz and last line by Proposition \ref{prop:oracle}. This finishes the proof.

\end{proof}

Next, we provide matching lower bounds (modulo log factors) for the regret in the unsupervised label shift setting. We start from an information-theoretic result which will play a central role in our lower bound proofs.

\begin{proposition}[Theorem 2.2 in \citet{tsybakov_book}] \label{prop:sasha}
Let $\P$ and $\Q$ be two probability distributions on $\cH$, such that $\text{KL}(\P || \Q) \le \beta < \infty$, Then for any $\cH$-measurable real function $\phi:\cH \rightarrow \{0 , 1 \}$,
\begin{align}
    \max \{\P(\phi = 1), \Q(\phi = 0) \} \ge \frac{1}{4} \exp(-\beta).
\end{align}
\end{proposition}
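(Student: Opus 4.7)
The plan is to prove the Tsybakov two-point inequality via three standard reductions: from the worst-case testing error to the total variation affinity $\int \min(p,q)\,d\mu$, then to the Bhattacharyya coefficient, and finally to $\exp(-\beta)$ via Jensen's inequality. Throughout, let $\mu$ be a common dominating measure (for example $\mu = \P + \Q$) with densities $p = d\P/d\mu$ and $q = d\Q/d\mu$.

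First, I would pass from the maximum to an average, observing that
\begin{align}
\max\{\P(\phi=1), \Q(\phi=0)\} \;\geq\; \tfrac{1}{2}\bigl(\P(\phi=1) + \Q(\phi=0)\bigr).
\end{align}
Since $p \geq \min(p,q)$ pointwise on $\{\phi=1\}$ and $q \geq \min(p,q)$ pointwise on $\{\phi=0\}$, splitting the integrals over the two sets gives, for any test $\phi$,
\begin{align}
\P(\phi=1) + \Q(\phi=0) \;\geq\; \int_{\{\phi=1\}}\!\min(p,q)\,d\mu + \int_{\{\phi=0\}}\!\min(p,q)\,d\mu \;=\; \int \min(p,q)\,d\mu.
\end{align}
So it suffices to show $\int \min(p,q)\,d\mu \geq \tfrac{1}{2}\exp(-\beta)$.

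Next, I would bound this affinity below by the Bhattacharyya coefficient $\mathrm{BC} := \int\sqrt{pq}\,d\mu$. Using the factorization $\sqrt{pq} = \sqrt{\min(p,q)}\cdot\sqrt{\max(p,q)}$ together with $\int\max(p,q)\,d\mu = 2 - \int\min(p,q)\,d\mu \leq 2$, Cauchy--Schwarz yields
\begin{align}
\mathrm{BC}^2 \;=\; \Bigl(\int\sqrt{\min(p,q)\,\max(p,q)}\,d\mu\Bigr)^2 \;\leq\; \int\min(p,q)\,d\mu\;\cdot\;\int\max(p,q)\,d\mu \;\leq\; 2\int\min(p,q)\,d\mu,
\end{align}
so $\int\min(p,q)\,d\mu \geq \mathrm{BC}^2/2$.

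The final step is a Jensen bound relating $\mathrm{BC}$ to the KL divergence. Writing $\mathrm{BC} = E_\P[\sqrt{q/p}]$ on the support of $\P$ and using concavity of $\log$, I get $\log\mathrm{BC} \geq E_\P[\log\sqrt{q/p}] = -\tfrac{1}{2}\mathrm{KL}(\P\|\Q) \geq -\beta/2$, hence $\mathrm{BC} \geq \exp(-\beta/2)$. Chaining the three bounds gives
\begin{align}
\max\{\P(\phi=1), \Q(\phi=0)\} \;\geq\; \tfrac{1}{2}\cdot\tfrac{1}{2}\mathrm{BC}^2 \;\geq\; \tfrac{1}{4}\exp(-\beta),
\end{align}
as claimed. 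The only subtlety is handling $q/p$ correctly on the support of $\P$ (outside of which $\mathrm{KL}=\infty$ makes the bound vacuous); once that bookkeeping is in place, the proof is entirely mechanical, and no step is a genuine obstacle.
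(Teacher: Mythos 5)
Your proof is correct: the paper does not prove this proposition at all but imports it verbatim from \citet{tsybakov_book}, and your three-step chain (testing error $\to$ total-variation affinity $\int\min(p,q)\,d\mu$ $\to$ Bhattacharyya coefficient via Cauchy--Schwarz $\to$ KL via Jensen) is precisely the standard textbook argument behind Theorem 2.2(iii) there. All steps check out, including the reduction $\int\max(p,q)\,d\mu\le 2$ and the support bookkeeping for $E_{\P}[\sqrt{q/p}]$, so there is nothing to add.
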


\unsuplb*
\begin{proof}
We start with a simple observation about KL divergence. Consider distributions with density $P(x,y) = P_0(x|y) p(y)$ and $Q(x,y) = P_0(x|y) q(y)$ where $(x,y) \in \R \times [K]$. Note that these distributions are consistent with the label shift assumption. We note that

\begin{align}
    \text{KL}(P || Q)
    &= \sum_{i=1}^K \int_{\R} P_0(x|i) p(i) \log \left( \frac{P_0(x|i) p(i)}{P_0(x|i) q(i)} \right) dx\\
    &= \sum_{i=1}^K \int_{\R} P_0(x|i) p(i) \log \left( \frac{ p(i)}{q(i)} \right) dx\\
    &=\sum_{i=1}^K p(i) \log \left( \frac{p(i)}{q(i)} \right)
\end{align}

Thus we see that under the label shift assumption, the KL divergence is equal to the KL divergence between the marginals of the labels.

Next, we define a problem instance and an adversarial strategy. We focus on a binary classification problem where the labels is either 0 or 1. As noted before, the KL divergence only depends on the marginal distribution of labels. So we fix the density $Q_0(x|y)$ to be any density such that under the uniform label marginals ($q_0(1) = q_0(0) = 1/2$)  we can find a classifier with invertible confusion matrix (recall from Fig. \ref{fig:proto-unsup} that $Q_0$ corresponds to the data distribution of the training data set).

Divide the entire time horizon $T$ is divided into batches of size $\Delta$. So there are $M := T/\Delta$ batches (we assume divisibility). Let $\Theta = \left \{ \frac{1}{2} - \delta, \frac{1}{2} + \delta  \right \}$ be a set of success probabilities, where each probability can define a Bernoulli trial. Here $\delta \in (0,1/4)$ which will be tuned later.

 The  problem instance is defined as follows:
\begin{itemize}
    \item For batch $i \in [M]$, adversary selects a probability $\mathring q_i \in \Theta$ uniformly at random.

    \item For any round $t$ that belongs to the $i^{th}$ batch, sample a label $y_t \sim \text{Ber}(q_t)$ and co-variate $x_t \sim Q_0(\cdot|y_t)$. Here $q_t = \mathring q_i$. The co-variate $x_t$ is revealed.

    \item Let $\hat q_t$ be any estimate of $q_t$ at round $t$. Define the loss as $L_t(\hat q_t) := \I\{ q_t \ge 1/2 \} (1-\hat q_t) + \I\{ q_t < 1/2 \} \hat q_t$.
    
\end{itemize}

We take the domain $\cD$ in Assumption \ref{as:lip} as $[1/2-\delta,1/2+\delta]$. It is easy to verify that $L_t(\hat q_t)$ is Lipschitz over $\cD$. Note that unlike \citet{besbes2015non}, we do not have an unbiased estimate of the gradient of loss functions.

Let's compute an upperbound on the total variation incurred by the true marginals. We have

\begin{align}
    \sum_{t=2}^T |q_t - q_{t-1} |
    &= \sum_{i=2}^M |\mathring q_i - \mathring q_{i-1}|\\
    &\le 2\delta M\\
    &\le V_T,
\end{align}
where the last line is obtained by choosing $\delta = V_T/(2M) = V_T \Delta/(2T)$.

Since at the beginning of each batch, the sampling probability is chosen uniformly at random, the loss function in the current batch is independent of the history available at the beginning of the batch. So only the data in the current batch alone is informative in minimising the loss function in that batch. Hence it is sufficient to consider algorithms that only use the data within a batch alone to make predictions at rounds that falls within that batch.

Now we proceed to bound the regret incurred within batch 1. The computation is identical for any other batches.

Let $\P$ be the joint probability distribution in which labels $(y_1,\ldots,y_\Delta)$ within batch 1 are sampled with success probability $1/2-\delta$ (i.e $q_t = 1/2-\delta$)
\begin{align}
    \P(y_1,\ldots,y_\Delta)
    &= \Pi_{i=1}^\Delta  (1/2-\delta)^{y_i} (1/2+\delta)^{1-y_i}.
\end{align}

Define an alternate distribution $\Q$ such that 
\begin{align}
    \Q(y_1,\ldots,y_\Delta)
    &= \Pi_{i=1}^\Delta  (1/2 + \delta)^{y_i} (1/2-\delta)^{1-y_i}.
\end{align}
According to the above distribution the data are independently sampled from Bernoulli trials with success probability $1/2 + \delta$. (i.e $q_t = 1/2+\delta$)

Moving forward, we will show that by tuning $\Delta$ appropriately, any algorithm won't be able to detect between these two alternate worlds with constant probability resulting in sufficiently large regret.

We first bound the KL distance between these two distributions. Let 
\begin{align}
\text{KL}(1/2-\delta || 1/2+\delta)
&:= (1/2+\delta) \log \left( \frac{1/2+\delta}{1/2-\delta} \right) + (1/2-\delta) \log \left( \frac{1/2-\delta}{1/2+\delta} \right)\\
&\le_{(a)} (1/2+\delta) \frac{2\delta}{1/2 + \delta} - (1/2-\delta) \frac{2\delta}{1/2+\delta}\\
&= \frac{16 \delta^2}{1-4\delta^2}\\
&\le_{(b)} \frac{64 \delta^2}{3},
\end{align}
where in line (a) we used the fact that $\log(1+x) \le x$ for $x > -1$ and observed that $-4\delta/(1+2\delta) > -1$ as $\delta \in (0,1/4)$. In line (b) we used $\delta \in (0,1/4)$.

Since $\P$ and $\Q$ are product of the marginals due to independence we have that
\begin{align}
\text{KL}(\P || \Q)
&= \sum_{t=1}^\Delta \text{KL}(1/2-\delta || 1/2 + \delta)\\
&\le (64\Delta/3) \cdot \delta^2\\
&= 16/3\\
&:=\beta, \label{eq:kl}
\end{align}
where we used the choices $\delta = \Delta V_T/(2T)$ and $\Delta = (T/V_T)^{2/3}$.

Suppose at the beginning of batch, we reveal the entire observations within that batch $y_{1:\Delta}$ to the algorithm. Note that doing so can only make the problem easier than the sequential unsupervised setting. Let $\hat q_t$ be any measurable function of $y_{1:\Delta}$. Define the function $ \phi_t := \I \{ \hat q_t \ge 1/2 \}$. Then by Proposition \ref{prop:sasha}, we have that
\begin{align}
    \max \{\P(\phi_t = 1), \Q(\phi_t = 0) \} \ge \frac{1}{4} \exp(-\beta), \label{eq:indiff}
\end{align}
where $\beta$ is as defined in Eq.\eqref{eq:kl}.

Notice that if $q_t = 1/2-\delta$, then $L_t(\hat q_t) \ge 1/2$ for any $\hat q_t \ge 1/2$. Similarly if $q_t = 1/2+\delta$, we have that $L_t(\hat q_t) \ge 1/2$ for any $\hat q_t < 1/2$.

Further note that $L_t(q_t) = 1/2-\delta$ by construction.

For notational clarity define $L_t^p(x) := x$ and $L_t^q(x) := 1-x$. We can lower-bound the instantaneous regret as:

\begin{align}
    E[L_t(\hat q_t)] - L_t(q_t)
    &=_{(a)} \frac{1}{2} (E_{\P}[L_t^p(\hat q_t)] - L_t^p(1/2-\delta)) + \frac{1}{2} (E_{\Q}[L_t^q(\hat q_t)] - L_t^q(1/2+\delta))\\
    &\ge_{(b)} \frac{1}{2} (E_{\P}[L_t^p(\hat q_t)| \hat q_t \ge 1/2]  -L_t^p(1/2-\delta) \P(\phi_t = 1) \\ & \qquad \qquad + \frac{1}{2} (E_{\Q}[L_t^q(\hat q_t)| \hat q_t < 1/2]  - L_t^q(1/2+\delta) \Q(\phi_t = 0) \\
    &\ge_{(c)} \frac{1}{2} \delta \P(\phi_t = 1) + \frac{1}{2} \delta \Q(\phi_t = 0)\\
    &\ge \delta/2  \max \{\P(\phi_t = 1), \Q(\phi_t = 0) \}\\
    &\ge_{(d)} \frac{\delta}{8} \exp(-\beta),
\end{align}
where in line (a) we used the fact the success probability for a batch is selected uniformly at random from $\Theta$. In line (b) we used the fact that $L_t^p(\hat q_t) - L_t^p(1/2-\delta) \ge 0$ since $\hat q_t \in \cD = [1/2-\delta,1/2+\delta]$. Similarly term involving $L_t^q$ is also handled. In line (c) we applied $(E_{\P}[L_t^p(\hat q_t) | \hat q_t \ge 1/2] -L_t^p(1/2-\delta)) \ge \delta$ since $E_{\P}[L_t^p(\hat q_t) | \hat q_t \ge 1/2] \ge 1/2$ and $L_t^p(1/2-\delta) = 1/2-\delta$. Similar bounding is done for the term involving $E_{\Q}$ as well. In line (d) we used Eq.\eqref{eq:indiff}.

Thus we get the total expected regret within batch 1 as
\begin{align}
    \sum_{t=1}^\Delta E[L_t(\hat q_t)] - L_t(q_t)
    &\ge \frac{\delta \Delta}{8} \exp(-\beta)
\end{align}

The total regret within any batch $i \in [M]$ can be lower bounded using exactly the same arguments as above. Hence summing the total regret across all batches yields
\begin{align}
    \sum_{t=1}^T E[L_t(\hat q_t)] - L_t(q_t)
    &\ge \frac{T}{\Delta} \cdot \frac{\delta \Delta}{8} \exp(-\beta)\\
    &= \frac{V_T \Delta}{16} \cdot  \exp(-\beta)\\
    &= T^{2/3}V_T^{1/3} \exp(-\beta) / 16.
\end{align}

The $\Omega(\sqrt{T})$ part of the lowerbound follows directly from Theorem 3.2.1 in \citet{hazan2016introduction} by choosing $\cD$ with diameter bounded by $\Omega(1)$.

\end{proof}

\section{Design of low switching online regression algorithms} \label{app:low-switch}

\begin{algorithm}[t!]
\caption{\texttt{LPA}: a black-box reduction to produce a low-switching online regression algorithm } \label{alg:bbox}
\begin{algorithmic}[1]
\INPUT Online regression oracle ALG, failure probability $\delta$, maximum standard deviation $\sigma$ (see \defref{def:regres}).
\STATE Initialize $\text{prev} = 0 \in \mathbb R^K$, $b  = 1$
\STATE Get estimate $\tilde \theta_t$ from ALG$(z_{1:t-1})$
\STATE Output $\hat \theta_t = \text{prev}$
\STATE Receive an observation $z_t$

\texttt{\blue{// test to detect non-staionarity}}
\IF{$\sum_{j=b+1}^t \|\text{prev} - \tilde \theta_j \|_2^2 > 5K \sigma^2 \log(2T/\delta)$}
    \STATE Set $b= t+1$, $\text{prev} = z_t$
    \STATE Restart ALG
\ELSIF{$t-b+1$ is a power of 2}
    \STATE Set $\text{prev} = \nicefrac{\sum_{j=b}^t z_j}{t-b+1}$
\ENDIF

\STATE Update ALG with $z_t$
\end{algorithmic}
\end{algorithm}

Even-though Algorithm \ref{alg:suptrain} has attractive performance guarantees, 
it requires retraining with weighted ERM at every round. 
 This is not satisfactory since the retraining can be computationally expensive. 
 In this section, we aim to design a version of Algorithm \ref{alg:suptrain} 
 with few retraining steps while not sacrificing the statistical efficiency (up to constants). 
 To better understand why this goal is attainable, consider a time window $[1,n] \subseteq [T]$ where the true label marginals remain constant or drift very slowly. Due to the slow drift, one reasonable strategy is to re-train the model (with weighted ERM) 
 using the past data only at time points within $[1,n]$ that are powers of 2 (i.e via a doubling epoch schedule). 
 For rounds $t \in [1,n]$ that are not powers of 2, we make predictions with a previous model $h_{\text{prev}}$ computed at $t_{\text{prev}} := 2^{\lfloor \log_2 t \rfloor}$ which is trained using data seen upto the time $t_{\text{prev}}$. 
 Observe that this constitutes at least half of the data seen until round $t$. This observation when combined with the slow drift of label marginals implies that the performance of the model $h_{\text{prev}}$ at round $t$ will be comparable to the performance of a model obtained by retraining using entire data collected until round $t$.

To formalize this idea, we need an efficient online change-point-detection strategy that can detect intervals where the TV of the \emph{true} label marginals is low and retrain only (modulo at most $\log T$ times within a low TV window) when there is enough evidence for sufficient change in the TV of the true marginals. We address this problem via a two-step approach. In the first step, we construct a generic black-box reduction that takes an online regression oracle as input and converts it into another algorithm with the property that the number of switches in its predictions is controlled without sacrificing the statistical performance. Recall that the purpose of the online regression oracles is to  track the true label marginals. The output of our low-switching online algorithm remains the same as long as the TV of the \emph{true} label marginals (TV computed from the time point of the last switch) is sufficiently small. Then we use this low-switching online regression algorithm to re-train the classifier when a switch is detected.

We next provide the \textbf{L}ow switching through \textbf{P}hased \textbf{A}veraging (LPA) (Algorithm \ref{alg:bbox}), our black-box reduction to produce low switching regression oracles. We remark that this algorithm is applicable to the much broader context of \emph{online regression} or \emph{change point detection} and can be of independent interest.  

We now describe the intuition behind Algorithm \ref{alg:bbox}. The purpose of Algorithm \ref{alg:bbox} is to denoise the observations $z_t$ and track the underlying ground truth $\theta_t$ in a statistically efficient manner while incurring low switching cost. Hence it is applicable to the broader context of online non-parametric regression \citep{arrows2019,raj2020nonstaionary,baby2021TVDenoise} and offline non-parametric regression \citep{tibshirani2014adaptive,wang2015trend}. 

Algorithm \ref{alg:bbox} operates by adaptively detecting low TV intervals. Within each time window it performs a phased averaging in a doubling epoch schedule. i.e consider a low TV window $[b,n]$. For a round $t \in [b,n]$ let $t_{\text{prev}} :=2^{\lfloor \log_2 (t-b+1) \rfloor}$. In round $t$, the algorithm plays the average of the observations $z_{b:t_{\text{prev}}}$. So we see that in any low TV window, the algorithm changes its output only at-most $O(\log T)$ times.

For the above scheme to not sacrifice statistical efficiency, it is important to efficiently detect windows with low TV of the true label marginals. Observe that the quantity \texttt{prev} computes the average of at-least half of the observations within a time window that start at time $b$. So when the TV of the ground truth within a time window $[b,t]$ is small, we can expect the average to be a good enough representation of the entire ground truth sequence within that time window. Consider the quantity $R_t := \sum_{j=b+1}^t \|\text{prev} - \theta_j\|_2^2$ which is the total squared error (TSE) incurred by the fixed decision \texttt{prev} within the current time window. Whenever the TV of the ground truth sequence $\theta_{b:t}$ is large, there will be a large bias introduced by $\texttt{prev}$ due to averaging. Hence in such a scenario the TSE will also be large indicating non-stationarity. However, we can't compute $R_t$ due to the unavailability of $\theta_j$. So we approximate $R_t$ by replacing $\theta_j$ with the estimates $\tilde \theta_j$ coming from the input online regression algorithm that is not constrained by switching cost restrictions. This is the rationale behind the non-stationarity detection test at Step 5. Whenever a non-staionarity is detected we restart the input online regression algorithm as well as the start position for computing averages (in Step 6).

We have the following guarantee for Algorithm \ref{alg:bbox}.

 \begin{restatable}{theorem}{thmbbox}\label{thm:bbox} 
 Suppose the input black box ALG given to Algorithm \ref{alg:bbox} is adaptively minimax optimal (see Definition \ref{def:regres}). Then the number of times Algorithm \ref{alg:bbox} switches its decision is at most $\tilde O(T^{1/3}V_T^{2/3})$ with probability at least $1-\delta$. Further,  Algorithm \ref{alg:bbox} satisfies $\sum_{t=1}^T \|\hat \theta_t - \theta_t \|_2^2 = \tilde O(T^{1/3}V_T^{2/3})$ with probability at least $1-\delta$, where $V_T = \sum_{t=2}^T \| \theta_t - \theta_{t-1}\|_1$.
\end{restatable}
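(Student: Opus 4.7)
I split the argument into three logical steps: establishing a high-probability event, controlling the squared error inside a single window, and counting the total number of windows (restarts). I then combine these to obtain both conclusions of the theorem.

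\textbf{High-probability event.} I would first condition on an event of probability at least $1-\delta$ on which (a) the input oracle ALG satisfies $\sum_{t=1}^T \|\tilde\theta_t - \theta_t\|_2^2 = \tilde O(T^{1/3} V_T^{2/3})$ by its adaptive minimax guarantee, and (b) every noise-average block $\bar\epsilon = \frac{1}{m}\sum_{j=b}^{b+m-1}\epsilon_j$ of length $m$ encountered by the algorithm obeys $\|\bar\epsilon\|_2^2 \le \tilde O(K\sigma^2 / m)$. Bound (b) follows from sub-Gaussian concentration for bounded noise (recall $\|\epsilon_t\|_2 \le B$ and $E[\epsilon_t]=0$) combined with a union bound over the $O(T \log T)$ candidate dyadic blocks $[b, b+2^k-1]$ that can ever appear as the quantity $\text{prev}$.

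\textbf{Per-window squared error.} Consider a window $[b_m, n_m]$ between two consecutive restarts and a round $t$ inside it. Let $k_t = \lfloor \log_2(t - b_m + 1)\rfloor$, so that $\text{prev}_t = \bar\theta_{m,t} + \bar\epsilon_{m,t}$, where $\bar\theta_{m,t}$ is the average of the $\theta_j$ over $[b_m, b_m + 2^{k_t}-1]$. Then
\begin{align*}
\|\text{prev}_t - \theta_t\|_2^2 \le 2\|\bar\theta_{m,t} - \theta_t\|_2^2 + 2\|\bar\epsilon_{m,t}\|_2^2 \le 2 V_m^2 + \tilde O\bigl(K\sigma^2 / 2^{k_t}\bigr),
\end{align*}
where $V_m := \sum_{j=b_m+1}^{n_m}\|\theta_j - \theta_{j-1}\|_1$ and the bias bound uses $\|\bar\theta_{m,t} - \theta_t\|_2 \le \max_j \|\theta_j - \theta_t\|_2 \le V_m$ since $\|\cdot\|_2 \le \|\cdot\|_1$. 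Summing over $t \in [b_m, n_m]$, the variance part telescopes via the doubling schedule to $\tilde O(K \sigma^2 \log T)$ and the bias part contributes at most $2\, n_m V_m^2$.

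\textbf{Counting windows and combining.} The delicate step, which I anticipate as the main obstacle, is converting a triggered test into a lower bound on $V_m$, since the test inspects $\tilde\theta_j$ rather than $\theta_j$. When a restart fires at the end of window $m$, the algorithm has verified $\sum_{j=b_m+1}^{n_m}\|\text{prev} - \tilde\theta_j\|_2^2 > 5K \sigma^2 \log(2T/\delta)$. Applying $\|\text{prev} - \tilde\theta_j\|_2^2 \le 2\|\text{prev} - \theta_j\|_2^2 + 2\|\theta_j - \tilde\theta_j\|_2^2$ and the per-window bound of the previous step gives $n_m V_m^2 + E_m \gtrsim K \sigma^2 \log(2T/\delta)$, where $E_m$ is the oracle's squared-error contribution from window $m$. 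Summing over $m$, invoking $\sum_m E_m = \tilde O(T^{1/3} V_T^{2/3})$, and applying H\"older's inequality with the constraints $\sum_m V_m \le V_T$ and $\sum_m n_m = T$ produces $M = \tilde O(T^{1/3} V_T^{2/3})$ on the number of windows. Since each window contains one restart and at most $O(\log T)$ phased-averaging updates, the total switch count is $\tilde O(T^{1/3} V_T^{2/3})$. Finally, the TSE bound follows by summing the per-window estimate across the $M$ windows: the variance term aggregates to $\tilde O(M \log T) = \tilde O(T^{1/3} V_T^{2/3})$, and $\sum_m n_m V_m^2$ is optimized by H\"older's inequality under the same two constraints to yield $\tilde O(T^{1/3} V_T^{2/3})$. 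The main technical risk is that any looseness in converting the test trigger into a lower bound on $V_m$ propagates as loss in both the switch count and the TSE, so careful constants are needed in the reverse-triangle step.
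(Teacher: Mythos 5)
Your high-probability event and your window-counting argument are essentially the paper's: the paper also converts a triggered test into a per-window lower bound on the local total variation $C_i \gtrsim K/\sqrt{n_i}$ (via the decomposition $\|\mathrm{prev}-\tilde\theta_j\|_2^2 \le 2\|\mathrm{prev}-\theta_j\|_2^2 + 2\|\tilde\theta_j-\theta_j\|_2^2$ and Hoeffding for the averaged noise), and then uses $V_T \ge \sum_i C_i \gtrsim \sum_i K/\sqrt{n_i} \gtrsim KM^{3/2}/\sqrt{T}$ to conclude $M = \tilde O(T^{1/3}V_T^{2/3})$. Your version needs the small extra care of splitting windows according to whether $n_mV_m^2$ or $E_m$ dominates (summing $n_mV_m^2+E_m \gtrsim K\sigma^2$ over $m$ and "applying H\"older" does not work directly, since $\sum_m n_mV_m^2$ is not controlled), but that is repairable and matches the paper in spirit.

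The genuine gap is in your final TSE step. You bound the per-window error by $2n_mV_m^2 + \tilde O(K\sigma^2\log T)$ and then claim that $\sum_m n_mV_m^2$ is $\tilde O(T^{1/3}V_T^{2/3})$ by H\"older under the constraints $\sum_m n_m = T$ and $\sum_m V_m \le V_T$. This is false: rescaling $V_m \mapsto \lambda V_m$ scales the left side by $\lambda^2$ and the claimed right side by $\lambda^{2/3}$, and already a single window with $n_1=T$, $V_1=V_T$ gives $TV_T^2 \gg T^{1/3}V_T^{2/3}$. The bias bound $n_mV_m^2$ coming from averaging is simply too weak to be summed; it is precisely the quantity the test is designed to prevent from growing. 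The correct route, which is what the paper does, is to bound the per-window TSE \emph{using the fact that the test never fired inside the window}: since there are at most $\log T$ phased-averaging updates of $\mathrm{prev}$ in a window $[b,c]$ and each partial sum stays below the threshold, one gets $\sum_{j=b}^{c}\|\hat\theta_j-\tilde\theta_j\|_2^2 \le 5K\sigma^2\log(2T/\delta)\log T$; combining with the restarted oracle's guarantee $\sum_{j=b}^{c}\|\tilde\theta_j-\theta_j\|_2^2 = \tilde O(n_m^{1/3}V_m^{2/3}+1)$ yields a per-window TSE of $\tilde O(K + n_m^{1/3}V_m^{2/3})$, and then H\"older with the dual pair $(3,3/2)$ applied to $\sum_m n_m^{1/3}V_m^{2/3} \le \bigl(\sum_m n_m\bigr)^{1/3}\bigl(\sum_m V_m\bigr)^{2/3}$ gives the stated $\tilde O(T^{1/3}V_T^{2/3})$. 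In short, the exponents $(1/3,2/3)$ must come from the oracle's guarantee inside each window, not from the squared bias of the running average.
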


\begin{remark} \label{rmk:candidates}
Since Algorithm \ref{alg:bbox} is a black-box reduction, there are a number of possible candidates for the input policy ALG that are adaptively minimax. Examples include FLH with online averages as base learners \citep{hazan2007adaptive} 
or Aligator algorithm \citep{baby2021TVDenoise}.
\end{remark}

Armed with a low switching online regression oracle \texttt{LPA}, one can now tweak Algorithm \ref{alg:suptrain} to have sparse number of retraining steps while not sacrificing the statistical efficiency (up to multiplicative constants). 
The resulting procedure is described in Algorithm \ref{alg:suptrain-lazy} (in \appref{app:sup}) which enjoys similar rates as in \thmref{thm:sup} (see \thmref{thm:sup-lazy}).

\section{Omitted proofs from Section \ref{sec:sup}} \label{app:sup}

First we recall a result from \citet{baby2021TVDenoise}.

\begin{proposition}[Theorem 5 of \citet{baby2021TVDenoise}] \label{prop:aligator}
Consider the online regression protocol defined in Definition \ref{def:regres}. Let $\hat \theta_t$ be the estimate of the ground truth produced by the Aligator algorithm from \citet{baby2021TVDenoise}. Then with probability at-least $1-\delta$, the total squared error (TSE) of Aligator satisfies
\begin{align}
    \sum_{t=1}^T \| \theta_t - \hat \theta_t \|_2^2
    &= \tilde O(T^{1/3}V_T^{2/3} + 1),
\end{align}
where $V_T = \sum_{t=2}^T \|\theta_t - \theta_{t-1} \|_1$. This bound is attained without any prior knowledge of the variation $V_T$.

The high probability guarantee also implies that
\begin{align}
    \sum_{t=1}^T E[\| \theta_t - \hat \theta_t \|_2^2]
    &= \tilde O(T^{1/3}V_T^{2/3} + 1),
\end{align}
where the expectation is taken with respect to randomness in the observations.
\end{proposition}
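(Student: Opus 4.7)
The plan is to establish the bound via a partition-based oracle inequality: first show that Aligator's predictions on any sub-interval are competitive with the best constant predictor there, then optimize over partitions of $[1,T]$ whose within-interval variations balance bias and variance.

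First, I would recall that Aligator is built on a strongly adaptive meta-algorithm (of the FLH / SAOL type) whose base experts are running-mean estimators initialized on a geometric (dyadic) cover of intervals $\mathcal{I} \subset 2^{[1,T]}$. For the squared loss $\ell_t(x) = \|z_t - x\|_2^2$, which is exp-concave on the bounded decision set $\{x : \|x\|_2 \le B\}$, the meta-algorithm attains $\tilde O(K)$ regret against the best base expert on any interval $I \in \mathcal I$ simultaneously. Since $\mathcal I$ covers every sub-interval up to a factor of two, this yields, for any partition $\mathcal P = \{I_1,\ldots,I_M\}$ of $[1,T]$,
\begin{align*}
\sum_{t=1}^T \|\hat\theta_t - \theta_t\|_2^2
\le \sum_{m=1}^M \sum_{t \in I_m} \|\bar z_{I_m} - \theta_t\|_2^2 + \tilde O(MK),
\end{align*}
where $\bar z_{I_m}$ denotes the empirical mean of $z_t$ over $I_m$ (the offline optimum of the squared loss on $I_m$).

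Second, I would bound the within-interval TSE by a bias-variance decomposition. On an interval $I$ of length $n$ with variation $v := \sum_{t \in I, t > \min I} \|\theta_t - \theta_{t-1}\|_1$, the running-mean bias obeys $\|\bar\theta_I - \theta_t\|_2 \le v$ for every $t \in I$ by telescoping the TV on $I$, contributing at most $n v^2$ to the TSE. The stochastic term $\sum_{t \in I}\|\bar z_I - \bar\theta_I\|_2^2$, plus the cross term, concentrates around $\tilde O(K \sigma^2)$ by a Freedman-type inequality applied coordinatewise to the martingale $\sum_{t \in I}(z_t - \theta_t)$, with a union bound over intervals in $\mathcal I$ absorbing only $\log(T/\delta)$ factors. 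Combining,
\begin{align*}
\sum_{t=1}^T \|\hat\theta_t - \theta_t\|_2^2
\le \sum_{m=1}^M \bigl(n_m v_m^2 + \tilde O(K\sigma^2)\bigr) + \tilde O(MK).
\end{align*}

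Third, I would optimize the partition analytically (the algorithm need not know $V_T$, since the strongly adaptive guarantee holds for every partition drawn from $\mathcal I$ simultaneously). Given $V_T = \sum_m v_m$, the worst-case bias $\sum_m n_m v_m^2$ subject to $\sum_m n_m = T$ is balanced by $v_m \asymp V_T/M$ and $n_m \asymp T/M$, giving bias $\lesssim T V_T^2 / M^2$. Equating this with the $\tilde O(MK)$ meta-plus-variance cost yields $M \asymp T^{1/3} V_T^{2/3}$, at which point the total TSE becomes $\tilde O(T^{1/3} V_T^{2/3} + 1)$. The additive $1$ (i.e.\ $\tilde O(K)$) absorbs the degenerate case $V_T = 0$ where a single interval suffices.

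The main obstacle will be the high-probability variance control uniformly over all candidate intervals, because Aligator's aggregation weights depend on past $z_t$, so the partition that witnesses the oracle inequality is effectively data-dependent. I would handle this by paying a union bound over the $O(T \log T)$ intervals in the dyadic cover and applying Freedman's inequality (using $\|\epsilon_t\|_2 \le B$ and $\mathrm{Var}(\epsilon_t[i]) \le \sigma^2$) to control each interval's stochastic term, so that the oracle inequality holds simultaneously over all relevant partitions with the stated probability. The in-expectation statement then follows from the high-probability one by choosing $\delta = 1/T$ and absorbing the low-probability event into $\tilde O(1)$, since all per-round errors are bounded by $O(B^2)$ and hence the total by $O(B^2 T)$.
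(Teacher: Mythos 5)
The paper offers no proof of Proposition~\ref{prop:aligator}: it is imported wholesale as Theorem~5 of \citet{baby2021TVDenoise} (with the remark that FLH-FTL, Algorithm~\ref{alg:flh}, satisfies the same guarantee by the arguments of \citet{Baby2021OptimalDR}), so there is no in-paper argument to compare against. Your proposal is a from-scratch reconstruction, and it follows the standard route by which these bounds are actually established in that literature: a strongly adaptive regret guarantee for the exp-concave squared loss, an oracle inequality holding simultaneously over all partitions drawn from an interval cover, a bias--variance trade-off of $n_m v_m^2 + \tilde O(K\sigma^2)$ per cell, and the balancing $T V_T^2/M^2 \asymp M$ giving $M \asymp T^{1/3}V_T^{2/3}$; the passage from the high-probability statement to the in-expectation statement via $\delta = 1/T$ and the trivial $O(B^2 T)$ bound on the failure event is also correct. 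Two places where your sketch hides genuine work: (i) converting regret on the \emph{observed} losses $\|z_t - x\|_2^2$ into TSE against $\theta_t$ produces the cross-term martingale $\sum_t \epsilon_t^{\top}(\hat\theta_t - \theta_t)$, whose conditional variance is $\sigma^2 \sum_t \|\hat\theta_t - \theta_t\|_2^2$ --- the very quantity being bounded --- so Freedman's inequality must be paired with a self-bounding (AM--GM absorption or peeling) step rather than merely invoked; and (ii) Aligator is not literally FLH over a dyadic cover of running means but an adaptive-restart/wavelet-style scheme, so to get the proposition \emph{for Aligator specifically} you would either follow its own analysis or prove your version for the FLH-FTL instance and note the two are interchangeable for the paper's purposes. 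Neither issue is fatal; with those repairs your outline yields a correct proof of the claimed rate and its adaptivity to unknown $V_T$.
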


By following the arguments in \citet{Baby2021OptimalDR}, a similar statement can be derived also for the FLH-FTL algorithm of \citet{hazan2007adaptive} (Algorithm \ref{alg:flh}).

Next, we verify that the noise condition in Definition \ref{def:regres} is satisfied for the empirical label marginals computed at Step 5 of Algorithm \ref{alg:suptrain}.

\begin{lemma}\label{lem:as-sat}
Let $s_t$ be as in Step 5 of Algorithm \ref{alg:suptrain}. Then it holds that $s_t = q_t + \epsilon_t$ with $\epsilon_t$ being independent across $t$ and $\Var(\epsilon_t) \le 1/N$.
\end{lemma}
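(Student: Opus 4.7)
The plan is to unpack the definition of $s_t$ from Step 5 of Algorithm \ref{alg:suptrain} and verify the three required properties of the noise $\epsilon_t := s_t - q_t$: zero mean, bounded variance per coordinate, and independence across time.

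First I would observe that, per Framework \ref{fig:proto-sup}, the labels $y_{t,1:N}$ are iid samples from $Q_t$, so for each fixed $t$ and each $i \in [K]$, the indicators $\I\{y_{t,j} = i\}$ are iid Bernoulli random variables with success probability $q_t(i)$. This immediately gives $E[s_t[i]] = q_t(i)$, hence $E[\epsilon_t] = 0$, which establishes the unbiasedness claim and realizes the decomposition $s_t = q_t + \epsilon_t$.

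Next, for the variance bound, since $s_t[i]$ is the average of $N$ iid Bernoulli$(q_t(i))$ variables, we have
\begin{equation*}
\Var(\epsilon_t[i]) = \Var(s_t[i]) = \frac{q_t(i)(1-q_t(i))}{N} \le \frac{1}{4N} \le \frac{1}{N},
\end{equation*}
which matches the required bound (and in fact gives the stronger constant $1/(4N)$).

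Finally, independence across $t$ follows because the sampling at each round is performed afresh from $Q_t$ using fresh randomness, and this draw is independent of everything from previous rounds. Consequently the random vectors $\{s_t\}_{t=1}^T$, and hence the centered vectors $\{\epsilon_t\}_{t=1}^T$, are mutually independent. I do not anticipate a significant obstacle here, since the lemma is essentially a restatement of the elementary properties of empirical frequencies from iid categorical samples; the only mild subtlety is to make explicit that the $\epsilon_t$ live in $\R^K$ and that the variance bound is understood coordinatewise, matching the form of the noise assumption in Definition \ref{def:regres}.
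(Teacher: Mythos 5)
Your proof is correct and follows essentially the same route as the paper's: identify $s_t$ as the empirical label proportions, i.e., coordinatewise averages of $N$ iid indicator variables, which gives unbiasedness, a per-coordinate variance of $q_t(i)(1-q_t(i))/N \le 1/N$, and independence across rounds from the fresh sampling. Your version is in fact slightly more careful than the paper's (which states the variance bound rather loosely), so there is nothing to fix.
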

\begin{proof}
Since $s_t$ is simply the empirical label proportions, it holds that $E[s_t] = q_t$. Further $\Var(s_t) \le 1$ as the indicator function is bounded by $1/N$. This concludes the proof. 
\end{proof}

\thmsup*
\begin{proof}
In the proof we first proceed to bound the instantaneous regret at round $t$. Re-write the population loss as:

\begin{align}
    L_t(h) = \frac{1}{N(t-1)} \sum_{i=1}^{t-1}\sum_{j=1}^N  E \left[\frac{q_t(y_{ij})}{q_i(y_{ij})} \ell(h(x_{ij}),y_{ij}) \right],
\end{align}
where the expectation is taken with respect to randomness in the samples.

We define the following quantities:

\begin{align}
    L_{t}^{\text{emp}}(h)
    := \frac{1}{N(t-1)} \sum_{i=1}^{t-1}\sum_{j=1}^N  \frac{q_t(y_{ij})}{q_i(y_{ij})} \ell(h(x_{ij}),y_{ij}), \label{eq:emp1}
\end{align}

\begin{align}
    \tilde L_t(h)
    &:= \frac{1}{N(t-1)} \sum_{i=1}^{t-1}\sum_{j=1}^N  E \left[\frac{\hat q_t(y_{ij})}{\hat q_i(y_{ij})} \ell(h(x_{ij}),y_{ij}) \right], \label{eq:emp2}
\end{align}

and 

\begin{align}
    \tilde L_t^{\text{emp}}(h)
    &:= \frac{1}{N(t-1)} \sum_{i=1}^{t-1}\sum_{j=1}^N \frac{\hat q_t(y_{ij})}{\hat q_i(y_{ij})} \ell(h(x_{ij}),y_{ij}).
\end{align}

We decompose the regret at round $t$ as
\begin{align}
    L_t(h_t) - L_t(h_t^*)
    &= L_t(h_t) - \tilde L_t(h_t) + \tilde L_t(h_t) - \tilde L_t^{\text{emp}}(h_t) + L_t^{\text{emp}}(h_t^*) - L_t(h_t^*) + \tilde L_t^{\text{emp}}(h_t) - L_t^{\text{emp}} (h_t^*)\\
    &\le \underbrace{L_t(h_t) - \tilde L_t(h_t)}_{\text{T1}} + \underbrace{\tilde L_t(h_t) - \tilde L_t^{\text{emp}}(h_t)}_{\text{T2}} + \underbrace{L_t^{\text{emp}}(h_t^*) - L_t(h_t^*)}_{\text{T3}} + \underbrace{\tilde L_t^{\text{emp}}(h_t^*) - L_t^{\text{emp}} (h_t^*)}_{\text{T4}},
\end{align}
where in the last line we used Eq.\eqref{eq:erm}. Now we proceed to bound each terms as note above.

Note that for any label $m$,
\begin{align}
    \left|\frac{q_t(m)}{q_i(m)} - \frac{\hat q_t(m)}{\hat q_i(m)}\right|
    &\le \left|\frac{q_t(m)}{q_i(m)} - \frac{q_t(m)}{\hat q_i(m)}\right| + \left|\frac{q_t(m)}{\hat q_i(m)} - \frac{\hat q_t(m)}{\hat q_i(m)}\right|\\
    &\le \frac{1}{\mu^2} \left(|q_i(m) - \hat q_i(m)| + |q_t(m) - \hat q_t(m)| \right), \label{eq:fraction}
\end{align}
where in the last line, we used the assumption that the minimum label marginals (and hence of the online estimates via clipping) is bounded from below by $\mu$. So by applying triangle inequality and using the fact that the losses are bounded by $B$ in magnitude, we get

\begin{align}
    T1
    &\le \frac{B}{N(t-1) \mu^2} \sum_{i=1}^{t-1}\sum_{j=1}^N  E \left[ \|\hat q_i - q_i\|_1 + \|\hat q_t - q_t\|_1\right]\\
    &\le \frac{B \sqrt{K}}{(t-1) \mu^2} \sum_{i=1}^{t-1} E \left[ \|\hat q_i - q_i\|_2 + \|\hat q_t - q_t\|_2\right]\\
    &\le_{(a)} \frac{B \sqrt{K}}{ \mu^2} \left( E[\|\hat q_t - q_t\|_2] + \sqrt{\frac{\sum_{i=1}^{t-1} E[\|q_i - \hat q_i \|_2^2]}{t-1}} \right)\\
    &\le_{(b)} \frac{B \sqrt{K}}{\mu^2} \left( E[\|\hat q_t - q_t\|_2] + \phi \cdot \frac{V_T^{1/3}}{(t-1)^{1/3}} \right), \label{eq:T1}
\end{align}
where line (a) is a consequence of Jensen's inequality. In line (b) we used the following fact: by Lemma \ref{lem:as-sat} and Proposition \ref{prop:oracle}, the expected cumulative error of the online oracle at any step is bounded by $\phi t^{1/3} V_t^{2/3} $ for some multiplier $\phi$ which can contain poly-logarithmic factors of the horizon (see Proposition \ref{prop:aligator}).

Proceeding in a similar fashion, the term $T4$ can be bounded by Eq.\eqref{eq:T1}.

Next, we proceed to handle T3. Let $h \in \cH$ be any fixed hypothesis. Then each summand in Eq.\eqref{eq:emp1} is an independent random variable assuming values in $[0,B/\mu]$ (recall that the losses lie within $[0,B]$). Hence by Hoeffding's inequality we have that
\begin{align}
    L_t^{\text{emp}}(h) - L_t(h)
    &\le \frac{B}{\mu} \sqrt{\frac{\log(3T|\cH|/\delta)}{N(t-1)}},\\
    &\le \frac{B}{\mu} \sqrt{\frac{\log(3T|\cH|/\delta)}{(t-1)}}, \label{eq:fixed}
\end{align}
with probability at-least $1-\delta/(3 T|\cH|)$. Now taking union bound across all hypotheses in $\cH$, we obtain that:
\begin{align}
    T3
    &\le \frac{B}{\mu} \sqrt{\frac{\log(3|\cH|/\delta)}{(t-1)}}, \label{eq:across}
\end{align}
with probability at-least $1-\delta/(3T)$.

To bound T2, we notice that it is not possible to directly apply Hoeffding's inequality because the summands in Eq.\eqref{eq:emp2} are correlated through the estimates of the online algorithm. So in the following, we propose a trick to decorrelate them. For any hypothesis $h \in \cH$, we have that

\begin{align}
    \frac{\hat q_t(y_{ij})}{\hat q_i(y_{ij})} \ell (h(x_{ij},y_{ij})) - E \left[ \frac{\hat q_t(y_{ij})}{\hat q_i(y_{ij})} \ell (h(x_{ij},y_{ij})) \right]\\
    &=  \underbrace{\left(\frac{\hat q_t(y_{ij})}{\hat q_i(y_{ij})}  - \frac{q_t(y_{ij})}{q_i(y_{ij})} \right) \ell (h(x_{ij},y_{ij}))}_{U_{ij}}-\\
    &\quad \underbrace{E \left[ \left(\frac{\hat q_t(y_{ij})}{\hat q_i(y_{ij})}  - \frac{q_t(y_{ij})}{q_i(y_{ij})} \right) \ell (h(x_{ij},y_{ij}))\right]}_{V_{ij}} + \\
    &\quad \underbrace{\frac{q_t(y_{ij})}{q_i(y_ij)} \ell (h(x_{ij},y_{ij})) - E \left[ \frac{q_t(y_{ij})}{q_i(y_ij)} \ell (h(x_{ij},y_{ij})) \right]}_{W_{ij}}.
\end{align}

Now using Eq.\eqref{eq:fraction} and proceeding similar to the bouding steps of Eq.\eqref{eq:T1}, we obtain

\begin{align}
    \frac{1}{N(t-1)}\sum_{i=1}^{t-1} \sum_{j=1}^N U_{ij}
    &\le \frac{B}{N(t-1) \mu^2} \sum_{i=1}^{t-1}\sum_{j=1}^N   \|\hat q_i - q_i\|_1 + \|\hat q_t - q_t\|_1\\
    &\le \frac{B \sqrt{K}}{\mu^2(t-1)} \sum_{i=1}^{t-1}\|\hat q_i - q_i\|_2 + \|\hat q_t - q_t\|_2\\
    &\le_{(a)} \frac{B \sqrt{K}}{ \mu^2} \left( \|\hat q_t - q_t\|_2 + \sqrt{\frac{\sum_{i=1}^{t-1} \|q_i - \hat q_i \|_2^2}{t-1}} \right)\\
    &\le_{(b)} \frac{B \sqrt{K}}{\mu^2} \left( \|\hat q_t - q_t\|_2 + \phi \cdot \frac{V_T^{1/3}}{(t-1)^{1/3}} \right),
\end{align}
with probability at-least $1-\delta/3$. In line (a) we used Jensen's inequlaity and in the last line we used the fact the the online oracle attains a high probability bound on the total squared error (TSE) (see Proposition \ref{prop:aligator}).

$\frac{1}{N(t-1)}\sum_{i=1}^{t-1} \sum_{j=1}^N V_{ij}$ can be bounded using the same expression as above using similar logic.

To bound $\frac{1}{N(t-1)}\sum_{i=1}^{t-1} \sum_{j=1}^N W_{ij}$, we note that it is the sum of independent random variables. Hence using the same arguments used to obtain Eq.\eqref{eq:fixed}, we have that
\begin{align}
    \frac{1}{N(t-1)}\sum_{i=1}^{t-1} \sum_{j=1}^N W_{ij}
    &\le \frac{B}{\mu} \sqrt{\frac{\log(3T|\cH|/\delta)}{(t-1)}},
\end{align}
with probability at-least $1-\delta/(3T|\cH|)$. Hence taking a union bound across all hypothesis classes and across the high probability event of low TSE for the online algorithm yields that

\begin{align}
    T2
    &\le \frac{2B \sqrt{K}}{ \mu^2} \left( \|\hat q_t - q_t\|_2 + \phi \cdot \frac{V_T^{1/3}}{(t-1)^{1/3}} \right) + \frac{B}{\mu} \sqrt{\frac{\log(3T|\cH|/\delta)}{(t-1)}},
\end{align}
with probability at-least $1-2\delta/(3T)$.

Combining the bounds developed for T1,T2,T3 and T4 and by taking a union bound across the event that resulted in Eq.\eqref{eq:across}, we obtain the following bound on instantaneous regret.

\begin{align}
    L_t(h_t) - L_t(h_t^*)
    &\le \frac{2B \sqrt{K}}{ \mu^2} \left(  \|\hat q_t - q_t\|_2 + E[|\hat q_t - q_t\|_2] + \phi \cdot \frac{V_T^{1/3}}{(t-1)^{1/3}} + \sqrt{\frac{\log(3T|\cH|/\delta)}{(t-1)}} \right), \label{eq:pre-t}
\end{align}
with probability at-least $1-\delta/T$.

Note that via Jensen's inequality:
\begin{align}
\sum_{t=1}^T E[\|q_t - \hat q_t \|_2]
&\le  \sqrt{T\sum_{t=1}^TE[\|q_t - \hat q_t \|_2^2]}\\
&\le \phi T^{2/3} V_T^{1/3},
\end{align}
where in the last line we used Proposition \ref{prop:aligator}.

Similarly it can be shown that
\begin{align}
\sum_{t=1}^T \|q_t - \hat q_t \|_2
&\le     \phi T^{2/3} V_T^{1/3},
\end{align}
under the event that resulted in Eq.\eqref{eq:pre-t}.

Observe that
\begin{align}
    \sum_{t=1}^T \frac{V_T^{1/3}}{t^{1/3}}
    &\le 2 T^{2/3} V_T^{1/3}.
\end{align}

Finally note that

\begin{align}
    \sum_{t=1}^T \frac{1}{\sqrt{t}}
    &\le 2\sqrt{T}.
\end{align}

Hence combining the above bounds and adding Eq.\eqref{eq:pre-t} across all time steps, followed by a union bound across all rounds, we obtain that
\begin{align}
    \sum_{t=1}^T L_t(h_t) - L_t(h_t^*)
    &\le \frac{4B\sqrt{K}}{\mu^2} \left( 3\phi T^{2/3} V_T^{1/3} + \sqrt{T\log(3T|\cH|/\delta)}  \right),
\end{align}
with probability at-least $1-\delta$.

\end{proof}

Next, we prove Theorem \ref{thm:bbox}.
\thmbbox*
\begin{proof}
First we proceed to bound the number of switches. Observe that between two time points where condition in Line 5 of Algorithm \ref{alg:bbox} evaluates true, we can have at-most $\log T$ switches due to the doubling epoch schedule in Line 8.

We first bound the number of times, condition in Line 5 is satisfied. Suppose for some some time $t$, we have that $\sum_{j=b+1}^t \|\text{prev} - \tilde \theta_j\|_2^2 > 4K \sigma^2 \log(T/\delta)$. Suppose throughout the run of the algorithm, this is $i^{th}$ time the previous condition is satisfied. Let $n_i := t-b+1$ and let $C_i = \text{TV}[b \rightarrow t]$ where $\text{TV}[p \rightarrow q] = \sum_{t=p+1}^q \| \theta_t - \theta_{t-1} \|_1$. Due to the doubling epoch schedule, we have that that $\text{prev} = \frac{1}{\ell}\sum_{j=b}^\ell y_j$ and $E[\text{prev}] = \frac{1}{\ell}\sum_{j=b}^\ell \theta_j$ for some $n_i \ge \ell \ge (t-b+1)/2 = n_i/2$.

So we have
\begin{align}
    \sum_{j=b+1}^t \|\text{prev} - \tilde \theta_j\|_2^2
    &\le \sum_{j=b+1}^t 2 \|\text{prev} -  \theta_j\|_2^2 + 2\| \tilde \theta_j -  \theta_j\|_2^2\\
    &\le  \sum_{j=b+1}^t 2 \|E[\text{prev}] -  \theta_j\|_2^2 + 2 \|\text{prev} -  E[\text{prev}]\|_2^2 + 2\| \tilde \theta_j -  \theta_j\|_2^2\\
    &\le_{(a)} 2(\ell C_i^2 + 2\sigma^2 K \log(2T/\delta)) + 2\phi n_i^{1/3}C_i^{2/3}\\
    &\le 4 \max\{n_i C_i^2 ,  \phi n_i^{1/3}C_i^{2/3}\} + 4\sigma^2 K \log(2T/\delta)), \label{eq:cond}
\end{align}
with probability at-least $1-\delta/(T)$. In line (a) we used the following facts: i) Due to Hoeffding's inequality, $\|\text{prev} -  E[\text{prev}]\|_2^2 \le \sigma^2  K \log(4T/\delta))/\ell \le 2\sigma^2  K \log(2T/\delta))/n_i$ with probability at-least $1-\delta/(2T)$; ii) $\|E[\text{prev}] -  \theta_j\|_2 =   \|\frac{1}{\ell} \sum_{i=b}^\ell \theta_i -  \theta_j\|_2 \le \frac{1}{\ell} \sum_{i=b}^\ell \| \theta_i - \theta_j \|_2 ] \le C_i$; iii) $ \| \tilde \theta_j -  \theta_j\|_2^2 \le \phi n_i^{1/3}C_i^{2/3}$ with probability at-least $1-\delta/(2T)$ due to condition in Theorem \ref{thm:bbox}; iv) Union bound over the events in (i) and (iii).

Since the condition in Line 5 is satisfied at round $t$, Eq.\eqref{eq:cond} will imply that $5K \sigma^2 \log(2T/\delta) \le 4 \max\{n_i C_i^2 ,  \phi n_i^{1/3}C_i^{2/3}\} + 4\sigma^2 K \log(2T/\delta))$. Rearranging the above, we find that

\begin{align}
    C_i
    &\gtrsim K/\sqrt{n_i},
\end{align}
where we suppress the dependence on constants and $\log T$.

Let the condition in Line 5 be satisfied $M$ number of times. By union bound, we have that with probability at-least $1-\delta$

\begin{align}
    V_T
    &\ge \sum_{i=1}^M C_i\\
     &\gtrsim  \sum_{i=1}^M K/\sqrt{n_i}\\
     &\gtrsim_{(a)} KM \frac{1}{\sqrt{(1/M) \sum_{i=1}^M n_i}}\\
     &\gtrsim K M^{3/2}/\sqrt{T},
\end{align}
where in Line (a) we used Jensen's inequality. Rearranging we get that
\begin{align}
    M = \tilde O(T^{1/3}V_T^{2/3} K^{-2/3}), \label{eq:M}
\end{align}
with probability at-least $1-\delta$.

Now we proceed to bound the total squared error (TSE) incurred by Algorithm \ref{alg:bbox}. Let $\hat \theta_j$ be the output of Algorithm \ref{alg:bbox} at round $j$. Suppose at times $b-1$ and $c+1$, the condition in Line (5) is satisfied. Observe that the condition in Line 5 is not satisfied for any times in $[b,c]$. Then we can conclude that within the interval $[b,c]$ we have that $\sum_{j=b}^c \|\hat \theta_j - \tilde \theta_j \|_2^2 \le 5K \sigma^2 \log(4T/\delta) \log(T)$, since there are only at-most $\log T$ times within $[b,c]$ where condition in Line 9 is satisfied. So we have that

\begin{align}
    \sum_{j=b}^c \|\hat \theta_j -\theta_j \|_2^2
    &\le \sum_{j=b}^c \|\hat \theta_j -\tilde \theta_j \|_2^2 + \|\theta_j -\tilde \theta_j \|_2^2\\
    &\le 5K \sigma^2 \log(2T/\delta) \log(T) + \phi \cdot n_i^{1/3}C_i^{2/3},
\end{align}
with probability at-least $1-\delta/T$. Here $n_i := b-c+1$ and $C_i := \text{TV}[b \rightarrow c]$. Further we have that $\|\hat \theta_{c+1} - \theta_{c+1} \|_2^2 \le 2B^2$ due to the boundedness condition in Definition \ref{def:regres}.

Thus overall we have that $\sum_{j=b}^{ c+1} = \tilde O(K + n_i^{1/3}C_i^{2/3})$, with probability at-least $1-\delta$ for any interval [b,c+1] such  that condition in Line 5 is satisfied at times $b-1$ and $c+1$. Thus we have that

\begin{align}
    \sum_{t=1}^T \|\hat \theta_j - \theta_j \|_2^2
    &\precsim \sum_{i=1}^M K + n_i^{1/3}C_i^{2/3}\\
    &\precsim_{(a)} T^{1/3} V_T^{2/3} K^{1/3} + \sum_{i=1}^M  n_i^{1/3}C_i^{2/3}\\
    &\precsim_{(b)} T^{1/3} V_T^{2/3} K^{1/3} + \left(\sum_{i=1}^M n_i\right)^{1/3} \left( \sum_{i=1}^M C_i\right)^{2/3}\\
    &\precsim T^{1/3} V_T^{2/3} K^{1/3},
\end{align}
with probability at-least $1-\delta$. In line (a) we used Eq.\eqref{eq:M}. In line (b) we used Holder's inequality with the dual norm pair $(3,3/2)$.
This concludes the proof.

\end{proof}

We now present the tweak of Algorithm \ref{alg:suptrain} by instantiating ALG with Algorithm \ref{alg:bbox} and prove its regret guarantees. The resulting algorithm is described in Algorithm \ref{alg:suptrain-lazy}.

\begin{algorithm}
\caption{\texttt{Lazy-TrainByWeights}: handling label shift with sparse ERM calls} \label{alg:suptrain-lazy}
\textbf{Input}: Instance ALG of Algorithm \ref{alg:bbox}, A hypothesis Class $\cH$
\begin{algorithmic}[1]
\STATE At round $t \in [T]$, get estimated label marginal $\hat q_{t} \in \mathbb   R^K$ from $\text{ALG}(s_{1:t-1})$.
\IF{$\hat q_t == \hat q_{t-1}$}
    \STATE $h_t = h_{t-1}$
\ELSE 
    \STATE Update the hypothesis by calling a weighted-ERM oracle:
    \begin{align}
        h_{t} = \argmin_{h \in \cH} \sum_{i=1}^{t-1}  \sum_{j=1}^N \frac{\hat q_{t}(y_{i,j})}{\hat q_i(y_{i,j})} \ell(h(x_{i,j}),y_{i,j}) 
    \end{align}
\ENDIF

\STATE Get $N$ co-variates $x_{t,1:N}$ and make predictions according to $h_{t}$

\STATE  Get labels $y_{t,1:N}$

\STATE Compute $s_t[i] = \frac{1}{N} \sum_{j=1}^N \I\{y_{t,j} = i\}$ for all $i \in [K]$.

\STATE Update ALG with the empirical label marginals $s_t$.

\end{algorithmic}
\end{algorithm}

\begin{restatable}{theorem}{thmlazy} \label{thm:sup-lazy}
Assume the same notations as in Theorem \ref{thm:sup}. Suppose we run Algorithm \ref{alg:suptrain-lazy} (see Appendix \ref{app:sup}) with ALG instantiated using  Algorithm \ref{alg:bbox} with $\sigma^2 = 1/N$ and predictions clipped as in Theorem \ref{thm:sup}. Further let the online regression oracle given to Algorithm \ref{alg:bbox} be chosen as one of the candidates mentioned in Remark \ref{rmk:candidates}. Then with probability at-least $1-\delta$, we have that
\begin{align}
    R_{\text{dynamic}}^{\cH} = \tilde O \left( T^{2/3}V_T^{1/3} + \sqrt{T \log(|\cH|/\delta)}\right).
\end{align}
Further, the number of number of calls to ERM oracle (via Step 5) is at-most $\tilde O(T^{1/3} V_T^{2/3})$ with probability at-least $1-\delta$.
\end{restatable}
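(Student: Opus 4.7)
\textbf{Proof proposal for Theorem \ref{thm:sup-lazy}.}

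My plan is to treat the theorem as a near-mechanical consequence of combining Theorem \ref{thm:bbox} with the proof of Theorem \ref{thm:sup}. The key observation is that the proof of Theorem \ref{thm:sup} uses the online regression oracle in exactly two ways: (i) a high-probability total squared error (TSE) bound $\sum_t \|\hat q_t - q_t\|_2^2 = \tilde O(T^{1/3} V_T^{2/3})$, and (ii) a clipping guarantee ensuring $\hat q_t[k] \ge \mu$ so that the importance weights $\hat q_t(y)/\hat q_i(y)$ remain bounded by $1/\mu$. Since Theorem \ref{thm:sup-lazy} explicitly clips the predictions in the same way, and Theorem \ref{thm:bbox} furnishes the same TSE rate for Algorithm \ref{alg:bbox} whenever its inner oracle is adaptively minimax optimal, the proof template of Theorem \ref{thm:sup} can be replayed verbatim.

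More concretely, the first step is to verify the noise hypothesis of Definition \ref{def:regres}: by Lemma \ref{lem:as-sat}, the empirical label marginals $s_t$ fed into ALG at Step 8 satisfy $s_t = q_t + \epsilon_t$ with independent zero-mean noise and $\Var(\epsilon_t[k]) \le 1/N$, which matches the choice $\sigma^2 = 1/N$ in the statement. Plugging this into Theorem \ref{thm:bbox} yields, with probability at least $1-\delta$, both $\sum_t \|\hat q_t - q_t\|_2^2 = \tilde O(T^{1/3} V_T^{2/3})$ and the bound $\tilde O(T^{1/3} V_T^{2/3})$ on the number of switches of the output sequence. Since the ERM branch at Line 5 of Algorithm \ref{alg:suptrain-lazy} is entered only when $\hat q_t \ne \hat q_{t-1}$, this bounds the number of ERM calls directly, giving the second conclusion.

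For the dynamic regret, I would reuse the decomposition
\[
L_t(h_t) - L_t(h_t^*) \le T_1 + T_2 + T_3 + T_4
\]
from the proof of Theorem \ref{thm:sup}, where $T_1, T_4$ capture the mismatch between the true importance weights $q_t/q_i$ and the plug-in estimates $\hat q_t/\hat q_i$, $T_3$ is a Hoeffding concentration of the oracle-weighted empirical risk uniformly over $\cH$, and $T_2$ is the decorrelation step that introduces surrogate random variables $U_{ij}, V_{ij}, W_{ij}$. The arguments for $T_3$ and the $W_{ij}$ component of $T_2$ are oracle-agnostic and carry over unchanged. For $T_1, T_4$, and the $U_{ij}, V_{ij}$ components of $T_2$, the only ingredient used was a telescoping Cauchy--Schwarz applied to the TSE bound of the oracle; since Theorem \ref{thm:bbox} supplies the same TSE rate for Algorithm \ref{alg:bbox}, these terms are controlled identically. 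Summing instantaneous bounds and taking a union bound over time yields the claimed $\tilde O(T^{2/3} V_T^{1/3} + \sqrt{T \log(|\cH|/\delta)})$ dynamic regret.

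The one place that requires mild care, and which I expect to be the only non-mechanical step, is justifying that post-hoc clipping of Algorithm \ref{alg:bbox}'s output preserves the TSE guarantee. This follows because the constraint set $\{p : p[k] \ge \mu\}$ is convex and contains the true marginal $q_t$ (by the assumption $\min_{t,k} q_t(k) \ge \mu$), so projecting onto it is a $1$-Lipschitz operation that can only decrease $\|\hat q_t - q_t\|_2^2$; thus the TSE bound of Theorem \ref{thm:bbox} transfers to the clipped sequence. With this observation in hand, the two conclusions of the theorem follow from the TSE and switching bounds of Theorem \ref{thm:bbox} and the proof template of Theorem \ref{thm:sup}.
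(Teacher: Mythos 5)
There is a genuine gap: your proposal treats the regret bound as a verbatim replay of the proof of Theorem \ref{thm:sup} with the TSE bound of Theorem \ref{thm:bbox} swapped in, but this misses the central new difficulty of the lazy algorithm. At any round $t$ where Step 5 is not triggered, the deployed hypothesis is $h_{t'}$, the weighted-ERM solution computed at the most recent retraining time $t' < t$ using only the data from rounds $1,\dots,t'-1$. It is \emph{not} the minimizer of the round-$t$ objective $\sum_{i=1}^{t-1}\sum_j \frac{\hat q_t(y_{ij})}{\hat q_i(y_{ij})}\ell(\cdot)$, so the ERM-optimality step $\tilde L_t^{\text{emp}}(h_t) \le \tilde L_t^{\text{emp}}(h_t^*)$ in the T1--T4 decomposition fails if you keep the definitions from Theorem \ref{thm:sup}. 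The paper's proof handles this by (i) rewriting $L_t(h)$, $L_t^{\text{emp}}$, $\tilde L_t$, $\tilde L_t^{\text{emp}}$ as averages over the prefix $i=1,\dots,t'-1$ only — which is legitimate because the label-shift reweighting identity holds for any prefix — and observing that since $\hat q_t = \hat q_{t'}$ on a no-switch stretch, $h_{t'}$ \emph{is} the exact minimizer of the redefined $\tilde L_t^{\text{emp}}$; and (ii) invoking the doubling epoch schedule of Algorithm \ref{alg:bbox} to guarantee $t'-1 \ge (t-1)/4$, so the concentration terms $\sqrt{\log(\cdot)/(t'-1)}$ and $V_T^{1/3}/(t'-1)^{1/3}$ are within constant factors of their full-data counterparts. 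Without (ii) the instantaneous bounds at stale rounds could degrade arbitrarily; this is precisely where statistical efficiency could be lost and is the reason the phased-averaging structure of LPA matters beyond just counting switches.

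The parts of your proposal that are correct: the noise verification via Lemma \ref{lem:as-sat}, the identification of the ERM-call count with the switch count of Algorithm \ref{alg:bbox}, and the observation that coordinate-wise clipping is a projection onto a convex set containing $q_t$ and hence preserves the TSE bound (the paper does not spell this out, but it is a fine and worthwhile remark). However, the step you flag as "the only non-mechanical step" is a minor one; the real work is the stale-hypothesis argument above, which your write-up does not contain.
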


\begin{proof}[Sketch]
The proof of this theorem closely follows the steps fused for proving Theorem \ref{thm:sup}. So we only highlight the changes that need to be incorporated to the proof of Theorem \ref{thm:sup}.

Replace the use of Proposition \ref{prop:aligator} in the proof of Theorem \ref{thm:sup} with Theorem \ref{thm:bbox}.

For any round $t$, where Step 5 of Algorithm \ref{alg:suptrain-lazy} is triggered, we can use the same arguments as in the Proof of Theorem \ref{thm:sup-lazy} to bound the instantaneous regret by Eq.\eqref{eq:pre-t}. i.e:

\begin{align}
    L_t(h_t) - L_t(h_t^*)
    &\le \frac{2B \sqrt{K}}{ \mu^2} \left(  \|\hat q_t - q_t\|_2 + E[|\hat q_t - q_t\|_2] + \phi \cdot \frac{V_T^{1/3}}{(t-1)^{1/3}} + \sqrt{\frac{\log(3T|\cH|/\delta)}{(t-1)}} \right), \label{eq:pre-t2}
\end{align}
with probability at-least $1-\delta/T$.

For a round $t$, where Step 5 is not triggered, we proceed as follows:

Let $t'$ be the most recent time step prior to $t$ when Step 5 is executed. Notice that the population loss can be equivalently represented as
\begin{align}
    L_t(h) = \frac{1}{N(t'-1)} \sum_{i=1}^{t'-1}\sum_{j=1}^N  E \left[\frac{q_{t}(y_{ij})}{q_i(y_{ij})} \ell(h(x_{ij}),y_{ij}) \right],
\end{align}
where the expectation is taken with respect to randomness in the samples.

We define the following quantities:

\begin{align}
    L_{t}^{\text{emp}}(h)
    := \frac{1}{N(t'-1)} \sum_{i=1}^{t'-1}\sum_{j=1}^N  \frac{q_t(y_{ij})}{q_i(y_{ij})} \ell(h(x_{ij}),y_{ij}), 
\end{align}

\begin{align}
    \tilde L_t(h)
    &:= \frac{1}{N(t'-1)} \sum_{i=1}^{t'-1}\sum_{j=1}^N  E \left[\frac{\hat q_t(y_{ij})}{\hat q_i(y_{ij})} \ell(h(x_{ij}),y_{ij}) \right], 
\end{align}

and 

\begin{align}
    \tilde L_t^{\text{emp}}(h)
    &:= \frac{1}{N(t'-1)} \sum_{i=1}^{t'-1}\sum_{j=1}^N \frac{\hat q_t(y_{ij})}{\hat q_i(y_{ij})} \ell(h(x_{ij}),y_{ij}).
\end{align}

We decompose the regret at round $t$ as
\begin{align}
    L_t(h_t) - L_t(h_t^*)
    &= L_t(h_t) - \tilde L_t(h_t) + \tilde L_t(h_t) - \tilde L_t^{\text{emp}}(h_t) + L_t^{\text{emp}}(h_t^*) - L_t(h_t^*) + \tilde L_t^{\text{emp}}(h_t) - L_t^{\text{emp}} (h_t^*)\\
    &\le \underbrace{L_t(h_t) - \tilde L_t(h_t)}_{\text{T1}} + \underbrace{\tilde L_t(h_t) - \tilde L_t^{\text{emp}}(h_t)}_{\text{T2}} + \underbrace{L_t^{\text{emp}}(h_t^*) - L_t(h_t^*)}_{\text{T3}} + \underbrace{\tilde L_t^{\text{emp}}(h_t^*) - L_t^{\text{emp}} (h_t^*)}_{\text{T4}},
\end{align}
where in the last line we used Eq.\eqref{eq:erm}. Now we proceed to bound each terms as note above.

By using the same arguments as in Proof of Theorem \ref{thm:sup} and replacing the use of Proposition \ref{prop:aligator} with Theorem \ref{thm:bbox}, we can bound T1-4. This will result in an instantaneous regret bound at round $t$ (which doesn't trigger step 5) as:

\begin{align}
    L_t(h_t) - L_t(h_t^*)
    &\le \frac{2B \sqrt{K}}{ \mu^2} \left(  \|\hat q_t - q_t\|_2 + E[|\hat q_t - q_t\|_2] + \phi \cdot \frac{V_T^{1/3}}{(t'-1)^{1/3}} + \sqrt{\frac{\log(3T|\cH|/\delta)}{(t'-1)}} \right),\\
    &\le \frac{2B \sqrt{K}}{ \mu^2} \left(  \|\hat q_t - q_t\|_2 + E[|\hat q_t - q_t\|_2] + \phi \cdot 4^{1/3} \cdot \frac{V_T^{1/3}}{(t-1)^{1/3}} + \sqrt{\frac{4\log(3T|\cH|/\delta)}{(t-1)}} \right), \label{eq:pre-t3}
\end{align}
with probability at-least $1-\delta/T$. In the last line we used the fact that $t'-1 \ge (t/2) - 1 \ge (t-1)/4$ for all $t \ge 3$.

Now adding Eq.\eqref{eq:pre-t2} and \eqref{eq:pre-t3} across all rounds and proceeding similar to the proof of Theorem \ref{thm:sup} (and replacing the use of Proposition \ref{prop:aligator} with Theorem \ref{thm:bbox}) completes the argument.

\end{proof}

We next prove the matching (up to factors of $\log T$) lower bound.

\suplb*
\begin{proof}
First we fix the hypothesis class and the data generation strategy. In the problem instance we consider, there are no co-variates. The hypothesis class is defined as
\begin{align}
    \cH := \{h_{p} : h_p \text{ predicts a label } y \sim \text{Ber}(p); \: p \in [|\cH|]  \}.
\end{align}

Further we design the hypothesis class such that both $h_0,h_1 \in \cH$. Next we fix the data generation strategy:

\begin{itemize}
    \item Divide the time horizon into batches of length $\Delta$.
    \item At the beginning of a batch $i$, the adversary picks $\mathring q_i$ uniformly at random from $\{1/2-\delta, 1/2+\delta \}$.
    \item For all rounds $t$ that falls within batch $i$, the label $y_t \sim \text{Ber}(q_t)$ is sampled with $q_t := \mathring q_i$.
    \item Learner predicts a label $\hat y_t \in \{0, 1 \}$ and then the actual label $y_t$ is revealed (hence $N=1$ in the protocol of Fig.\ref{fig:proto-sup}).
    \item Learner suffers a loss given by $\ell_t(\hat y_t) = \I\{ \hat y_t \neq y_t \}$.
\end{itemize}

It is easy to see that the losses are bounded in $[0,1]$. Now let's examine the two possibilities of generating labels within a batch. Let's upper bound the variation incurred by the label marginals:

\begin{align}
    \sum_{t=2}^T |q_t - q_{t-1} |
    &= \sum_{i=2}^M |\mathring q_i - \mathring q_{i-1}|\\
    &\le 2\delta M\\
    &\le V_T,
\end{align}
where the last line is obtained by choosing $\delta = V_T/(2M) = V_T \Delta/(2T)$.

Since at the beginning of each batch, the sampling probability of true labels is independently renewed, the historical data till the beginning of a batch is immaterial in minimising the loss within the batch. So we can lower bound the regret within each batch separately and add them up. Below, we focus on lower bounding the regret in batch 1 and the computations are similar for any other batch.

Suppose that the  probability that an algorithm predict label $y_t = 1$ is $\hat q_t$, where $\hat q_t$ is a measurable function of the past data $y_{1:t-1}$. Then we have that the population loss $L_t(\hat q_t) := E[\ell_t(\hat y_t)] = (1-\hat q_t) q_t + \hat q_t (1-q_t)$. Here we abuse the notation $L(q_t) := L(h_{q_t})$. We see that the population loss $L_t(\hat q_t)$ are convex and its gradient obeys $\nabla L_t(\hat q_t) = 1-2q_t = E[1-2y_t]$ since by our construction $y_t \sim \text{Ber}(q_t)$. Thus the population losses are convex and its gradients can be estimated in an unbiased manner from the data.

We use the following Proposition due to \citet{besbes2015non}.

\begin{proposition}[due to Lemma A-1 in \citet{besbes2015non}] \label{prop:besbes}
Let $\tilde \P$ denote the joint probability of the label sequence $y_{1:\Delta}$ within a batch when they are generated using $\text{Ber}(1/2-\delta)$. So 
\begin{align}
    \tilde \P(y_1,\ldots,y_\Delta)
    &= \Pi_{i=1}^\Delta  (1/2-\delta)^{y_i} (1/2+\delta)^{1-y_i}.
\end{align}

Similarly define $\tilde \Q$ as
\begin{align}
    \tilde \Q(y_1,\ldots,y_\Delta)
    &= \Pi_{i=1}^\Delta  (1/2 + \delta)^{y_i} (1/2-\delta)^{1-y_i}.
\end{align}
According to the above distribution the data are independently sampled from Bernoulli trials with success probability $1/2 + \delta$. Let $\hat q_t$ be the decision of the online algorithm qt round $t$ so that the algorithm predicts label 1 with probability $\hat q_t$.

Let $\P$ denote the joint probability distribution across the decisions $\hat q_{1:\Delta}$ of any online algorithm under the sampling model $\tilde \P$. Similarly define $\Q$. Note that any online algorithm can make decisions at round $t$ only based on the past observed data $y_{1:t-1}$. Further after making the decision $\hat q_t$ at round $t$, an unbiased estimate of the population loss can be constructed due to the fact that $\nabla L_t(\hat q_t) = E[1-2y_t]$. Under the availability of unbiased gradient estimates of the losses, it holds that

\begin{align}
    \text{KL}(\P || \Q)
    &\le 4 \Delta \delta^2.
\end{align}

By choosing $\delta = V_T/(2M) = V_T \Delta/(2T)$ and $\Delta = (T/V_T)^{2/3}$, we get that $ \text{KL}(\P || \Q) \le 1$.

\end{proposition}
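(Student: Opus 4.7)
\textbf{Proof plan for Proposition \ref{prop:besbes}.} My plan is to prove the bound in three steps: (i) use data processing to reduce the KL divergence between distributions over decision sequences to the KL divergence between distributions over the observation sequences that drive those decisions; (ii) tensorize across time using the chain rule, exploiting the fact that under both $\tilde\P$ and $\tilde\Q$ the $y_t$'s are i.i.d.\ Bernoulli; (iii) bound the per-step Bernoulli KL by $4\delta^2$.

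For step (i), the key observation is that the algorithm is a fixed (randomized) mapping from the observation sequence $y_{1:\Delta}$ (or equivalently, the bounded unbiased gradient sequence $g_{1:\Delta}$ with $g_t = 1-2y_t$) to the decision sequence $\hat q_{1:\Delta}$; crucially, this mapping does not depend on whether the underlying sampling law is $\tilde\P$ or $\tilde\Q$. I would formalize this via the chain rule for joint distributions over $(y_1,\hat q_1,\ldots,y_\Delta,\hat q_\Delta)$: conditional on the past, the law of $\hat q_t$ is identical under $\P$ and $\Q$ (same algorithm and same internal randomness), so those conditional KL terms vanish, and the only non-trivial contributions come from the conditional laws of $y_t$ given the past. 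Equivalently, invoking the data processing inequality directly gives $\text{KL}(\P \,\|\, \Q) \le \text{KL}(\tilde\P \,\|\, \tilde\Q)$.

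For step (ii), since $y_t$'s are mutually independent under both $\tilde\P$ and $\tilde\Q$, the conditional law of $y_t$ given the past equals the marginal law, and tensorization yields
\[
\text{KL}(\tilde\P \,\|\, \tilde\Q) \;=\; \Delta \cdot \text{KL}\bigl(\mathrm{Ber}(1/2-\delta)\,\|\,\mathrm{Ber}(1/2+\delta)\bigr).
\]
For step (iii), a direct computation gives the per-step KL as $2\delta\log\bigl((1+2\delta)/(1-2\delta)\bigr)$. Using $\log\bigl((1+x)/(1-x)\bigr) \le 2x/(1-x^2)$ on $x=2\delta$, and then the assumption $\delta \in (0,1/4)$ (consistent with the earlier parts of the proof), this is bounded by $8\delta^2/(1-4\delta^2)$, which can be absorbed into $4\delta^2$ up to the absolute constant stated (and tightly so as $\delta\to 0$). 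Chaining (i)-(iii) gives $\text{KL}(\P \,\|\, \Q) \le 4\Delta \delta^2$.

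The main obstacle I anticipate is the bookkeeping in step (i): one must be careful that the filtration used for the chain rule includes both the observations and the algorithm's internal coin flips, and that the conditional KL terms for the decisions really do vanish. Once that reduction is made rigorous, steps (ii) and (iii) are standard, though getting the sharp constant $4$ in step (iii) (as opposed to a slightly larger constant from a crude Taylor bound or a $\chi^2$-divergence sandwich) requires choosing the right inequality for the logarithm. If needed, the constant can be absorbed by restricting $\delta$ to a smaller range, which is harmless since the downstream application only uses $\delta$ of order $V_T \Delta/T$ with $\Delta = (T/V_T)^{2/3}$, making $\delta$ small.
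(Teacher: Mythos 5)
The paper does not actually prove this proposition: it is imported wholesale from Lemma A-1 of \citet{besbes2015non}, so there is no in-paper proof to compare against line by line. That said, your three-step plan (data processing from decisions to observations, tensorization over the i.i.d.\ Bernoulli labels, per-step KL bound) is exactly the standard route, and it mirrors what the paper itself does when it proves the analogous KL bound inside the proof of Theorem \ref{thm:unsuplb} (Eq.\eqref{eq:kl}), where the per-step Bernoulli KL is bounded by $64\delta^2/3$ and the resulting constant $\beta = 16/3$ is simply carried through the rest of the argument. Steps (i) and (ii) of your plan are correct and rigorous as sketched: the chain-rule/DPI reduction is valid because $\hat q_t$ is a fixed randomized function of $y_{1:t-1}$ and algorithm-internal randomness whose conditional law does not depend on the sampling model, and independence of the $y_t$'s gives the tensorization.

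The genuine gap is in step (iii), and it contradicts your own intermediate computation. You correctly compute the per-step KL as $2\delta\log\bigl((1+2\delta)/(1-2\delta)\bigr)$ and bound it by $8\delta^2/(1-4\delta^2)$, but this quantity is at least $8\delta^2$ for every $\delta>0$, and indeed $2\delta\log\bigl((1+2\delta)/(1-2\delta)\bigr) = 8\delta^2 + O(\delta^4)$ as $\delta\to 0$. So the per-step KL is asymptotically \emph{twice} the claimed $4\delta^2$, not tight against it, and no choice of logarithm inequality or restriction of $\delta$ can recover the constant $4$ via $\mathrm{KL}(\P\,\|\,\Q)\le \mathrm{KL}(\tilde\P\,\|\,\tilde\Q)$; the constant $4$ in Besbes et al.'s Lemma A-1 arises in their specific noisy-gradient feedback model and does not transfer verbatim to Bernoulli label feedback. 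What your argument actually establishes is $\mathrm{KL}(\P\,\|\,\Q)\le 8\Delta\delta^2/(1-4\delta^2) \le 32\Delta\delta^2/3$ for $\delta\in(0,1/4)$. This is harmless downstream: with $\delta = V_T\Delta/(2T)$ and $\Delta=(T/V_T)^{2/3}$ one has $\Delta\delta^2 = 1/4$, so the KL is still bounded by an absolute constant ($8/3$ rather than $1$), and the regret lower bound goes through with $e^{-1}$ replaced by $e^{-8/3}$. You should either state the proposition with the corrected constant or explicitly track the modified constant into the final bound, rather than asserting that $8\delta^2/(1-4\delta^2)$ ``can be absorbed into $4\delta^2$.''
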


Since $V_T \le T/8$, the above choice implies that $\delta \in (0,1/4)$.

For notational clarity, define $L^{\P}(q) = (1-q)(1/2-\delta) + q (1/2+\delta)$ and $L^{\Q}( q) = (1- q)(1/2+\delta) + q (1/2-\delta)$. These corresponds to the population losses according to the sampling models $\P$ and $\Q$ respectively. Observe that $\min_{q} L^{\P}( q) = \min_{ q} L^{\Q}( q) = 1/2-\delta$. The minimum of $L^{\P}$ and $L^{\Q}$ are achieved at 0 and 1 respectively. Note that both $h_0, h_1 \in \cH$. So there is always a hypothesis in $\cH$ that corresponds the minimiser of the loss.

Further whenever $\hat q \ge 1/2$ we have that
\begin{align}
    L^{\P}( q)
    &= (1/2-\delta) + q (2 \delta)\\
    &\ge 1/2.
\end{align}

Similarly whenever $q < 1/2$ we have $ L^{\Q}( q) \ge 1/2$. So we define the selector function as $\phi_t := \I\{ \hat q_t \ge 1/2 \}$. Let $q_t^* \in \{0,1 \}$ be the minimiser of the loss at round $t$. Now we can lower bound the instantaneous regret similar as

\begin{align}
    E[L_t(\hat q_t) - L_t(q_t^*)]
    &= \frac{1}{2} ( E_{\P} [L_t^{\P}(\hat q_t) - L_t^{\P}(0)]  + \frac{1}{2} ( E_{\Q} [L_t^{\Q}(\hat q_t)- L_t^{\Q}(1)] \\
    &\ge \frac{1}{2} ( E_{\P} [L_t^{\P}(\hat q_t) - L_t^{\P}(0) | \phi_t = 1] \P(\phi_t = 1)  + \frac{1}{2} ( E_{\Q} [L_t^{\Q}(\hat q_t)- L_t^{\Q}(1) | \phi_t = 0] \Q(\phi_t = 0)\\
    &\ge \delta/2 \max \{\P(\phi_t = 1), \Q(\phi_t = 0) \}\\
    &\ge (\delta/8) e^{-1},
\end{align}
where the last line is obtained by Propositions \ref{prop:besbes} and \ref{prop:sasha}.

Thus we get a total lower bound on the instantanoeus regret as
\begin{align}
   \sum_{t=1}^T E[L_t(\hat q_t) - L_t(q_t^*)]
   &\ge T\delta /(8 e)\\
   &= \Delta V_T/(16 e)\\
   &= T^{2/3} V_T^{1/3} / (16 e),
\end{align}
where the last line is obtained by using our choices of $\delta V_T \Delta/(2T)$ and $\Delta = (T/V_T)^{2/3}$.

The second term of of $\Omega(\sqrt{T \log |\cH|})$ can be obtained from the existing results on statistical learning theory without distribution shifts. (see for example Theorem 3.23 in \citet{fml}).

\end{proof}

\section{More details on experiments} \label{app:exp}
In Algorithm \ref{alg:flh}, we describe the FLH-FTL algorithm from \citet{hazan2007adaptive} when specialised to squared error losses. When specialized to squared error losses, this algorithm runs FLH with online averages as the base experts.

\begin{algorithm} 
\caption{An instance of FLH-FTL from \citet{hazan2007adaptive} with squared error losses}\label{alg:flh}
\begin{algorithmic}[1]
\STATE Parameter $\alpha$ is defined to be a learning rate

\texttt{\blue{// initializations and definitions}}
\STATE For FLH-FTL instantiations within UOLS algorithms (as in Algorithm \ref{alg:unsup}), we set $\alpha \leftarrow \sigma^2_{min}(C)/(8K)$, where $C$ is the confusion matrix as in Assumption \ref{as:pop}. For instantiations within SOLS algorithms (as in Algorithm \ref{alg:suptrain}) we set $\alpha \leftarrow 1/(8K)$
\STATE For each round $t \in [T]$, $v_t := (v_t^{(1)},\ldots,v_t^{(t)})$ is a probability vector in $\mathbb{R}^t$. Initialize $v_1^{(1)} \leftarrow 1$
\STATE For each $j \in [T]$, define a base learner $E^j$. For each $t > j$, the base expert outputs $E^j(t) := \frac{1}{t-j} \sum_{i=j}^{t-1} z_j$, where $z_j$ to be specified as below. Further $E^j(j) := 0 \in \R^K$

\texttt{\blue{// execution steps}}
\STATE In round $t \in [T]$, set $\forall j \le t$, $x_t^j \leftarrow E^j(t)$ (the prediction of the $j^{th}$ base learner at time $t$). Play $x_t =  \sum_{j=1}^t v_t^{(j)}x_t^{(j)}$.
\STATE Receive feedback $z_t$, set $\hat v_{t+1}^{(t+1)} \leftarrow 0$ and perform update for $1 \le i \le t$:
                \begin{align}
                    \hat v_{t+1}^{(i)}
                    &\leftarrow \frac{v_t^{(i)}e^{-\alpha \|x_t^{(i)} - z_t\|_2^2}}{\sum_{j=1}^t v_t^{(j)}e^{-\alpha \|x_t^{(j)} - z_t\|_2^2}}
                \end{align}
\STATE  Addition step - Set $v_{t+1}^{(t+1)}$ to $1/(t+1)$ and for $i \neq t+1$:
                \begin{align}
                    v_{t+1}^{(i)}
                    &\leftarrow (1-(t+1)^{-1}) \hat v_{t+1}^{(i)}
                \end{align}
\end{algorithmic}
\end{algorithm}

\textbf{Rationale behind the learning rate setting at Line 2 of Algorithm \ref{alg:flh}} 
The loss that is incurred by Algorithm \ref{alg:flh} and any of its base learners at round $t$ is defined to be the squared error loss $\ell_t(x) = \|z_t - x \|_2^2$. Whenever $\| z_t\|_2^2 \le B^2$ and $\| x\|_2^2 \le B^2$, the losses $\ell_t(x)$ are $1/(8B^2)$ exp-concave (see for eg. Chapter 3 of \citep{BianchiBook2006}). The notion of exp-concavity is crucial for FLH-FTL algorithm since the learning rate is set to be equal to the exp-concavity factor of the loss functions (see Theorem 3.1 in \citet{hazan2007adaptive}).

For the UOLS problem, from Algorithm \ref{alg:unsup}, we have $\| z_t\|_2 = \|C^{-1} f_0(x_t)\|_2 \le \sqrt{K}/\sigma_{min}(C)$. Since the decisions of the algorithm is a convex combination of the previously seen $z_t$, we conclude that the losses $\ell_t(x)$ are $\sigma^2_{min}(C)/(8K)$ exp-concave.

For the SOLS problem, let $z_t =s_t$ where $s_t$ is as defined in Algorithm \ref{alg:suptrain}. We have that $\| z_t \|_2 \le \sqrt{K}$. Hence arguing in a similar fashion as above, we conclude that the losses $\ell_t(x)$ are $1/(8K)$ exp-concave for the SOLS problem.

This is the motivation behind Line 2 in Algorithm \ref{alg:flh}, where the learning rates are set according to the problem setting.

    \paragraph{Dataset and model details.}
    \begin{itemize}
        \item Synthetic: 
            For the synthetic data, we generated $72$k samples as described in \citet{bai2022label}. 
            There are three classes each with $24$k samples generated from three Gaussian distributions in $\mathbb R^{12}$. 
            Each Gaussian distribution is defined by a randomly generated unit-norm centre $v$ and covariance matrix $0.215 \cdot I$. 
            $60$k samples are used as source data, and $12$k samples are used as target data to be sampled from during online learning. 
            We used logistic regression to train a linear model.  
            It is trained for a single epoch with learning rate $0.1$, momentum $0.9$, batch size $200$, and $l_2$ regularization $1 \times 10^{-4}$. 
        \item MNIST \cite{lecun1998mnist}: 
             An image dataset of $10$ types of handwritten digits. 
             $60$k samples are used as source data and $10$k as target data.
             We used an MLP for prediction with three consecutive hidden layers of sizes $100$, $100$, and $20$. 
             It is trained for a single epoch with a learning rate $0.1$, momentum $0.9$, batch size $200$, and $l_2$ regularization $1\times 10^{-4}$. 
        \item CIFAR-10 \cite{krizhevsky2009learning}: 
            A dataset of colored images of $10$ items: airplane, automobile, bird, cat, deer, dog, frog, horse, ship, and truck. 
            $50$k samples are used as source data and $10$k as target data.  
            We train a ResNet18 model (\citet{he2016deep}) from scratch. 
            It is finetuned for $70$ epochs with learning rate $0.1$, momentum $0.9$, batch size $200$, and $l_2$ regularization $1\times 10^{-4}$. 
            The learning rate decayed by $90\%$ at the $25$th and $40$th epochs.
        \item Fashion \cite{xiao2017fashion}: 
            An image dataset of $10$ types of fashion items: T-shirt, trouser, pullover, dress, coat, sandals, shirt, sneaker, bag, and ankle boots. 
            $60$k samples are used as source data and $10$k as target data.  
            We trained an MLP for prediction. It is trained for $50$ epochs with learning rate $0.1$, momentum $0.9$, batch size $200$, and $l_2$ regularization $1\times 10^{-4}$.

        \item EuroSAT \cite{helber2019eurosat}: 
            An image dataset of $10$ types of land uses: industrial buildings, residential buildings, annual crop,  permanent crop, river, sea \& lake, herbaceous vegetation, highway, pasture, and forest. 
            $60$k samples are used as source data and 
            $10$k as target data.  
            We cropped the images to the size $(3, 64, 64)$.  
            We train a ResNet18 model  for $50$ epochs with learning rate $0.1$, momentum $0.9$, batch size $200$, and $l_2$ regularization $1\times 10^{-4}$. 
        
        \item Arxiv~\citep{clement2019use}:
            A natural language dataset of $23$ classes over different publication subjects.  
            $198$k samples are used as source data and 
            $22$k as target data.  
            We trained a DistilBERT model (\citet{sanh2019distilbert}) for $50$ epochs with learning rate $2\times 10^{-5}$, batch size $64$, and $l_2$ regularization $1\times 10^{-2}$.
        
        \item SHL \cite{gjoreski2018university, wang2019enabling}:
            A tabular locomotion dataset of $6$ classes of human motion: still, walking, run, bike, car and bus. 
            $30$k samples are used as source data and 
            $70$k as target data.  
            We trained an MLP for prediction for $50$ epochs with learning rate $0.1$, momentum $0.9$, batch size $200$, and $l_2$ regularization $1 \times 10^{-4}$.
    \end{itemize}
For all the datasets above, the initial offline data are further split by $80:20$ into training and holdout data, where the former is used for offline training of the base model and the latter for computing the confusion matrix and retraining (e.g. updating the linear head parameters with UOGD or updating the softmax prediction with our FLT-FTL) during online learning. 
To examine how well the algorithms adapt when holdout data is limited, we use $10\%$ of the holdout data (i.e., 2\% of the initial offline data) in the main paper unless stated otherwise. In \appref{app:all_holdout}, we ablate with full hold-out data. 

\paragraph{Types of Simulated Shifts.}

We simulate four kinds of label shifts to capture different non-stationary environments.
\addition{These shifts are similar to the ones used in \citet{bai2022label}.} 
For each shift, the label marginals are a mixture of two different constant marginals ${\mu}_{1}, {\mu}_{2} \in \Delta_K$ with a time-varying coefficient $\alpha_t$: ${\mu}_{y_t} = (1 - \alpha_t) {\mu}_{1} + \alpha_t  {\mu}_{2}$, where ${\mu}_{y_t}$ denotes the label distribution at round $t$ and $\alpha_t$ controls the shift non-stationarity and patterns. In particular, we have: 
    \emph{{Sinusoidal Shift} ({Sin})}: $\alpha_t = \sin \frac{i\pi}{L}$, where $i=t \mod L$ and $L$ is a given periodic length. In the experiments, we set $L = \sqrt{T}$. 
    \emph{{Bernoulli Shift} ({Ber})}: at every iteration, we keep the $\alpha_t = \alpha_{t-1}$ with probability $p\in[0,1]$ and otherwise set $\alpha_t = 1 - \alpha_{t-1}$. In the experiments, the parameter is set as $p = 1/\sqrt{T}$, which implies $V_t = \Theta(\sqrt{T})$. 
    \addition{
        \emph{{Square Shift} ({Squ})}: at every $L$ rounds we set $\alpha_t = 1 - \alpha_t$. 
        \emph{{Monotone Shift} ({Mon})}: we set $\alpha_t = t / T$.  
    }
    Square, sinusoidal, and Bernoulli shifts simulate fast-changing environments with periodic patterns.

\paragraph{Methods for UOLS Adaptation.}

\begin{itemize}
    \item Base: the base classifier without any online modifications. 
    \item \addition{OFC}: the optimal fixed classifier predicts with an optimal fixed re-weighting in hindsight as in \citet{wu2021label}.   
    \addition{
    \item Oracle: re-weight the base model's predictions with the true label marginal of the unlabeled data at each time step. 
    }
    \item FTH: proposed by \citet{wu2021label}, follow-the-history classifier re-weights the base model's predictions with a simple online average of all marginal estimates seen thus far. 
    \item FTFWH: proposed by \citet{wu2021label}, follow-the-fixed-window-history classifier is a version of FTH that tracks only the $k$ most recent marginal estimates. 
    We choose $k=100$ throughout the experiments in this work. 
    \item ROGD: proposed by \citet{wu2021label}, ROGD uses online gradient descent to update its re-weighting vector based on current marginal estimate. 
    \item UOGD: proposed by \citet{bai2022label}, retrains the last linear layer of the model based on current marginal estimate.
    \item ATLAS: proposed by \citet{bai2022label} is a meta-learning algorithm that has UOGD as its base learners. 
\end{itemize}
\addition{
    The learning rates of ROGD, UOGD, and ATLAS are set according to suggestions in the original works. 
    The learning rate of FLH-FTL is set to $1/K$. This corresponds to a faster rate than the theoretically optimal learning rate given in Line 2 of Algorithm \ref{alg:flh}. It has been observed in prior works such as \citet{baby2021TVDenoise} that the theoretical learning rate is often too conservative and faster rates lead to better performance.
}

\begin{table}[t!]
  \centering
  \small
  \tabcolsep=0.1cm
  \begin{tabular}{@{}*{8}{c}@{}}
  \toprule
    &  & \thead {CT \\ (base)}  & \thead{CT-RS (ours)\\w FLH} & \thead{CT-RS (ours) \\ w FLT-FTL} & \thead{w-ERM \\(oracle)} \\
  \midrule
  \multirow{2}{*}{ \parbox{1.0cm}{\centering MNIST} } 
  & Cl Err
    & \eentry{5.0}{0.5} & \eentry{4.71}{0.2}
    & \beentry{4.53}{0.1} & \eentry{3.2}{0.4} \\
  & MSE 
    & \nentry 
    & \eentry{0.12}{0.01} & \beentry{0.08}{0.01} & \nentry \\
  \bottomrule
  \end{tabular}
  \vspace{5pt}
  \caption{\emph{Results on SOLS setup.} We report results with 
  MNIST SOLS setup runs for $T=200$ steps. We observe that continual training with 
  re-sampling improves over the base model which  continually trains on the online data
  and achieves competitive performance with respect to weighted ERM oracle. 
  }
  \label{table:SOLS_mnist}
\end{table}

\begin{table}[t!]
  \centering
  \small
  \tabcolsep=0.1cm
  \begin{tabular}{@{}*{8}{c}@{}}
  \toprule
    &  \thead{CT-RS (ours)} & \thead{w-ERM \\(oracle)} \\
  \midrule
  \parbox{1.0cm}{\centering CIFAR} 
    & \beentry{145}{3.7}
    & \eentry{1882}{14} \\
    \parbox{1.0cm}{\centering MNIST} 
    & \beentry{20}{2.7}
    & \eentry{107}{3.6} \\
  \bottomrule
  \end{tabular}
  \vspace{5pt}
  \caption{\emph{Comparison on computation time (in minutes).} We report results with 
  MNIST and CIFAR SOLS setup runs for $T=200$ steps. We observe that continual training with 
  re-sampling is approximately 5--15$\times$ more efficient than weighted ERM oracle. 
  }
  \label{table:SOLS_time}
\end{table}

    \subsection{Supervised Online Label Shift Experiment Details} \label{sec:SOLS_exp_details}
    For each dataset, we first fix the number of time steps and then simulate the label marginal shift. 
    To train the learner with all the methods, we store all the online data observed giving the storage complexity of $\calO(T)$. We observe $N=50$ examples at every iteration and we split the observed labeled examples into 80:20 split for training and validation. The validation examples are used to decide the number of gradient steps at every time step, in particular, we take gradient steps till the validation error continues to decrease.

     \paragraph{Dataset and model details.}
    \begin{itemize}
        
        \item MNIST \cite{lecun1998mnist}: 
             An image dataset of $10$ types of handwritten digits. 
             At each step, we sample 50 samples with the label marginal that step without replacement and reveal the examples to the learner.  
             We used an MLP for prediction with three consecutive hidden layers of sizes $100$, $100$, and $20$. 
             It is trained for a single epoch with a learning rate $0.1$, momentum $0.9$, batch size $200$, and $l_2$ regularization $1\times 10^{-4}$. 
        \item CIFAR-10 \cite{krizhevsky2009learning}: 
            A dataset of colored images of $10$ items: airplane, automobile, bird, cat, deer, dog, frog, horse, ship, and truck. 
           At each step, we sample 50 samples with the label marginal that step without replacement and reveal the examples to the learner. 
            It is finetuned for $70$ epochs with learning rate $0.1$, momentum $0.9$, batch size $200$, and $l_2$ regularization $1\times 10^{-4}$. 
    \end{itemize}

We simulate Bernoulli label shifts to capture different non-stationary environments.
    
    \paragraph{Connection of CT-RS to weighted ERM} 
    Before making the connection, we first re-visit the CT-RS algorithm. 
    \textbf{Step 1}: Maintain a pool of all the labeled 
    data received till that time step,
    and at every iteration, we randomly sample a batch
    with uniform label marginal to update the model. 
    \textbf{Step 2}: Re-weight the 
     softmax outputs of the updated model with estimated label marginal. Below we show that it is equivalent to wERM: 
    \begin{align*}
        f_{t} &= \argmin_{f \in 
        \cH} \sum_{i=1}^{t-1}  \sum_{j=1}^N \frac{\hat q_{t}(y_{i,j})}{\hat q_i(y_{i,j})} \ell(f(x_{i,j}),y_{i,j})\, \\
        &= \argmin_{f\in 
        \cH} \sum_{k=1}^K  \hat q_{t}(k)  \sum_{i=1}^{t-1} \sum_{j=1}^N \frac{\mathbbm{1}
\left({y_{i,j}=k}\right)}{\hat q_i(k)} \ell(f(x_{i,j}),k)\, \\
        &= \argmin_{f\in 
        \cH} \sum_{k=1}^K\frac{\hat q_{t}(k)}{(1/K)}   \sum_{i=1}^{t-1} \sum_{j=1}^N \frac{\mathbbm{1}
        \left({y_{i,j}=k}\right)}{K\cdot \hat q_i(k)} \ell(f(x_{i,j}),k)\, \\
        &=  \argmin_{f\in 
        \cH} \sum_{k=1}^K {\hat \mu_{t,k}}   \underbrace{\sum_{i=1}^{t-1} \sum_{j=1}^N \frac{\mathbbm{1}
        \left({y_{i,j}=k}\right)}{\hat \mu_{i,k}} \ell(f(x_{i,j}),k)}_{L_{t-1, k}}\, \,\\
    \end{align*}
    where $\hat \mu_{t, k} = \nicefrac{\hat{q}_t(k)}{(1/K)}$ is the importance ratio at time $i$ with respect to uniform label marginal. Similarly, we define $\hat \mu_{i,k} = \nicefrac{\hat {q}_i(k)}{(1/K)}$. Here, $L_{t-1, k}$ is the aggregate loss at $t$-th time step for $k$-th class such that at each  step the sampling probability of that class is uniform. By continually training a classifier with CT-RS, Step 1 approximates the classifier $\widetilde f_t$ trained to minimize the average of $L_{t-1, k}$ over all classes with uniform proportion for each class. 
    To update the classifier  $\widetilde f_t$ according to label proportions at time $t$, we update the softmax output of $\widetilde f_t$ according to $\hat \mu_{t}$. 

    The primary benefit of CT-RS over wERM is  to avoid re-training from scratch at every iteration. Instead, we can leverage the model trained in the previous iteration to warm-start training in the next iteration. 
    \newpage
    
    \section{Additional Unsupervised Online Label Shift Experiments}
    
    \subsection{Additional results with Monotone and Square Shifts and Low Amount of Holdout Data}
    
    \begin{table}[H]
  \centering
  \small
  \tabcolsep=0.12cm
  \renewcommand{\arraystretch}{1.5}
  \resizebox{\linewidth}{!}{%
  \begin{tabular}{@{}*{13}{c}@{}}
  \toprule
    \textbf{Methods} & \multicolumn{2}{c}{\textbf{Synthetic}} & \multicolumn{2}{c}{\textbf{MNIST}} & \multicolumn{2}{c}{\textbf{CIFAR}} 
    & \multicolumn{2}{c}{\textbf{EuroSAT}} & \multicolumn{2}{c}{\textbf{Fashion}} & \multicolumn{2}{c}{\textbf{ArXiv}}  \\
  \cmidrule(lr){2-3} \cmidrule(lr){4-5} 
  \cmidrule(lr){6-7} \cmidrule(lr){8-9}
  \cmidrule(lr){10-11} \cmidrule(lr){12-13} 
    & Mon & Squ & Mon & Squ & Mon & Squ 
    & Mon & Squ & Mon & Squ & Mon & Squ \\ 
  \midrule
  Base & 
    \eentry{8.7}{0.1} & \eentry{8.5}{0.2} & \eentry{4.7}{0.0} & \eentry{4.4}{0.2} & \eentry{17}{0} & \eentry{17}{0} & \eentry{13}{0} & \eentry{13}{0} & \eentry{15}{0} & \eentry{15}{0} & \eentry{22}{0} & \eentry{21}{0} \\
  OFC & 
    \eentry{6.9}{0.1} & \eentry{6.6}{0.3} & \eentry{4.1}{0.1} & \eentry{3.9}{0.2} & \eentry{14}{0} & \eentry{14}{0} & \eentry{11}{1} & \eentry{11}{0} & \eentry{9.0}{0.0} & \eentry{9.6}{0.5} & \eentry{18}{1} & \eentry{18}{0} \\
  Oracle &
    \eentry{5.2}{0.2} & \eentry{3.6}{0.2} & \eentry{2.5}{0.1} & \eentry{2.2}{0.1} & \eentry{7.7}{0.1} & \eentry{6.8}{0.2} & \eentry{5.3}{0.2} & \eentry{4.4}{0.0} & \eentry{5.1}{0.1} & \eentry{4.1}{0.1} & \eentry{6.9}{0.3} & \eentry{6.6}{0.2} \\
  FTH & 
    \eentry{7.1}{0.3} & \eentry{6.8}{0.4} & \eentry{4.1}{0.1} & \beentry{4.0}{0.0} & \eentry{13}{1} & \eentry{13}{0} & \eentry{11}{0} & \eentry{11}{0} & \eentry{9.3}{0.6} & \eentry{8.9}{0.4} & \eentry{19}{1} & \eentry{18}{0} \\
  FTFWH & 
    \beentry{6.3}{0.2} & \eentry{7.0}{0.0} & \beentry{4.0}{0.0} & \eentry{4.1}{0.1} & \beentry{12}{0} & \eentry{13}{0} & \beentry{9.9}{0.2} & \eentry{11}{0} & \beentry{8.4}{0.3} & \eentry{9.1}{0.5} & \beentry{18}{1} & \eentry{18}{0} \\
  ROGD & 
    \eentry{7.8}{0.3} & \eentry{7.8}{0.3} & \eentry{4.5}{0.2} & \eentry{5.4}{1.7} & \eentry{14}{1} & \eentry{15}{0} & \eentry{11}{0} & \eentry{14}{1} & \eentry{8.9}{0.4} & \eentry{10}{1} & \eentry{19}{1} & \eentry{21}{1} \\
  UOGD & 
    \eentry{8.1}{0.3} & \eentry{8.1}{0.5} & \eentry{4.9}{0.1} & \eentry{4.8}{0.4} & \eentry{15}{1} & \eentry{15}{0} & \eentry{10}{1} & \eentry{11}{1} & \eentry{11}{2} & \eentry{12}{2} & \eentry{20}{1} & \eentry{19}{0} \\
  ATLAS &
    \eentry{8.0}{0.0} & \eentry{8.2}{0.5} & \eentry{4.6}{0.2} & \eentry{4.5}{0.3} & \eentry{15}{1} & \eentry{15}{0} & \eentry{10}{1} & \eentry{11}{1} & \eentry{12}{2} & \eentry{12}{1} & \eentry{20}{1} & \eentry{19}{1} \\
\parbox{3.0cm}{\centering FLH-FTL (ours)} &
    \beentry{6.3}{0.3} & \beentry{5.6}{0.4} & \beentry{4.0}{0.0} & \beentry{4.0}{0.0} & \beentry{12}{0} & \beentry{12}{0} & \eentry{10}{0} & \beentry{10}{0} & \eentry{8.6}{0.4} & \beentry{8.4}{0.4} & \beentry{18}{1} & \beentry{17}{0} \\
  
  \bottomrule 
  \end{tabular} 
  }
  
  \vspace{5pt}

  \tabcolsep=0.08cm
  \renewcommand{\arraystretch}{1.5}
  \resizebox{\linewidth}{!}{%
  \begin{tabular}{@{}*{13}{c}@{}}
  \toprule
    & \multicolumn{2}{c}{\textbf{Synthetic}} & \multicolumn{2}{c}{\textbf{MNIST}} & \multicolumn{2}{c}{\textbf{CIFAR}} 
    & \multicolumn{2}{c}{\textbf{EuroSAT}} & \multicolumn{2}{c}{\textbf{Fashion}} & \multicolumn{2}{c}{\textbf{ArXiv}}  \\
  \cmidrule(lr){2-3} \cmidrule(lr){4-5} 
  \cmidrule(lr){6-7} \cmidrule(lr){8-9}
  \cmidrule(lr){10-11} \cmidrule(lr){12-13} 
    & Mon & Squ & Mon & Squ & Mon & Squ 
    & Mon & Squ & Mon & Squ & Mon & Squ \\ 
  \midrule
  
    FTH & 
    \eentry{0.11}{0.00} & \eentry{0.21}{0.01} & \eentry{0.14}{0.00} & \eentry{0.27}{0.00} & \eentry{0.15}{0.01} & \eentry{0.28}{0.00} & \eentry{0.14}{0.01} & \eentry{0.28}{0.00} & \eentry{0.16}{0.02} & \eentry{0.28}{0.01} & \eentry{0.18}{0.00} & \eentry{0.30}{0.00} \\
    FTFWH & 
    \beentry{0.05}{0.00} & \eentry{0.23}{0.01} & \beentry{0.07}{0.00} & \eentry{0.30}{0.00} & \beentry{0.07}{0.00} & \eentry{0.30}{0.00} & \beentry{0.07}{0.00} & \eentry{0.30}{0.00} & \beentry{0.08}{0.01} & \eentry{0.31}{0.01} & \beentry{0.09}{0.00} & \eentry{0.32}{0.00} \\
  ROGD & 
    \eentry{0.18}{0.01} & \eentry{0.29}{0.01} & \eentry{0.28}{0.05} & \eentry{0.41}{0.04} & \eentry{0.22}{0.04} & \eentry{0.37}{0.03} & \eentry{0.27}{0.03} & \eentry{0.41}{0.02} & \eentry{0.21}{0.01} & \eentry{0.37}{0.01} & \eentry{0.21}{0.01} & \eentry{0.36}{0.01} \\
  \parbox{2.5cm}{\centering FLH-FTL (ours)}  & 
    \beentry{0.05}{0.00} & \beentry{0.11}{0.00} & \beentry{0.07}{0.00} & \beentry{0.15}{0.00} & \eentry{0.09}{0.01} & \beentry{0.17}{0.00} & \eentry{0.08}{0.01} & \beentry{0.17}{0.01} & \eentry{0.09}{0.01} & \beentry{0.18}{0.02} & \eentry{0.11}{0.00} & \beentry{0.24}{0.00} \\ %

  \bottomrule 
 \end{tabular} }
 \vspace{5pt}
\caption{ 
    \emph{Results for UOLS problems with monotone and square shifts using low amount of holdout data}. \textbf{Top:} Classification Error. \textbf{Bottom:} Mean-squared error in estimating label marginal. 
}
\label{tab:app-other-shifts}
\end{table}

    \subsection{Additional results with All of Holdout Data}
    \label{app:all_holdout}
    \begin{table}[H]
  \centering
  \small
  \tabcolsep=0.12cm
  \renewcommand{\arraystretch}{1.5}
  \resizebox{\linewidth}{!}{%
  \begin{tabular}{@{}*{13}{c}@{}}
  \toprule
    \textbf{Methods} & \multicolumn{2}{c}{\textbf{Synthetic}} & \multicolumn{2}{c}{\textbf{MNIST}} & \multicolumn{2}{c}{\textbf{CIFAR}} 
    & \multicolumn{2}{c}{\textbf{EuroSAT}} & \multicolumn{2}{c}{\textbf{Fashion}} & \multicolumn{2}{c}{\textbf{ArXiv}}  \\
  \cmidrule(lr){2-3} \cmidrule(lr){4-5} 
  \cmidrule(lr){6-7} \cmidrule(lr){8-9}
  \cmidrule(lr){10-11} \cmidrule(lr){12-13} 
    & Ber & Sin & Ber & Sin & Ber & Sin 
    & Ber & Sin & Ber & Sin & Ber & Sin \\ 
  \midrule
  Base & 
    \eentry{8.6}{0.3} & \eentry{8.2}{0.3} & \eentry{3.8}{0.3} & \eentry{3.9}{0.0} & \eentry{17}{0} & \eentry{16}{0} & \eentry{13}{0} & \eentry{13}{0} & \eentry{15}{0} & \eentry{15}{0} & \eentry{23}{0} & \eentry{19}{0} \\
  OFC & 
    \eentry{6.7}{0.2} & \eentry{5.5}{0.2} & \eentry{3.4}{0.4} & \eentry{3.4}{0.2} & \eentry{13}{0} & \eentry{11}{0} & \eentry{11}{1} & \eentry{9.8}{1.3} & \eentry{8.3}{0.5} & \eentry{6.8}{0.2} & \eentry{21}{1} & \eentry{14}{0} \\
  Oracle & 
    \eentry{3.7}{0.1} & \eentry{3.7}{0.1} & \eentry{1.7}{0.2} & \eentry{1.5}{0.1} & \eentry{6.3}{0.1} & \eentry{5.9}{0.1} & \eentry{4.0}{0.0} & \eentry{4.1}{0.1} & \eentry{3.5}{0.2} & \eentry{3.6}{0.1} & \eentry{7.8}{0.2} & \eentry{5.1}{0.1} \\
  FTH & 
    \eentry{6.8}{0.2} & \eentry{5.5}{0.3} & \beentry{3.2}{0.2} & \beentry{3.2}{0.2} & \eentry{12}{0} & \beentry{10}{0} & \eentry{11}{0} & \eentry{9.5}{0.1} & \eentry{8.0}{0.0} & \beentry{6.8}{0.2} & \eentry{20}{0} & \beentry{14}{0} \\
  FTFWH & 
    \eentry{6.6}{0.3} & \eentry{5.5}{0.2} & \eentry{3.3}{0.2} & \beentry{3.2}{0.1} & \eentry{12}{0} & \beentry{10}{0} & \eentry{10}{0} & \eentry{9.4}{0.2} & \eentry{7.9}{0.0} & \eentry{6.9}{0.2} & \eentry{20}{0} & \beentry{14}{0} \\
  ROGD & 
    \eentry{7.8}{0.3} & \eentry{7.2}{0.3} & \eentry{4.7}{0.3} & \eentry{3.3}{0.2} & \eentry{15}{0} & \eentry{11}{0} & \eentry{11}{0} & \eentry{10}{0} & \eentry{14}{5} & \eentry{8.2}{0.2} & \eentry{23}{0} & \eentry{16}{1} \\
  UOGD & 
    \eentry{7.6}{0.4} & \eentry{7.0}{0.0} & \beentry{3.2}{0.2} & \beentry{3.2}{0.2} & \beentry{11}{0} & \beentry{10}{0} & \beentry{7.7}{0.0} & \beentry{7.3}{0.2} & \eentry{9.6}{0.2} & \eentry{8.6}{0.1} & \beentry{19}{0} & \beentry{14}{0} \\
  ATLAS &
    \eentry{7.5}{0.3} & \eentry{6.8}{0.3} & \beentry{3.2}{0.3} & \beentry{3.2}{0.2} & \eentry{12}{0} & \eentry{11}{0} & \eentry{9.1}{0.0} & \eentry{8.3}{0.2} & \eentry{12}{0} & \eentry{11}{0} & \eentry{21}{0} & \eentry{16}{0} \\
\parbox{3.0cm}{\centering FLH-FTL (ours)} &
    \beentry{5.4}{0.3} & \beentry{5.3}{0.2} & \beentry{3.2}{0.2} & \eentry{3.3}{0.2} & \beentry{11}{0} & \beentry{10}{0} & \eentry{9.4}{0.2} & \eentry{9.3}{0.1} & \beentry{7.5}{0.1} & \eentry{7.0}{0.0} & \beentry{19}{0} & \beentry{14}{0} \\
  
  \bottomrule 
  \end{tabular} 
  }
  
  \vspace{5pt}

  \tabcolsep=0.08cm
  \renewcommand{\arraystretch}{1.5}
  \resizebox{\linewidth}{!}{%
  \begin{tabular}{@{}*{13}{c}@{}}
  \toprule
    & \multicolumn{2}{c}{\textbf{Synthetic}} & \multicolumn{2}{c}{\textbf{MNIST}} & \multicolumn{2}{c}{\textbf{CIFAR}} 
    & \multicolumn{2}{c}{\textbf{EuroSAT}} & \multicolumn{2}{c}{\textbf{Fashion}} & \multicolumn{2}{c}{\textbf{ArXiv}}  \\
  \cmidrule(lr){2-3} \cmidrule(lr){4-5} 
  \cmidrule(lr){6-7} \cmidrule(lr){8-9}
  \cmidrule(lr){10-11} \cmidrule(lr){12-13} 
    & Ber & Sin & Ber & Sin & Ber & Sin 
    & Ber & Sin & Ber & Sin & Ber & Sin \\ 
  \midrule
  
    FTH & 
    \eentry{0.20}{0.00} & \eentry{0.10}{0.00} & \eentry{0.25}{0.00} & \eentry{0.14}{0.00} & \eentry{0.28}{0.00} & \eentry{0.14}{0.00} & \eentry{0.27}{0.00} & \eentry{0.14}{0.00} & \eentry{0.27}{0.00} & \eentry{0.14}{0.00} & \eentry{0.29}{0.00} & \eentry{0.15}{0.00} \\
    FTFWH & 
    \eentry{0.19}{0.00} & \eentry{0.09}{0.00} & \eentry{0.24}{0.00} & \eentry{0.13}{0.00} & \eentry{0.24}{0.00} & \beentry{0.13}{0.00} & \eentry{0.26}{0.00} & \beentry{0.13}{0.00} & \eentry{0.24}{0.00} & \beentry{0.13}{0.00} & \eentry{0.28}{0.00} & \beentry{0.15}{0.00} \\
  ROGD & 
    \eentry{0.29}{0.00} & \eentry{0.23}{0.00} & \eentry{0.43}{0.00} & \eentry{0.33}{0.00} & \eentry{0.31}{0.00} & \eentry{0.21}{0.00} & \eentry{0.41}{0.00} & \eentry{0.34}{0.00} & \eentry{0.45}{0.08} & \eentry{0.31}{0.00} & \eentry{0.34}{0.00} & \eentry{0.28}{0.00} \\
  \parbox{2.5cm}{\centering FLH-FTL (ours)} & 
  \beentry{0.09}{0.00} & \beentry{0.08}{0.00} & \beentry{0.13}{0.00} & \beentry{0.12}{0.00} & \beentry{0.15}{0.00} & \beentry{0.13}{0.00} & \beentry{0.15}{0.00} & \beentry{0.13}{0.00} & \beentry{0.15}{0.00} & \beentry{0.13}{0.00} & \beentry{0.22}{0.00} & \beentry{0.15}{0.00} \\

  \bottomrule 
 \end{tabular} }
\caption{ 
    \addition{
    \emph{Results for UOLS problems using all hold-out data}. \textbf{Top:} Classification Error. \textbf{Bottom:} Mean-squared error in estimating label marginal. Compared to the result in the main paper (\tabref{table:UOLS}), we observe that the performances of ROGD, UOGD, and ATLAS depend more on availability of holdout data that FLH-FTL. 
    Notably, UOGD becomes competitive in the majority of the datasets when abundant holdout data are available. 
    }
}
\label{tab:app-all-source-test}
\end{table}

    \newpage
    \subsection{Ablation over Number of Online Samples}
    Here we examine how different algorithms perform as the number of online samples varies.
    We introduce an additional baseline \textbf{BBSE}, which simply uses the label marginal estimate provided by black box shift estimator to reweight the predictions of classifiers. 
    \begin{figure}[h!]
        \centering
        \includegraphics[width=12cm]{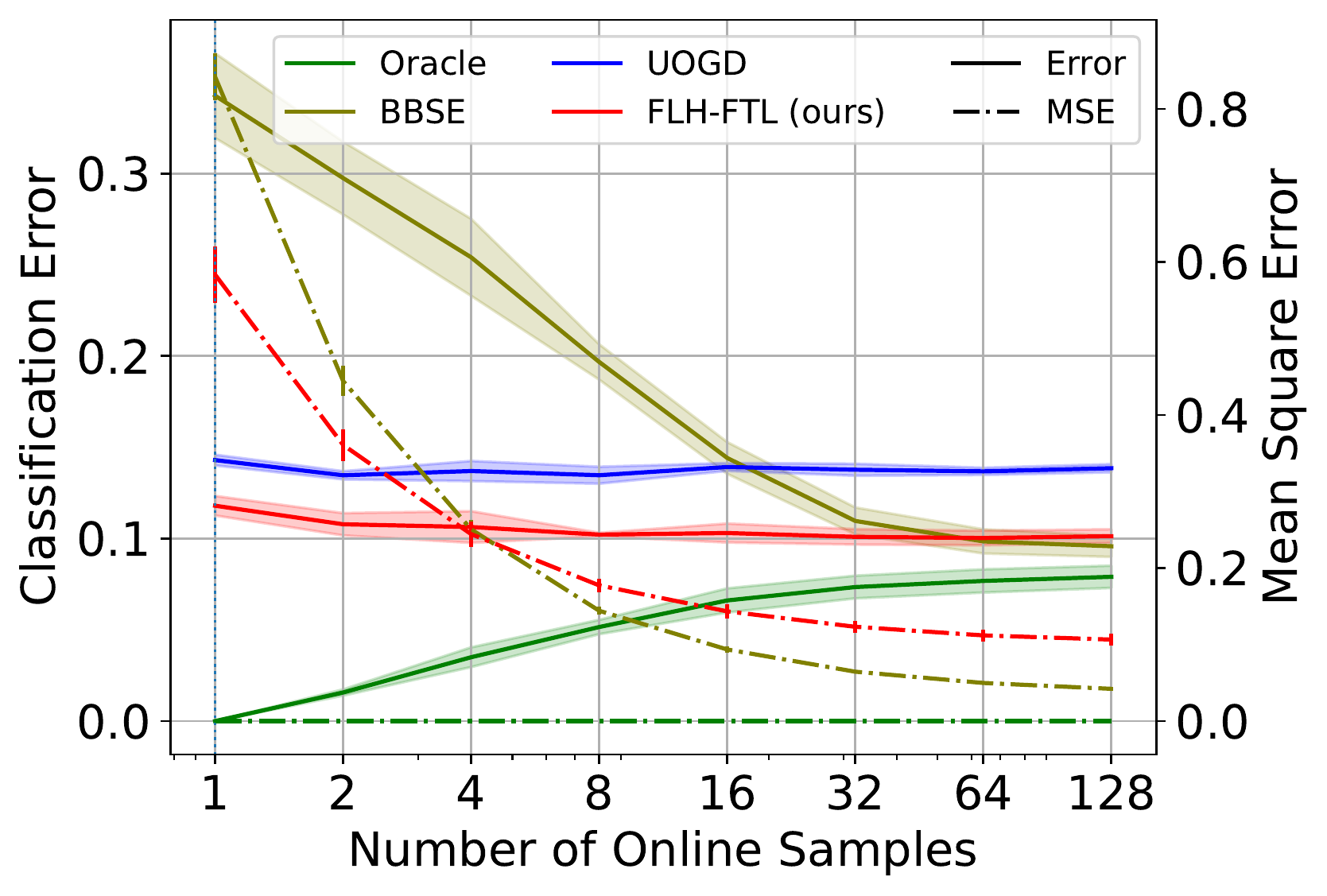}
        \caption{
            \emph{Performances of online learning algorithms with different number of online samples.} 
            CIFAR-10 results with bernouli shift and limited holdout data. 
            Solid line is the classification error (Error) and the dotted line is the marginal estimation mean squared error (MSE).
        }
        \label{fig:app_num-samples-ablation}
    \end{figure}
    Figure~\ref{fig:app_num-samples-ablation} shows an interesting trend that as number of online samples increases, the simple baseline BBSE becomes more competitive and eventually outperforms UOGD, 
    whereas our algorithm remains competitive.

    \subsection{Ablation over Types of Marginal Estimates}
    All the algorithms examined in this work use black box shift estimate (BBSE) \cite{lipton2018blackbox} to obtain an unbiased estimate of the target label marginal. 
    However, two alternative methods exist:
    Maximum Likelihood Label Shift (MLLS) \cite{saerens2002adjusting} and 
    Regularized Learning under Label Shift (RLLS) \cite{azizzadenesheli2019regularized}. 
    Table~\ref{tab:app_marg-est} presents additional results using these two estimates. 
    The results shows using the alternative estimates do not substantially change the performances of the algorithms considered in this work. 
    \begin{table}[H]
  \centering
  \small
  \tabcolsep=0.12cm
  \renewcommand{\arraystretch}{1.5}
  \resizebox{\linewidth}{!}{%
  \begin{tabular}{@{}*{13}{c}@{}}
  \toprule
    \textbf{Datasets}
    & 
    & \multicolumn{3}{c}{Synthetic}
    & \multicolumn{3}{c}{CIFAR}
    & \multicolumn{3}{c}{Fashion}
    \\
    \cmidrule(lr){3-5} 
    \cmidrule(lr){6-8} 
    \cmidrule(lr){9-11} 
    \textbf{MARG EST.} 
    & 
    & BBSE & MLLS & RLLS
    & BBSE & MLLS & RLLS
    & BBSE & MLLS & RLLS
    \\
  \midrule
  \multirow{7}{*}{Bernouli} & 
    Base & 
    \eentry{8.6}{0.2} & \eentry{8.6}{0.2} & \eentry{8.6}{0.2} & \eentry{16}{0} & \eentry{16}{0} & \eentry{16}{0} & \eentry{15}{0} & \eentry{15}{0} & \eentry{15}{0} \\
    & OFC & 
    \eentry{6.4}{0.6} & \eentry{6.4}{0.6} & \eentry{6.4}{0.6} & \eentry{12}{1} & \eentry{12}{1} & \eentry{12}{1} & \eentry{7.9}{0.1} & \eentry{7.9}{0.1} & \eentry{7.9}{0.1} \\
    & FTH & 
    \eentry{6.5}{0.6} & \eentry{6.5}{0.7} & \eentry{6.5}{0.7} & \eentry{11}{0} & \eentry{11}{1} & \eentry{11}{0} & \eentry{8.5}{0.3} & \eentry{8.0}{0.0} & \eentry{8.6}{0.2} \\
    & FTFWH & 
    \eentry{6.6}{0.5} & \eentry{6.7}{0.5} & \eentry{6.6}{0.5} & \eentry{11}{1} & \eentry{11}{1} & \eentry{11}{1} & \eentry{8.2}{0.6} & \eentry{7.9}{0.2} & \eentry{8.3}{0.6} \\
    & ROGD & 
    \eentry{7.9}{0.3} & \eentry{7.9}{0.3} & \eentry{7.9}{0.2} & \eentry{16}{3} & \eentry{16}{3} & \eentry{15}{2} & \eentry{10}{1} & \eentry{10}{1} & \eentry{9.6}{1.2} \\
    & UOGD & 
    \eentry{8.1}{0.6} & \eentry{8.0}{1.0} & \eentry{8.0}{1.0} & \eentry{14}{0} & \eentry{13}{0} & \eentry{14}{0} & \eentry{11}{2} & \eentry{10}{1} & \eentry{11}{1} \\
    & ATLAS & 
    \eentry{8.0}{1.0} & \eentry{7.9}{0.5} & \eentry{8.0}{1.0} & \eentry{13}{0} & \eentry{13}{0} & \eentry{13}{0} & \eentry{12}{2} & \eentry{11}{1} & \eentry{12}{1} \\
    & \parbox{3.0cm}{\centering FLH-FTL (ours)} &
    \beentry{5.4}{0.7} & \beentry{5.4}{0.8} & \beentry{5.4}{0.7} & \beentry{10}{0} & \beentry{10}{1} & \beentry{10}{0} & \beentry{7.7}{0.4} & \beentry{7.3}{0.3} & \beentry{7.6}{0.3} \\
    \midrule
    \multirow{7}{*}{Sinusoidal} &
    Base & 
    \eentry{8.2}{0.3} & \eentry{8.2}{0.3} & \eentry{8.2}{0.3} & \eentry{16}{0} & \eentry{16}{0} & \eentry{16}{0} & \eentry{15}{0} & \eentry{15}{0} & \eentry{15}{0} \\
    & OFC & 
    \eentry{5.5}{0.2} & \eentry{5.5}{0.2} & \eentry{5.5}{0.2} & \eentry{11}{0} & \eentry{11}{0} & \eentry{11}{0} & \eentry{7.1}{0.1} & \eentry{7.1}{0.1} & \eentry{7.1}{0.1} \\
    & FTH & 
    \eentry{5.7}{0.3} & \eentry{5.7}{0.2} & \eentry{5.7}{0.2} & \beentry{11}{0} & \eentry{11}{0} & \beentry{11}{0} & \beentry{6.9}{0.4} & \beentry{6.6}{0.2} & \beentry{6.8}{0.4} \\
    & FTFWH & 
    \eentry{5.7}{0.3} & \eentry{5.6}{0.2} & \eentry{5.7}{0.3} & \beentry{11}{0} & \eentry{11}{0} & \beentry{11}{0} & \beentry{6.9}{0.4} & \beentry{6.6}{0.3} & \eentry{6.9}{0.4} \\
    & ROGD & 
    \eentry{7.2}{0.6} & \eentry{7.2}{0.6} & \eentry{7.2}{0.6} & \eentry{13}{0} & \eentry{13}{0} & \eentry{13}{0} & \eentry{8.2}{0.7} & \eentry{8.9}{0.6} & \eentry{8.2}{0.3} \\
    & UOGD & 
    \eentry{7.5}{0.6} & \eentry{7.4}{0.5} & \eentry{7.4}{0.5} & \eentry{14}{1} & \eentry{13}{1} & \eentry{14}{1} & \eentry{11}{2} & \eentry{9.4}{0.9} & \eentry{11}{2} \\
    & ATLAS & 
    \eentry{7.5}{0.6} & \eentry{7.4}{0.6} & \eentry{7.4}{0.6} & \eentry{13}{1} & \eentry{13}{1} & \eentry{13}{1} & \eentry{12}{2} & \eentry{11}{1} & \eentry{12}{2} \\
    & \parbox{3.0cm}{\centering FLH-FTL (ours)} & 
    \beentry{5.4}{0.4} & \beentry{5.4}{0.3} & \beentry{5.4}{0.4} & \beentry{11}{0} & \beentry{10}{0} & \beentry{11}{0} & \eentry{7.0}{0.0} & \beentry{6.6}{0.2} & \eentry{6.9}{0.4} \\

  \bottomrule 
  \end{tabular} 
}

\caption{ 
    \emph{Performances of online learning algorithms with different types of marginal estimates with low amount of holdout data.} 
}
\label{tab:app_marg-est}
\end{table}

    \subsection{Additional results and details on the SHL dataset}
    \label{app:shl_appendix}
\begin{figure}[H]
  \centering 
    \captionsetup{margin={0.7cm,-0.0cm}}
    \subfloat[]{
        \includegraphics[width=0.33\linewidth]{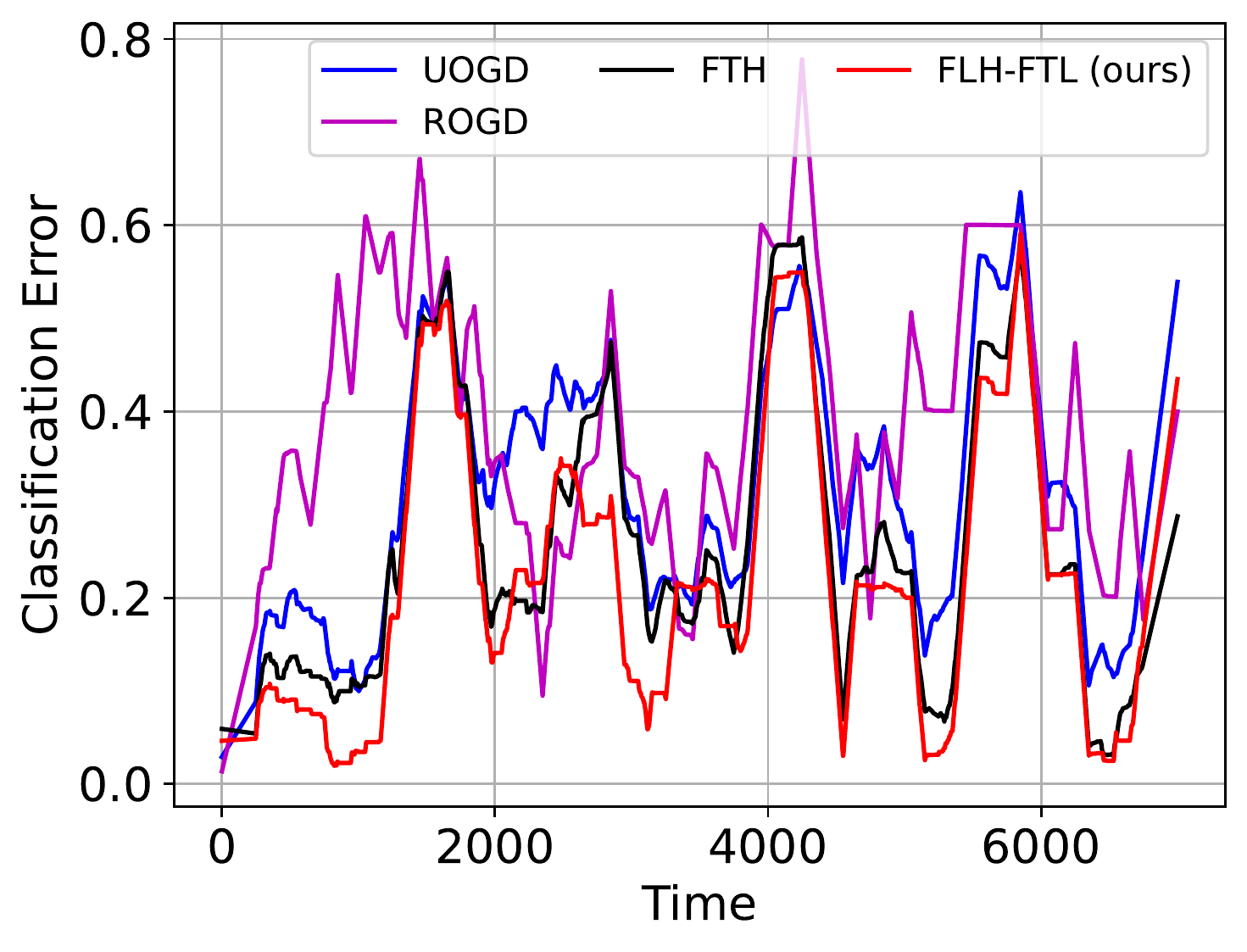}
    }
    \subfloat[]{
        \centering 
        \includegraphics[width=0.33\linewidth]{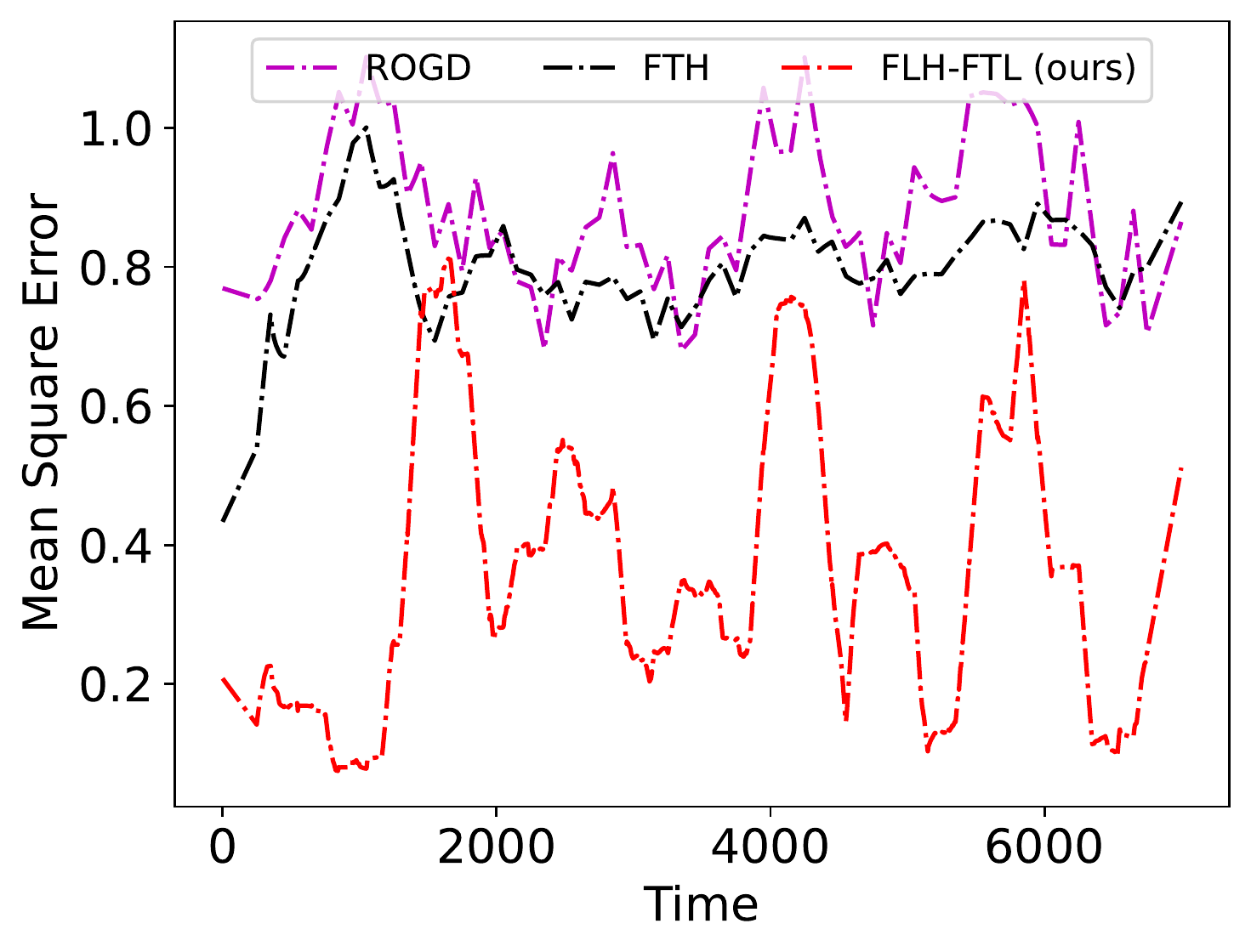}
    }
    \subfloat[]{
        \includegraphics[width=0.33\linewidth]{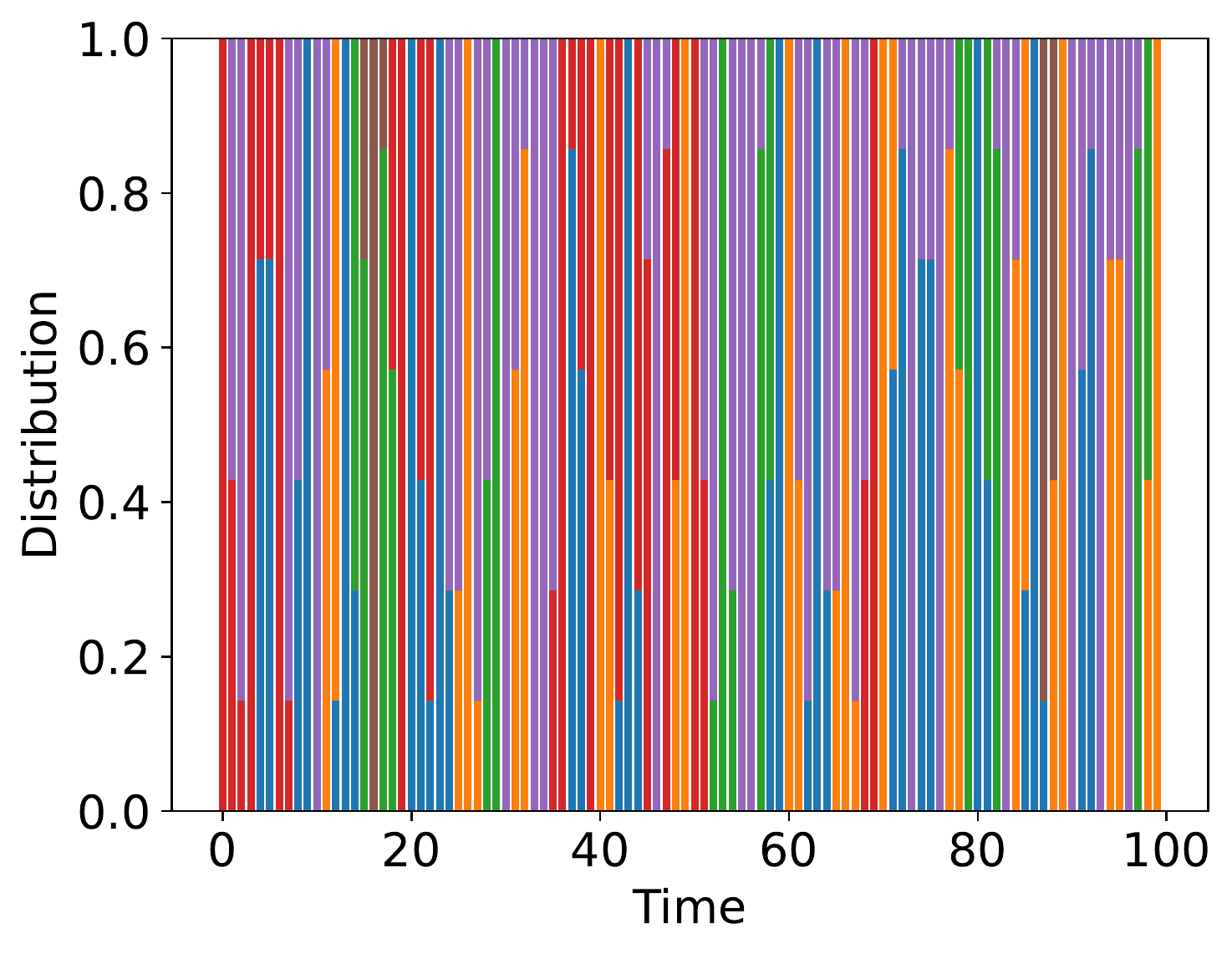}
    }
  \captionsetup{margin={0.0cm,-0.0cm}}
  \caption{\emph{Additional results and details on the SHL datasets with real shift.} \textbf{(a) and (b):} The accuracies and mean square errors in label marginal estimation on SHL dataset over 7,000 time steps with limited amount of holdout data. 
  \textbf{(c):} Label marginals of the six classes of SHL dataset. Each time step here shows the marginals over $700$ samples. 
  }
  \label{fig:shl_results}
\end{figure}
    \subsection{Reweighting Versus Retraining Linear Layer}
    Here we compare the efficacies of re-weighting (RW-FLH-FTL) and retraining (RT-FLH-FTL) given the marginal estimate provided by FLH-FTL; the latter retrains the last linear layer on the loss of the holdout data re-weighted by the marginal estimate. 
    Note that RW-FLH-FTL corresponds to FLH-FTL in the main text. 
    We retrain RT-FLH-FTL for $50$ epochs at each time step. 
    To compare against the best possible retrained classifiers, we used all the holdout data for retraining. 
    Table~\ref{tab:app_retrain} shows that retraining is often worse and at best similar to re-weighting in performance, despite greater computational cost and need for holdout data. 

    \begin{table}[H]
  \centering
  \small
  \tabcolsep=0.12cm
  \renewcommand{\arraystretch}{1.5}
  \resizebox{0.7\linewidth}{!}{%
  \begin{tabular}{@{}*{13}{c}@{}}
  \toprule
    \textbf{Datasets}
    & \multicolumn{4}{c}{Synthetic}
    & \multicolumn{4}{c}{CIFAR}
    \\
    \cmidrule(lr){2-5} 
    \cmidrule(lr){6-9} 
    \textbf{Shift} 
    & Mon & Sin & Ber & Squ 
    & Mon & Sin & Ber & Squ 
    \\
  \midrule
    Base & 
    \eentry{8.7}{0.1} & \eentry{8.2}{0.3} & \eentry{8.6}{0.3} & \eentry{8.5}{0.2} & \eentry{17}{0} & \eentry{16}{0} & \eentry{17}{0} & \eentry{17}{0} \\
    OFC & 
    \eentry{6.8}{0.1} & \eentry{5.5}{0.2} & \eentry{6.7}{0.2} & \eentry{6.7}{0.3} & \eentry{14}{0} & \eentry{11}{0} & \eentry{13}{0} & \eentry{13}{1} \\
    FTFWH & 
    \eentry{7.0}{0.0} & \eentry{5.5}{0.3} & \eentry{6.8}{0.2} & \eentry{6.8}{0.3} & \eentry{13}{0} & \eentry{10}{0} & \eentry{12}{0} & \eentry{13}{0} \\
    UOGD & 
    \eentry{7.4}{0.1} & \eentry{7.0}{0.0} & \eentry{7.6}{0.4} & \eentry{7.6}{0.2} & \beentry{12}{0} & \beentry{10}{0} & \beentry{11}{0} & \eentry{13}{0} \\
    \parbox{3.5cm}{\centering RW-FLH-FTL (ours)} &
    \beentry{6.3}{0.2} & \beentry{5.3}{0.2} & \beentry{5.4}{0.3} & \beentry{5.5}{0.2} & \beentry{12}{0} & \beentry{10}{0} & \beentry{11}{0} & \beentry{12}{0} \\
    \parbox{3.5cm}{\centering RT-FLH-FTL (ours)} &
    \eentry{6.7}{0.1} & \eentry{6.2}{0.2} & \eentry{6.0}{0.0} & \eentry{6.3}{0.4} & \beentry{12}{0} & \beentry{10}{0} & \beentry{11}{0} & \beentry{12}{0} \\

    \midrule
    
  \bottomrule 
  \end{tabular} 
}

\caption{ 
    \emph{Comparison of performances of re-weighting and retraining strategies with high amount of holdout data.}
}
\label{tab:app_retrain}
\end{table}

\end{document}